\title{Direction Matters: On the Implicit Bias of Stochastic Gradient Descent with Moderate Learning Rate}
\author{
Jingfeng Wu \\
Computer Science Department\\
Johns Hopkins University\\
Baltimore, MD 21218, USA \\
\texttt{uuujf@jhu.edu} \\
\And
Difan Zou \\
Computer Science Department \\
University of California, Los Angeles \\
Los Angeles, CA 90095, USA \\
\texttt{knowzou@cs.ucla.edu} \\
\AND
Vladimir Braverman \\
Computer Science Department\\
Johns Hopkins University\\
Baltimore, MD 21218, USA \\
\texttt{vova@cs.jhu.edu} \\
\And
Quanquan Gu \\
Computer Science Department \\
University of California, Los Angeles \\
Los Angeles, CA 90095, USA \\
\texttt{qgu@cs.ucla.edu} \\
}
\begin{document}

\maketitle

\begin{abstract}
Understanding the algorithmic bias of \emph{stochastic gradient descent} (SGD) is one of the key challenges in modern machine learning and deep learning theory. Most of the existing works, however, focus on \emph{very small or even infinitesimal} learning rate regime, and fail to cover practical scenarios where the learning rate is \emph{moderate and annealing}. In this paper, we make an initial attempt to characterize the particular regularization effect of SGD in the moderate learning rate regime by studying its behavior for optimizing an overparameterized linear regression problem. In this case, SGD and GD are known to converge to the unique minimum-norm solution; however, with the moderate and annealing learning rate, we show that they exhibit different \emph{directional bias}: SGD converges along the large eigenvalue directions of the data matrix, while GD goes after the small eigenvalue directions. Furthermore, we show that such directional bias does matter when early stopping is adopted, where the SGD output is nearly optimal but the GD output is suboptimal. Finally, our theory explains several folk arts in practice used for SGD hyperparameter tuning, such as (1) linearly scaling the initial learning rate with batch size; and (2) overrunning SGD with high learning rate even when the loss stops decreasing.
\end{abstract}

%% INTRODUCTION

\section{Introduction}\label{sec:introduction}
\emph{Stochastic gradient descent} (SGD) and its variants play a key role in training deep learning models.
From the optimization perspective, SGD is favorable in many aspects, e.g., scalability for large-scale models~\citep{he2016deep}, parallelizability with big training data~\citep{goyal2017accurate}, and rich theory for its convergence~\citep{ghadimi2013stochastic,gower2019sgd}.    
From the learning perspective, more surprisingly, overparameterized deep nets trained by SGD usually generalize well, even in the absence of explicit regularizers~\citep{zhang2016understanding,keskar2016large}. This suggests that SGD favors certain ``good'' solutions among the numerous global optima of the overparameterized model.
Such phenomenon is attributed to the \emph{implicit bias} of SGD.
It remains one of the key theoretical challenges to characterize the algorithmic bias of SGD, especially with moderate and annealing learning rate as typically used in practice~\citep{he2016deep,keskar2016large}.

In the \emph{small learning rate regime}, the regularization effect of SGD is relatively well understood,
thanks to the recent advances on the implicit bias of \emph{gradient descent} (GD) \citep{gunasekar2017implicit,gunasekar2018characterizing,gunasekar2018implicit,soudry2018implicit,ma2018implicit,li2018algorithmic,ji2018gradient,ji2019implicit,ji2020gradient,nacson2019convergence,ali2019continuous,arora2019implicit,moroshko2020implicit,chizat2020implicit}.
According to classical stochastic approximation theory~\citep{kushner2003stochastic}, with a sufficiently small learning rate, the randomness in SGD is negligible (which scales with learning rate), and as a consequence SGD will behave highly similar to its deterministic counterpart, i.e., GD.
Based on this fact, the regularization effect of SGD with small learning rate can be understood through that of GD.
Take linear models for example, GD has been shown to be biased towards max-margin/minimum-norm solutions depending on the problem setups~\citep{soudry2018implicit,gunasekar2018characterizing,ali2019continuous};
correspondingly, follow-ups show that SGD with small learning rate has the same bias (up to certain small uncertainty governed by the learning rate)~\citep{nacson2019stochastic,gunasekar2018characterizing,ali2020implicit}.
The analogy between SGD and GD in the small learning rate regime is also demonstrated in Figures~\ref{fig:2d_small_lr} and \ref{fig:mnist-generalization}.

However, the regularization theory for SGD with small learning rate cannot explain the benefits of SGD in the \emph{moderate learning rate regime}, where the initial learning rate is moderate and followed by annealing \citep{li2019towards,nakkiran2020learning,leclerc2020two,jastrzebski2020break}.
In particular, empirical studies show that, in the moderate learning rate regime,
(small batch) SGD generalizes much better than GD/large batch SGD~\citep{keskar2016large,jastrzkebski2017three,zhu2018anisotropic,wu2019noisy} (see Figure~\ref{fig:mnist-generalization}). 
This observation implies that, instead of imitating the bias of GD as in the small learning rate regime, SGD in the moderate learning rate regime admits superior bias than GD ---
it requires a dedicated characterization for the implicit regularization effect of SGD with moderate learning rate.

\begin{figure}
    \centering
    \subfigure[Small learning rate regime]{\includegraphics[width=0.45\linewidth]{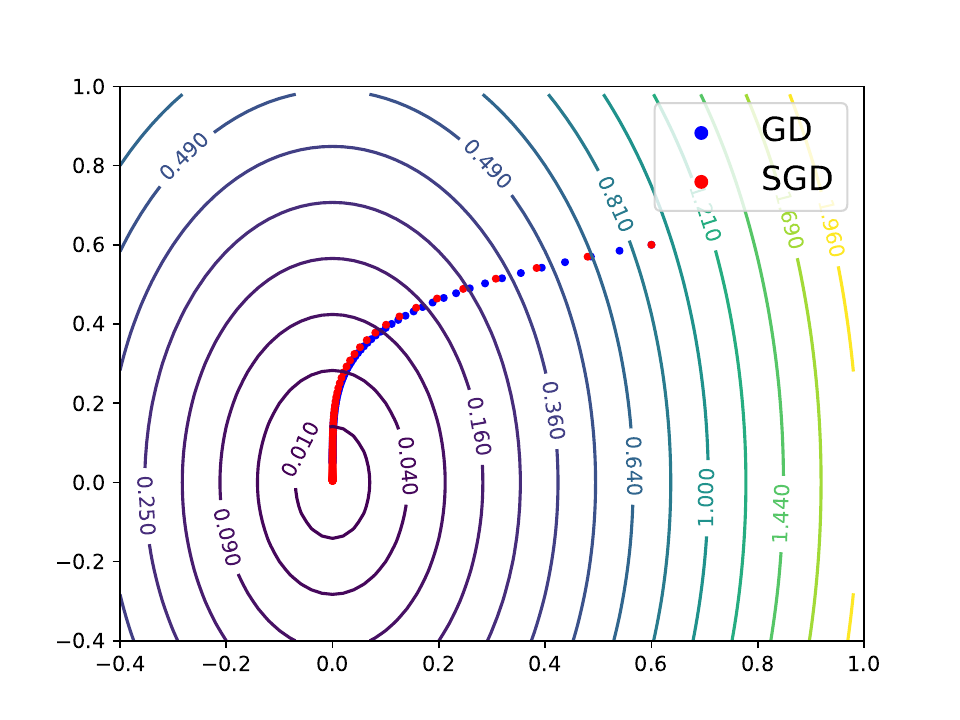}\label{fig:2d_small_lr}}
    \subfigure[Moderate learning rate regime]{\includegraphics[width=0.45\linewidth]{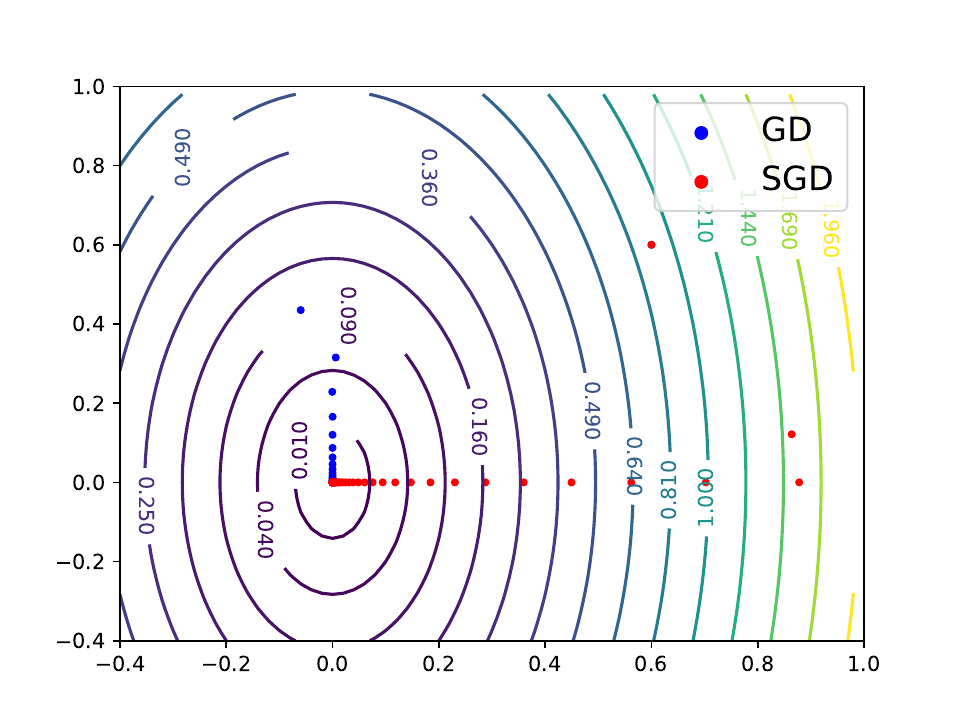}\label{fig:2d_moderate_lr}}
    \vspace{-3mm}
    \caption{\small
    Illustration for the 2-D example studied in Section~\ref{sec:2d}.
    Here $\kappa = 4$ and $w_0 = (0.6, 0.6)$. 
    \textbf{(a)}: Small learning rate regime.
    The small learning rate is $0.1/\kappa$.
    In this regime SGD and GD behave similarly and they both converge along $e_2$.
    \textbf{(b)}: Moderate learning rate regime.
    The initial moderate learning rate is $\eta = 1.1/\kappa$ and the decayed learning rate is $\eta'=0.1/\kappa$.
    In this regime GD converges along $e_2$ but SGD converges along $e_1$, the larger eigenvalue direction of the data matrix.
    Please refer to Section~\ref{sec:2d} for further discussions.
    \vspace{-5mm}
    }
    \label{fig:2d}
\end{figure}
In this paper, we reveal a particular regularization effect of SGD with moderate learning rate that involves \emph{convergence direction}.
In specific, we consider an overparameterized linear regression model learned by SGD/GD.
In this setting, SGD and GD are known to converge to the unique minimum-norm solution~\citep{zhang2016understanding,gunasekar2018characterizing} (see also Section~\ref{sec:minimum-norm}).
However, with a moderate and annealing learning rate, we show that SGD and GD favor different convergence directions:
\emph{SGD converges along the large eigenvalue directions of the data matrix}; in contrast, GD goes after the small eigenvalue directions.
The phenomenon is illustrated in Figure~\ref{fig:2d_moderate_lr}.
To sum up, we make the following contributions in this work:

\begin{enumerate}[nosep,leftmargin=*]
  \item For an overparameterized linear regression model, we show that SGD with moderate learning rate converges along the large eigenvalue directions of the data matrix, while GD goes after the small eigenvalue directions.
  To our knowledge, this result initiates the regularization theory for SGD in the moderate learning rate regime, and complements existing results for the small learning rate.
  
  \item Furthermore, we show the particular directional bias of SGD with moderate learning rate benefits generalization when early stopping is used. This is because converging along the large eigenvalue directions (SGD) leads to nearly optimal solutions, while converging along the small eigenvalue directions (GD) can only give suboptimal solutions.
  
  \item Finally, our results explain several folk arts for tuning SGD hyperparameters,
  such as (1) linearly scaling the initial learning rate with batch size~\citep{goyal2017accurate}; and (2) overrunning SGD with high learning rate even when the loss stops decreasing~\citep{he2016deep}.
\end{enumerate}

% We organize the remaining parts as follows:

%% PRELIMINARY

\section{Preliminary}\label{sec:preliminary}
Let $(x,y) \in \Rbb^d \times \Rbb$ be a pair of $d$-dimensional feature vector and $1$-dimensional label.
We consider a linear regression problem with square loss defined as \(\ell(x,y; w) := (w^\top x - y)^2, \)  where $w\in\Rbb^d$ is the model parameter.
Let $\Dcal$ be the population distribution over $(x,y)$,
then the test loss is 
\(
L_{\Dcal}(w) := \expect[(x,y)\sim \Dcal]{\ell(x,y; w)}.
\)
Let $\Scal:=\{(x_i, y_i)\}_{i=1}^n$ be a training set of $n$ data points drawn i.i.d. from the population distribution $\Dcal$. 
Then the training/empirical loss is defined as the average of the individual loss over all training data points,
\begin{equation*}
 L_{\Scal}(w) := \frac{1}{n}\sum_{i=1}^n \ell_i (w),\quad \text{where}\ \
 \ell_i(w) := \ell(x_i,y_i;w) = (w^\top x_i - y_i)^2.
\end{equation*}
% Then their gradients are $\grad \ell_i (w) = 2 x_i (x_i^\top w - y_i)$, and
% \(
% \grad L_\Scal (w) = \frac{2}{n}\sum_{i=1}^n x_i (x_i^\top w - y_i).
% \) 
We use $\{\eta_k\}$ to denote a  \emph{learning rate scheme} (LR).
Then \emph{gradient descent} (GD) iteratively performs the following update:
\begin{equation}\tag{GD} \label{eq:gd}
  w_{k+1} = w_k - \eta_k \grad L_\Scal (w_k) 
  =  w_k - \frac{2\eta_k}{n}\sum_{i=1}^n x_i  (x_i^\top w_k - y_i).
\end{equation}
Next we introduce \emph{mini-batch stochastic gradient descent} (SGD).\footnote{In this paper we focus on SGD without replacement, nonetheless our results and techniques are ready to be extended to SGD with replacement as well.}
Let $b$ be the batch size.
For simplicity suppose $n = m b$ for an integer $m$ (number of mini-batches).
Then at each epoch, SGD first randomly partitions the training set into $m$ disjoint mini-batches with size $b$, and then sequentially performs $m$ updates using the stochastic gradients calculated over the $m$ mini-batches.
Specifically, at the $k$-th epoch, let the mini-batch index sets  be $\Bcal^k_1, \Bcal^k_2, \dots, \Bcal^k_m$, where $|\Bcal^k_j| = b$ and $\bigcup_{j=1}^m \Bcal^k_j = \set{1,2,\dots,n}$,
then SGD takes $m$ updates as follows
\begin{equation}\tag{SGD} \label{eq:sgd}
  w_{k, j+1} = w_{k, j} - \frac{\eta_k}{b}\sum_{i\in\Bcal^k_j} \grad \ell_i (w_{k, j})
  =  w_{k, j} - \frac{2\eta_k}{b}\sum_{i\in\Bcal^k_j} x_i  (x_i^\top w_{k, j} - y_i),\quad j=1,\dots,m.
\end{equation}
We also write 
\( w_{k+1} = w_{k, m+1} \)
and
\( w_{k} = w_{k, 1} \)
to be consistent with notations in \eqref{eq:gd}.

% \paragraph{Learning rate scheme}
% In the following we mainly consider a two-phase learning rate scheme specified by $(\eta, \eta', k_1, k_2)$, such that
% \begin{equation}\tag{LR}\label{eq:two-stage-lr}
%   \eta_k = \begin{cases}
%     \eta,  & k=1,\dots, k_1;   \\
%     \eta', & k=k_1+1,\dots, k_2.
%   \end{cases}
% \end{equation}
% We will set $\eta$ to be moderate and $\eta'$ to be small to represent a \emph{moderate learning rate scheme}.

\subsection{The minimum-norm bias}\label{sec:minimum-norm}
Before presenting our results on the directional bias, let us first recap the well-known minimum-norm bias for SGD/GD optimizing linear regression problem~\citep{zhang2016understanding,gunasekar2018characterizing,belkin2019two,bartlett2020benign}.
We rewrite the training loss as $L_\Scal(w) = \frac{1}{n} \norm{X^\top w  - Y}_2^2$, where 
$X= (x_1,\dots,x_n) \in \Rbb^{d\times n}$ and $Y = (y_1,\dots,y_n)^\top \in \Rbb^n$.
Then its global minima are given by 
\( \Wcal_* := \set{ w\in\Rbb^d: P w = w_*,\ w_* := X (X^\top X)^\inv Y},\)
% \( \Wcal_* := \set{w\in\Rbb^d: XX^\top w = X y} = \set{ w\in\Rbb^d: P w = w_*,\ w_* := X (X^\top X)^\inv Y},\)
where $P$ is the projection operator onto the data manifold, i.e., the column space of $X$.
We focus on overparameterized cases where $\Wcal_*$ contains multiple elements.

Notice that every gradient $\grad\ell_i(w) = 2x_i(x_i^\top w - y_i)$ is spanned in the data manifold, thus \eqref{eq:gd} and \eqref{eq:sgd} can never move along the direction that is orthogonal to the data manifold.
In other words, \eqref{eq:gd} and \eqref{eq:sgd} implicitly admit the following \emph{hypothesis class}:
\begin{equation}\label{eq:hypothesis}
  \Hcal_{\Scal} = \bigl\{w\in\Rbb^d: P_\perp w = P_\perp w_0 \bigr\},
\end{equation}
where $w_0$ is the initialization and $P_\perp = I - P$ is the projection operator onto the orthogonal complement to the column space of $X$.

Putting things together, for any global optimum $w\in\Wcal_*$ (hence $Pw = w_*$), we have
\begin{equation*}
     \norm{w - w_0}_2^2 =  \norm{Pw - Pw_0}_2^2 +  \norm{P_\perp w -  P_\perp w_0}_2^2 = \norm{w_* - P w_0}_2^2 +  \norm{P_\perp w - P_\perp w_0}_2^2,
\end{equation*}
where the right hand side is minimized when $P_{\perp} w = P_{\perp} w_0$, i.e., $w\in\Hcal_\Scal$, thus $w$ is the solution found by SGD/GD in the non-degenerated cases (when the learning rate is set properly so that the algorithms can find a global optimum).
In sum, SGD/GD is biased to find the global optimum that is \emph{closest to the initialization}, which is referred as the ``minimum-norm'' bias in literature since the initialization is usually set to be zero.

%% WARMUP

\section{Warming up: A 2-Dimensional Case Study}\label{sec:2d}

In this section we conduct a $2$-dimensional case study to motivate our understanding on the directional bias of SGD in the moderate learning rate regime.
Let us consider a training set consisting of two orthogonal points, $\Scal = \set{(x_1,\ y_1 = 0),\ (x_2,\ y_2 = 0)}$ where
\[
  x_1 = \sqrt{\kappa} \cdot e_1= (\sqrt{\kappa},\ 0)^\top,\quad
  x_2 = e_2 = (0,\ 1)^\top,\quad
  \kappa > 2.
\]
Clearly $w_* = 0$ is the unique minimum of $L_\Scal(w)$.
The Hessian of the empirical loss is
\(\hess L_\Scal (w) = {x_1x_1^\top + x_2x_2^\top} = \diag \bracket{{\kappa},{1}},
\)
which has two eigenvalues:
the smaller one $1$ is contributed by data $x_2$, and the larger one $\kappa$ contributed by data $x_1$.
Hence $L_\Scal(w)$ is $\kappa$-smooth.
Similarly the Hessian of the individual losses are
\(\hess \ell_1(w) = 2x_1 x_1^\top = \diag\bracket{2\kappa, 0}\)
and
\(\hess \ell_2(w) =  2x_2 x_2^\top = \diag\bracket{0, 2}.\)
Thus $\ell_2(w)$ is $2$-smooth, but $\ell_1(w)$, the individual loss for data $x_1$, is only $2\kappa$-smooth, which is more ill-conditioned compared to $L_\Scal(w)$ and $\ell_2(w)$.

Next we consider a moderate initial learning rate $\eta \in \bigl( \frac{1}{\kappa},\ \frac{2}{1+\kappa} \bigr)$.
According to convex optimization theory~\citep{boyd2004convex}, gradient step with such learning rate is convergent for $L_\Scal(w)$ and $\ell_2(w)$, but oscillating for $\ell_1(w)$. 
In other words, \eqref{eq:gd} is convergent; and \eqref{eq:sgd} is convergent along direction $x_2$ (or $e_2$), but oscillating along direction $x_1$ (or $e_1$).
We also see this by analytically solving \eqref{eq:gd} and \eqref{eq:sgd} for this example:
\begin{equation}\label{eq:2d-sol}
w_{k}^{\gd} = 
    \begin{pmatrix}
    (1-\eta\kappa)^k &  \\
     & (1-\eta)^k 
    \end{pmatrix}
    w_{0},\quad
    w_{k}^{\sgd} = 
    \begin{pmatrix}
    (1-2\eta\kappa)^k &  \\
     & (1-2\eta)^k 
    \end{pmatrix}
    w_{0},
\end{equation}
where $\abs{1-\eta \kappa} < \abs{1-\eta} < 1$ and $\abs{1-2\eta} < 1 < \abs{1-2\eta \kappa}$.

By Eq.~\eqref{eq:2d-sol}, with moderate learning rate GD is convergent for both directions $e_1$ and $e_2$.
Moreover, GD fits $e_1$ faster since the contraction parameter is smaller, i.e., $\abs{1-\eta \kappa} < \abs{1-\eta} < 1$. 
Thus observing the entire optimization path, GD approaches the minimum $w_* = 0$ along $e_2$, which corresponds to the smaller eigenvalue direction of $\hess L_\Scal (w)$.
This is verified by the blue dots in Figure~\ref{fig:2d_moderate_lr}.
We note this directional bias for GD also holds in the small learning rate regime, as shown in Figure~\ref{fig:2d_small_lr}.

As for SGD in the initial phase where the learning rate is moderate, Eq.~\eqref{eq:2d-sol} shows it converges along $e_2$ but oscillates along $e_1$ since $\abs{1-2\eta} < 1 < \abs{1-2\eta \kappa}$.
In other words, SGD cannot fit $e_1$ before the learning rate decays; however when this happens, $e_2$ is already well fitted.
Overall, SGD fits $e_2$ first then fits $e_1$, i.e., SGD converges to the minimum $w_*=0$ along $e_1$, which corresponds to the larger eigenvalue direction of $\hess L_\Scal (w)$.
This is verified by the red dots in Figure~\ref{fig:2d_moderate_lr}.
We note this particular directional bias for SGD is dedicated to the moderate learning rate regime;
in the small learning rate regime, as discussed before, SGD behaves similar to GD thus goes after the smaller eigenvalue direction, which is illustrated in Figure~\ref{fig:2d_small_lr}.

 The above idea can be carried over to more general cases:
the training loss usually has relatively smooth curvature because of the empirical averaging;
yet some individual losses can possess bad smoothness condition, corresponding to the data points that contribute to the large eigenvalues of the Hessian/data matrix.
Then with a moderate learning rate, while GD is convergent, SGD is convergent for the smooth individual losses but oscillating for the ill-conditioned individual losses. Thus SGD can only fit the latter losses after the learning rate anneals.
Therefore, in the moderate learning rate regime, SGD tends to converge along the large eigenvalue directions while GD tends to go after the small eigenvalue directions.
We will rigorously justify the above intuitions in the following section.

%% THEORY

\section{Main Results}\label{sec:theory}

In this section we present our main theoretical results.
The proofs are deferred to Appendix~\ref{sec:proof_main}.

We specify the population distribution of $(x,y)\in\Rbb^{d}\times \Rbb$ in the following manner.
(1) We consider the feature vector as $x = \zeta\cdot\xi$, 
where $\zeta$ and $\xi$ are two independent random variables that represent the magnitude and angle of $x$, respectively. That is, $\zeta\in \Rbb$ is bounded in $(0, 1]$, and $\xi\in\Rbb^d$ obeys a sphere uniform distribution, $\Ucal(S^{d-1})$.
(2) We consider a realizable setting where the label is given by $y = w_*^\top x$, i.e., there exists a true parameter $w_* \in \Rbb^d$ that generates the label from the feature vector\footnote{This is for the conciseness of presentation. Our results can be easily generalized to linear regression with well-specified noise, i.e., noise that is independent of the feature vector.}.
Then the test loss is
\(
L_{\Dcal}(w) = \expect[(x,y)\sim \Dcal]{(w-w_*)^\top xx^\top (w-w_*)} = \mu\norm{w-w_*}_2^2,
\)
where $\mu=\Ebb [\zeta^2]/d$.
For an i.i.d. generated training set $\Scal = \set{(x_i, y_i)}_{i=1}^n$,
the training loss and the individual losses are
\[
\textstyle{L_{\Scal}(w) = \frac{1}{n} \bracket{w-w_*}^\top X X^\top \bracket{w-w_*}},\quad
  \ell_i(w) = \bracket{w-w_*}^\top x_i x_i^\top \bracket{w-w_*} ,\quad
  i=1,\dots,n,
\]
where $X = (x_1,\dots,x_n)$.
We denote by $P$ the projection operator onto the column space of $X$ (the data manifold).
For $i\in[n]$, we denote $\lambda_i := \norm{x_i}_2^2 = \zeta_i^2 \in (0,1]$.
% . By the previous data generalization process, we have $\lambda_i \in (0, 1]$. 
Without loss of generality, we assume $\set{\lambda_i}_{i\in[n]}$ are sorted in a descending order, i.e., $\lambda_1 \ge \lambda_2 \ge \dots \ge \lambda_n$.
With the these preparations, we are ready to state our main theorems.

\subsection{The directional bias of SGD}

We first present Theorems~\ref{thm:sgd-bias-informal} and~\ref{thm:gd-bias-informal} that characterize the different directional biases of SGD and GD in the moderate learning rate regime.

\begin{thm}[The directional bias of SGD with moderate LR, informal]\label{thm:sgd-bias-informal}
Suppose $d\ge\poly{n}$\footnote{For two sequences $\{x_n \ge 0\}$ and $\{y_n \ge 0\}$: $x_n = \bigO{y_n}$ if there exist constants $C > 0$ and $N$ such that $x_n\le C y_n$ for every $n\ge N$; $x_n = \bigTht{y_n}$ if $x_n = \bigO{y_n}$ and $y_n = \bigO{x_n}$;
$x_n = \smallO{y_n}$ if for every $\epsilon > 0$ there exists a positive constant $N(\epsilon) > 0$ such that $x_n \le C y_n$ for every $n \ge N(\epsilon)$; $x_n = \poly{y_n}$ if there exists large absolute constant $D>0$ such that $x_n = \bigTht{y_n^{D}}$.}. 
Denote $\nu = n/\sqrt{d}$ (which is small). Then with high probability it holds that 
\(
\lambda_1 > \lambda_2 + \bigTht{\nu},\ \lambda_{n-1} > \lambda_n + \bigTht{\nu}, \ \lambda_n > \bigTht{\nu}.
\)
Suppose the initialization is set such that $x_i^\top (w_0 - w_*) \ne 0$ for every $i\in[n]$ \footnote{This holds with probability $1$ if $w_0$ is initialized randomly and follows, e.g., Gaussian distribution.}.
Consider \eqref{eq:sgd} with the following moderate learning rate scheme
\begin{equation}\label{eq:moderate-lr}
    \eta_k = 
    \begin{cases}
    \eta \in \bigl( \frac{b}{\lambda_1 - \bigTht{\nu}}, \ \frac{b}{\lambda_2 + \bigTht{\nu}} \bigr), & k=1,\dots,k_1;\\
    \eta' \in \bigl( 0,\ \frac{b}{2\lambda_1} \bigr), & k=k_1+1,\dots, k_2,
    \end{cases}
\end{equation}
then for $\epsilon$ such that $\poly{\epsilon} > \nu$, there exist $k_1 = \bigO{\log\frac{1}{\epsilon} + k_2}$ and $k_2>0$ such that with high probability the output of SGD $w^\sgd := w_{k_2}$ satisfies
    \begin{equation}\label{eq:sgd-raleigh}
        (1 - \epsilon )\cdot\gamma_1 \le \frac{\bracket{P(w^\sgd - w_*)}^\top \cdot XX^\top \cdot P \bracket{w^\sgd - w_*}}{\norm{P \bracket{w^\sgd - w_*}}_2^2} \le \gamma_1,
    \end{equation}
    where $\gamma_1$ is the largest eigenvalue of the data matrix $X X^\top$.
\end{thm}

\begin{thm}[The directional bias of GD with moderate or small LR, informal]\label{thm:gd-bias-informal}
% Suppose $d\ge\poly{n}$, $\lambda_{n-1} > \lambda_n + \smallO{1}$ and $\lambda_n > \smallO{1}$,
% where $\smallO{1}$ is a small positive constant depending on $n/d$, and the initialization is away from $w_*$.
Under the same conditions as Theorem \ref{thm:sgd-bias-informal},
consider \eqref{eq:gd} with the following moderate or small learning rate scheme
\begin{equation}\label{eq:gd-lr}
    \eta_k \in \Bigl(0,\ \frac{n}{2\lambda_1 + \bigTht{\nu}} \Bigr),\quad k=1,\dots,k_2,
\end{equation}
then for any $\epsilon > 0$, if \(k_2 > \bigO{\log \frac{1}{\epsilon}},\) then with high probability the output of GD $w^\gd := w_{k_2}$ satisfies
\begin{equation}\label{eq:gd-raleigh}
        \gamma_n \le \frac{\bracket{P(w^\gd - w_*)}^\top\cdot XX^\top\cdot P\bracket{w^\gd - w_*}}{\norm{P \bracket{w^\gd - w_*}}_2^2} \le (1 + \epsilon)\cdot\gamma_n,
    \end{equation}
where $\gamma_n$ is the smallest eigenvalue of the data matrix $X X^\top$ restricted in the column space of $X$.
\end{thm}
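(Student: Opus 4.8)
The plan is to reduce \eqref{eq:gd-raleigh} to a deterministic, coordinate-wise computation in the eigenbasis of $XX^\top$, with essentially a single high-dimensional concentration estimate on the data matrix carrying all the probabilistic content. Writing $e_k := w_k - w_*$ and using the realizable labels $y_i=w_*^\top x_i$, the recursion \eqref{eq:gd} collapses to $e_{k+1}=(I-(2\eta_k/n)\,XX^\top)e_k$, hence $e_k=\prod_{j=1}^{k}(I-(2\eta_j/n)\,XX^\top)\,e_0$; every factor is a function of $XX^\top$, so it commutes with the projection $P$ onto the column space of $X$. Diagonalizing $XX^\top=\sum_{i=1}^n\gamma_iv_iv_i^\top$ on its $n$-dimensional column space (the rank is $n$ almost surely) and writing $c_i:=\langle v_i,e_0\rangle$ and $\rho_i^{(k)}:=\prod_{j=1}^k(1-2\eta_j\gamma_i/n)$, the left-hand side of \eqref{eq:gd-raleigh} equals exactly $\bigl(\sum_i\gamma_i(\rho_i^{(k)})^2c_i^2\bigr)/\bigl(\sum_i(\rho_i^{(k)})^2c_i^2\bigr)$, a convex combination of $\gamma_1,\dots,\gamma_n$; the lower bound $\gamma_n\le(\cdots)$ is then immediate, and it remains to show the $i=n$ term dominates once $k\ge\bigO{\log(1/\epsilon)}$.

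For the upper bound I would first record the concentration. Since $x_i=\zeta_i\xi_i$ with $\xi_i\sim\Ucal(S^{d-1})$, we have $\lambda_i=\zeta_i^2$ and $X^\top X=\diag(\lambda_1,\dots,\lambda_n)+E$, where $E$ has zero diagonal and off-diagonal entries $\zeta_i\zeta_j\langle\xi_i,\xi_j\rangle=\bigO{1/\sqrt d}$; a union bound gives $\norm{E}_2=\bigO{\sqrt{n\log n/d}}=\smallO{1}$ with high probability once $d\ge\poly{n}$. By Weyl's inequality the $n$ nonzero eigenvalues of $XX^\top$ (equivalently, those of $X^\top X$) satisfy $|\gamma_i-\lambda_i|\le\norm{E}_2$, so, with the $\smallO{1}$'s in the hypotheses calibrated to dominate $\norm{E}_2$, the assumptions $\lambda_{n-1}>\lambda_n+\smallO{1}$ and $\lambda_n>\smallO{1}$ become a genuine spectral gap $\gamma_{n-1}-\gamma_n$ (of the same order as that $\smallO{1}$) and $\gamma_n>0$; together with the learning-rate ceiling $\eta_k<n/(2\lambda_1+\smallO{1})$ this also forces $0<1-2\eta_k\gamma_i/n<1$ for all $i,k$, so every $\rho_i^{(k)}$ is positive and, for $i<n$, the ratio $\rho_i^{(k)}/\rho_n^{(k)}=\prod_j\bigl((1-2\eta_j\gamma_i/n)/(1-2\eta_j\gamma_n/n)\bigr)$ decays geometrically (with rate bounded away from $1$ provided the $\eta_k$ are bounded below, as the statement tacitly requires).

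Next I would control the coefficients $c_i$ by a rotational-invariance argument that avoids any appeal to eigenvector perturbation. Since $\{\xi_i\}$ is rotation-invariant and $e_0=w_0-w_*$ is independent of the data, for every fixed $O\in O(d)$ the facts $OX\stackrel{d}{=}X$ and $v_i(OX)=Ov_i(X)$ give $c_i=\langle v_i(X),e_0\rangle\stackrel{d}{=}\langle v_i(X),O^\top e_0\rangle$; averaging $O$ over the Haar measure yields $c_i\stackrel{d}{=}\norm{e_0}_2\,\beta$, where $\beta$ is the first coordinate of a uniform unit vector in $\Rbb^d$. Anti-concentration of $\beta$ then gives $c_n^2\ge\norm{e_0}_2^2/\poly{n}$ with high probability (the initialization being away from $w_*$ keeps $\norm{e_0}_2$ bounded below), while $\max_i c_i^2\le\sum_i c_i^2=\norm{Pe_0}_2^2\le\norm{e_0}_2^2$ trivially. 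Putting the pieces together,
\[
0\;\le\;\frac{\sum_i\gamma_i(\rho_i^{(k)})^2c_i^2}{\sum_i(\rho_i^{(k)})^2c_i^2}-\gamma_n\;=\;\frac{\sum_{i<n}(\gamma_i-\gamma_n)(\rho_i^{(k)})^2c_i^2}{\sum_i(\rho_i^{(k)})^2c_i^2}\;\le\;(n-1)\,\gamma_1\,\frac{\max_i c_i^2}{c_n^2}\left(\frac{1-2\underline\eta\gamma_{n-1}/n}{1-2\underline\eta\gamma_n/n}\right)^{2k},
\]
with $\underline\eta:=\min_k\eta_k$, and choosing $k_2\ge C\log(1/\epsilon)$ for a constant $C$ depending on the spectral gap, $\underline\eta$, $n$, and $d$ makes the right-hand side at most $\epsilon\gamma_n$, which is \eqref{eq:gd-raleigh}.

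The main obstacle is not the dynamics, which become transparent after diagonalization, but two auxiliary points. First is the lower bound on $c_n$: the rotational-invariance reduction is the right tool precisely because a naive Davis--Kahan estimate would carry the bottom gap $\gamma_{n-1}-\gamma_n$ in a denominator, and that gap may itself be only $\smallO{1}$, i.e.\ comparable to the concentration error $\norm{E}_2$, so one cannot conclude that $v_n$ is close to any fixed direction. Second is the bookkeeping of the several $\smallO{1}$ quantities, which must be arranged so that the hypothesized gaps dominate $\norm{E}_2$ while the learning-rate ceiling keeps every factor $1-2\eta_k\gamma_i/n$ strictly inside $(0,1)$. A last, minor point: as written, $k_2=\bigO{\log(1/\epsilon)}$ presumes the learning rates are bounded below (so that $\sum_k\eta_k$ grows at the required rate), and this dependence is what the hidden constant in $\bigO{\cdot}$ absorbs.
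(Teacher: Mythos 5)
Your proposal is correct and follows essentially the same route as the paper's proof: diagonalize $XX^\top$, solve the GD recursion coordinate-by-coordinate in the eigenbasis so the Rayleigh quotient becomes a convex combination of the $\gamma_i$ with weights $(\rho_i^{(k)})^2c_i^2$, locate the $\gamma_i$ near the $\lambda_i$ by a matrix perturbation bound (the paper uses the Gershgorin circle theorem where you use Weyl plus an operator-norm bound on the off-diagonal Gram entries), and show the ratio $\rho_i^{(k)}/\rho_n^{(k)}$ decays geometrically so that the bottom eigendirection dominates after $\bigO{\log\frac{1}{\epsilon}}$ iterations. Two of your side remarks are actually refinements of the paper's argument: the paper controls $c_n=u_0^{(n)}$ only by assuming the initialization is ``away from $w_*$'' and absorbing $(u_0^{(n)})^{-2}$ into the hidden constant, whereas you lower-bound it via rotational invariance (though the resulting bound should scale like $\norm{e_0}_2^2/d$ up to polynomial factors, not $\norm{e_0}_2^2/\poly{n}$ --- a harmless slip that only adds a $\log d$ to the constant); and your observation that the contraction rate tacitly requires the learning rates to be bounded below is on target, since the paper's claim that $f(\eta)=q_{n-1}(\eta)/q_n(\eta)$ is increasing with $\max f<1$ has the monotonicity backwards ($f$ is decreasing with $f(0)=1$), and only a positive lower bound $\underline\eta$ on the $\eta_k$ makes the rate $q=f(\underline\eta)$ strictly less than $1$.
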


% A few remarks on Theorems~\ref{thm:sgd-bias-informal} and~\ref{thm:gd-bias-informal} are in order.

\begin{rmk}\label{rmk:rayleigh}
As the Rayleigh quotient~\eqref{eq:sgd-raleigh} (resp. \eqref{eq:gd-raleigh}) converges to its maximum (resp. minimum), the vector gets closer to the eigenvector of the largest (resp. smallest) eigenvalue~\citep{trefethen1997numerical}.
Thus Theorem~\ref{thm:sgd-bias-informal} and \ref{thm:gd-bias-informal} suggest that, when projected onto the data manifold, SGD and GD converge to the optimum along the largest and smallest eigenvalue direction respectively.
Here we are only interested in the projection onto the data manifold, since SGD/GD cannot move along the direction that is orthogonal to the data manifold as discussed in Section~\ref{sec:minimum-norm}.
\end{rmk}

\begin{rmk}
% For the ease of presentation, we make strong gap assumptions ($\lambda_1 > \lambda_2$ and $\lambda_{n-1} > \lambda_n + \smallO{1}$), and show SGD/GD converges along the largest/smallest eigenvalue direction.
In Theorem \ref{thm:sgd-bias-informal} we use the gap between $\lambda_1$ and $\lambda_2$ to show a learning rate scheme such that SGD converges along the largest eigenvalue direction. This can be extended by considering the gap between $\lambda_r$ and $\lambda_{r+1}$ and the learning rate scheme defined similarly, then SGD converges along the subspace spanned by the eigenvectors of the top $r$ eigenvalues. Similar extension applies to Theorem \ref{thm:gd-bias-informal} as well.
\end{rmk}

\begin{rmk}
We legitimately assume $b < n/2 - \bigTht{\nu}$ since it is not very meaningful to discuss SGD that uses more than (roughly) half of the training set as a mini-batch.
Then the learning rate schedule in~\eqref{eq:moderate-lr} intersects with that in~\eqref{eq:gd-lr}, i.e., \emph{\eqref{eq:gd-lr} covers both moderate and small learning rate schemes}.
Their intersection determines a moderate learning rate scheme, where SGD converges along the large eigenvalue directions while GD goes after the small eigenvalue directions.
This justifies the regularization effect of SGD with moderate learning rate.
\end{rmk}

\begin{rmk}\label{rmk:gd-effective-lr}
Technically, in Theorem \ref{thm:sgd-bias-informal} one can set $b=n$ to include GD as a special case, so that GD also follows the large eigenvalue directions.
However, the initial learning rate in~\eqref{eq:moderate-lr} needs to be at least $\frac{n}{\lambda_1} \ge n$ ignoring the small order term, which is way too large to be even numerically stable in practical big data circumstances.
\rebuttal{This observation is also directly supported by Theorem \ref{thm:gd-bias-informal}, where \eqref{eq:gd-lr} specifies the range of learning rate such that GD converges along small eigenvalue directions. Note the upper bound in \eqref{eq:gd-lr} linearly scales with $n$, and is large enough to include all learning rate that can be adopted in practice. Thus one cannot have a legitimate learning rate for GD to converge along the large eigenvalue directions as SGD does with moderate learning rate.}
\end{rmk}

Note GD with small learning rate also converges along the small eigenvalue directions, since \eqref{eq:gd-lr} covers the small learning rate scheme.
In complement, the following Theorem~\ref{thm:sgd-bias-smallLR-informal} shows that in the small learning rate regime, SGD is imitating GD and converges along the small eigenvalue directions as well.
Theorems~\ref{thm:sgd-bias-informal}, \ref{thm:gd-bias-informal} and \ref{thm:sgd-bias-smallLR-informal} together show that, converging along the large eigenvalue directions is a distinct regularization effect that is unique to \emph{SGD} with \emph{moderate learning rate}.

\begin{thm}[The directional bias of SGD with small LR, informal]\label{thm:sgd-bias-smallLR-informal} 
Theorem~\ref{thm:gd-bias-informal} applies to \eqref{eq:sgd} with the following small learning rate scheme
\begin{equation}\label{eq:small-lr}
    \eta_k = \eta' \in \Bigl(0,\ \frac{b}{2\lambda_1 + \bigTht{\nu}} \Bigr),\quad k=1,\dots,k_2.
\end{equation}
\end{thm}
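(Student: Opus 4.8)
The plan is to show that, under the small learning rate scheme \eqref{eq:small-lr}, one full epoch of \eqref{eq:sgd} acts on $w-w_*$ essentially like a single step of \eqref{eq:gd} with the rescaled learning rate $\eta_{\mathrm{GD}}=n\eta'/b$, and then to inherit the conclusion directly from Theorem~\ref{thm:gd-bias-informal}; note that, using $b\le n$, the range \eqref{eq:small-lr} maps into the range \eqref{eq:gd-lr}, so the hypotheses line up. In the realizable setting each mini-batch update is a contraction of the error, $w_{k,j+1}-w_* = (I-\frac{2\eta'}{b}M_j^k)(w_{k,j}-w_*)$ with $M_j^k:=\sum_{i\in\Bcal_j^k}x_ix_i^\top\succeq 0$, so the epoch operator is the (non-commuting) product $A_k:=\prod_{j=m}^{1}(I-\frac{2\eta'}{b}M_j^k)$; since every $M_j^k$ has range in the column space of $X$, the whole dynamics stay inside the data manifold and I only need to track $P(w_k-w_*)=\big(\prod_{\ell\le k}PA_\ell P\big)P(w_0-w_*)$, exactly as in Section~\ref{sec:minimum-norm}.

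First I would invoke the high-dimensional concentration afforded by $d\ge\poly{n}$. Writing $v_i:=x_i/\sqrt{\lambda_i}$, spherical inner-product bounds give $\max_{i\ne i'}\abs{\langle v_i,v_{i'}\rangle}=\smallO{1}$ with high probability, hence $X^\top X=\diag(\lambda_1,\dots,\lambda_n)+\smallO{1}$; consequently the eigenvalues of $XX^\top$ restricted to the column space satisfy $\gamma_i=\lambda_i+\smallO{1}$, and the true eigenbasis $\{u_i\}$ is $\smallO{1}$-close to $\{v_i\}$. In the $\{v_i\}$ coordinates each $M_j^k$ is then diagonal with entries $\lambda_i$ for $i\in\Bcal_j^k$ and $0$ elsewhere, up to an off-diagonal remainder of operator norm $\smallO{1}$ (controlled by the coherence bound and by $\norm{M_j^k}\le\lambda_1+\smallO{1}$). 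Because $\{\Bcal_j^k\}_{j=1}^m$ partitions $[n]$, every index $i$ is hit by the factor $(1-\frac{2\eta'}{b}\lambda_i)$ exactly once per epoch, so collecting the leading diagonal parts yields $A_k=\diag\big(1-\frac{2\eta'}{b}\lambda_i\big)_{i}+\mathrm{Err}_k$ in the $\{v_i\}$ basis with $\norm{\mathrm{Err}_k}=\smallO{1}$. Here I use that $\eta'<b/(2\lambda_1+\smallO{1})$ forces $\frac{2\eta'}{b}\norm{M_j^k}<1$, so every factor has norm at most $1+\smallO{1}$ and the error from expanding the product of $m\le n$ such factors stays $\le n\cdot\smallO{1}\cdot e^{n\cdot\smallO{1}}=\smallO{1}$, provided the polynomial in $d\ge\poly{n}$ is large enough; composing the $k_2=\bigO{\log(1/\epsilon)}$ epochs preserves this.

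Up to a $\smallO{1}$ perturbation the projected iterate therefore evolves as $P(w_{k_2}-w_*)\approx\sum_{i=1}^n\big(1-\frac{2\eta'}{b}\lambda_i\big)^{k_2}\langle v_i,w_0-w_*\rangle\,v_i$, which is exactly the trajectory produced by \eqref{eq:gd} with learning rate $\eta_{\mathrm{GD}}=n\eta'/b\in\big(0,\frac{n}{2\lambda_1+\smallO{1}}\big)$. Since $1-\frac{2\eta'}{b}\lambda_i\in(0,1)$ attains its maximum uniquely at $i=n$ (using the gap $\lambda_{n-1}>\lambda_n+\smallO{1}$), and since ``initialization away from $w_*$'' together with the near-orthogonality of $v_n$ to the fixed vector $w_0-w_*$ keeps $\abs{\langle v_n,w_0-w_*\rangle}$ bounded below with high probability, the $i=n$ component dominates geometrically after $k_2=\bigO{\log(1/\epsilon)}$ epochs and the Rayleigh quotient satisfies \eqref{eq:gd-raleigh} with $\gamma_n=\lambda_n+\smallO{1}$ --- which is precisely the conclusion of Theorem~\ref{thm:gd-bias-informal}. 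The main obstacle is the second step: showing that the non-commuting product $A_k$ of $m$ near-diagonal, norm-$(1+\smallO{1})$ matrices is $\smallO{1}$-close to the single diagonal matrix $\diag\big(1-\frac{2\eta'}{b}\lambda_i\big)_{i}$, i.e.\ bounding the accumulation of the $\bigO{\sqrt{\log n/d}}$-size off-diagonal and commutator terms over $m$ mini-batches and $k_2$ epochs, and pinning down how large the polynomial in $d\ge\poly{n}$ must be so that this total error is absorbed by the arbitrary fixed $\epsilon$.
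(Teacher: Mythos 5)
Your proposal is correct in outline but takes a genuinely different route from the paper. The paper does \emph{not} approximate the whole epoch operator by a diagonal matrix; instead it splits the data manifold into the one–dimensional direction $P_n$ (the part of $x_n$ orthogonal to $x_1,\dots,x_{n-1}$) and its complement $P_{-n}$, bounds the spectrum of the product $\Mcal_{-n}=\prod_j\bigl(I-\tfrac{2\eta'}{b}P_{-n}H(\Bcal_j)P_{-n}\bigr)$ restricted to $P_{-n}$ by a single contraction rate $q_{-n}$ (expanding the product, bounding the higher-order terms in Frobenius norm, and invoking Hoffman--Wielandt), tracks the two norms $A_k=\norm{P_{-n}v_k}_2$, $B_k=\norm{P_n v_k}_2$ through a coupled $2\times 2$ recursion with cross-term $\xi=O(\eta'\sqrt{n}\iota/b)$, and then runs an induction on the ratio $A_k/B_k$ to show it falls below $\epsilon$. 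Your plan instead fully diagonalizes: one epoch $\approx\diag(1-\tfrac{2\eta'}{b}\lambda_i)$ in the near-orthogonal data basis, so SGD per epoch $\approx$ one GD step with learning rate $n\eta'/b$, and Theorem~\ref{thm:gd-bias-informal} is inherited wholesale. This is conceptually cleaner and makes the SGD--GD equivalence at small learning rate explicit, but it proves a strictly stronger intermediate statement and therefore shifts all the difficulty onto the error accumulation you correctly flag: an operator-norm error of size $\delta$ per epoch becomes an \emph{additive} perturbation of order $k_2\delta$ on the final iterate, which must be compared against the \emph{geometrically decaying} dominant component $(1-\tfrac{2\eta'}{b}\lambda_n)^{k_2}\abs{\langle v_n, v_0\rangle}=\poly{\epsilon}$; this is exactly why the paper prefers the multiplicative ratio induction, which never lets the additive leakage compete with the shrinking signal. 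Your requirement $\sqrt{n}\iota\le\poly{\epsilon}$ matches the paper's and suffices, so the gap is closable. One caveat: your remark that ``near-orthogonality of $v_n$ to $w_0-w_*$ keeps $\abs{\langle v_n, w_0-w_*\rangle}$ bounded below'' is backwards --- near-orthogonality makes that inner product \emph{small} (of order $1/\sqrt{d}$ for a fixed initialization in high dimension), and both your argument and the paper's rely on the informal assumption that the initialization has a non-negligible component in the $P_n$ direction; neither treatment makes this rigorous, so I do not count it against you.
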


\paragraph{Experiment}
\rebuttal{
Figure \ref{fig:linear-mnist-RQ} shows two experiments for verifying Theorems~\ref{thm:sgd-bias-informal}, \ref{thm:gd-bias-informal} and \ref{thm:sgd-bias-smallLR-informal}. The details of the experimental setup are deferred to Appendix \ref{sec:appendix-exps}.
In Figure \ref{fig:linear-RQ}, we run SGD and GD in both moderate and small learning rate regimes, and directly compare their Rayleigh quotients as defined in \eqref{eq:sgd-raleigh} and \eqref{eq:gd-raleigh}.
We can see that the Rayleigh quotient reaches its maximum for SGD with moderate learning rate, and reaches its minimum for both GD and SGD with small learning rate, which verifies Theorems~\ref{thm:sgd-bias-informal}, \ref{thm:gd-bias-informal} and \ref{thm:sgd-bias-smallLR-informal}.
In Figure \ref{fig:mnist-RQ}, we run the algorithms to optimize a neural network on a subset of the FashionMNIST dataset.
Since the neural network is non-convex and have multiple local minima, we compare the \emph{relative Rayleigh quotients}, i.e., the Rayleigh quotients of the convergence directions divided by the maximum absolute eigenvalue of the Hessian (see Appendix \ref{sec:appendix-exps-nn}).
Figure \ref{fig:mnist-RQ} shows that SGD with moderate learning rate converges along relatively large eigenvalue directions while GD/SGD with small learning rate converges along relatively small eigenvalue directions. This distinguishes the directional bias of SGD and GD in the moderate learning rate regime and provides evidence from neural network training to support our theory.
}
%Then by Remark~\ref{rmk:rayleigh}, SGD with moderate learning rate converges along the largest eigenvalue direction, while GD or SGD with small learning rate converges along the smallest eigenvalue direction.

\begin{figure}
    \centering
    \subfigure[Linear regression on synthetic data]{\includegraphics[width=0.45\linewidth]{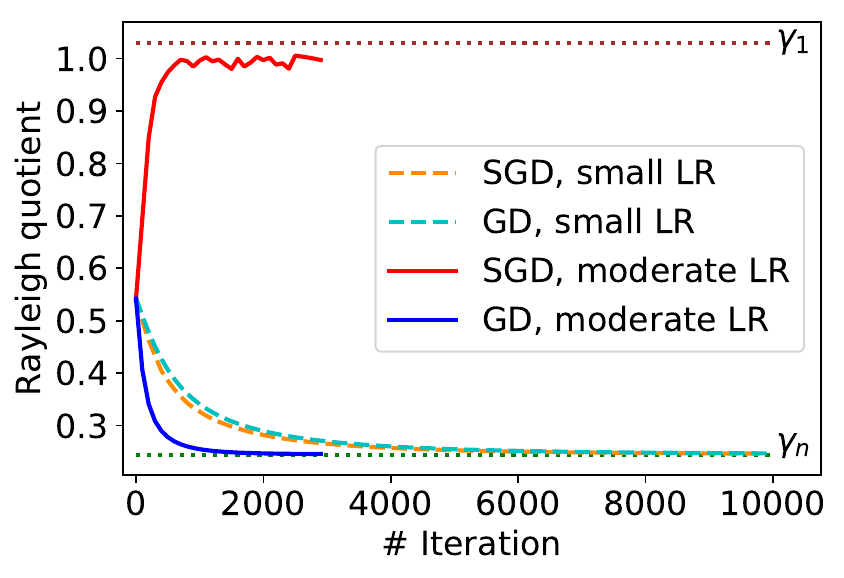}\label{fig:linear-RQ}}\hspace{.2in}
    \subfigure[Neural network on a subset of FashionMNIST]{\includegraphics[width=0.49\linewidth]{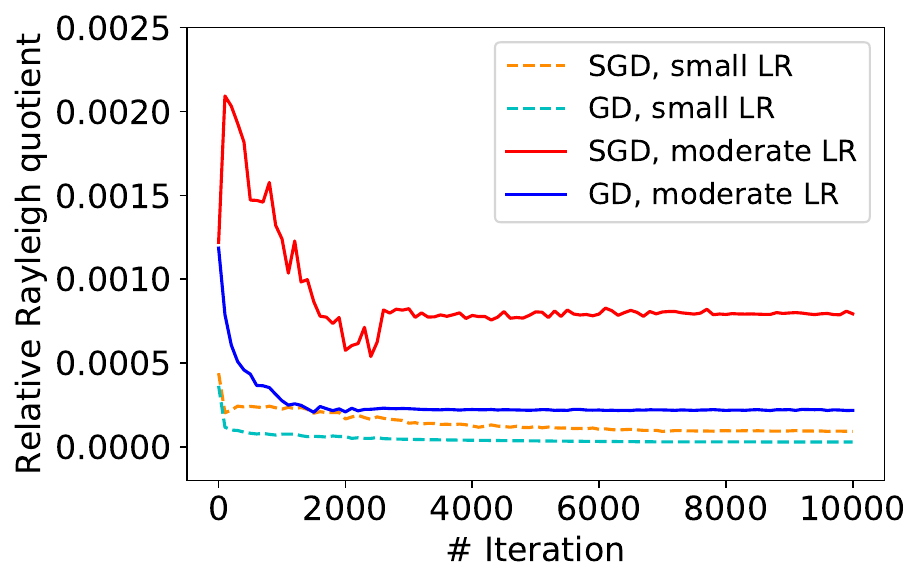}\label{fig:mnist-RQ}}
    \vspace{-3mm}
    \caption{\small Comparison of the (relative) Rayleigh quotients.
    \textbf{(a)}: A linear regression example.
    We randomly draw $100$ samples from a $10,000$-dimensional space as described in Section~\ref{sec:theory}, where $\zeta\sim \Ucal([0.5,1])$.
    The small learning rate scheme is specified by $(\eta', k_2) = (0.2, 10^4)$, and the moderate learning rate scheme is specified by $(\eta, \eta', k_1, k_2)=(1.05, 0.1, 2\times 10^3, 3\times 10^3)$.
    Numerical results show the Rayleigh quotient converges to its maximum for SGD with moderate learning rate, and converges to its minimum for GD and SGD with small learning rate, which verifies Theorems~\ref{thm:sgd-bias-informal}, \ref{thm:gd-bias-informal} and \ref{thm:sgd-bias-smallLR-informal}.
    \rebuttal{\textbf{(b)}: A neural network example. The plots are averaged over $10$ runs.
    We randomly draw $2,000$ samples from FashionMNIST as the training set.
    The model is a $5$-layer convolutional neural network. 
    The small learning rate scheme is specified by $(\eta', k_2) = (10^{-3}, 10^4)$, and the moderate learning rate scheme is specified by $(\eta, \eta', k_1, k_2)=(10^{-2}, 10^{-3}, 2.5\times 10^3, 10^4)$.
    Since neural network is non-convex, we compare the \emph{relative Rayleigh quotient} of the concerned algorithms, i.e., the Rayleigh quotient of the convergence directions divided by the maximum absolute eigenvalue of the Hessian (see Appendix \ref{sec:appendix-exps-nn}).
    }
    \vspace{-3mm}
    }
    \label{fig:linear-mnist-RQ}
\end{figure}

%The plots show that SGD with moderate learning rate converges along relatively large eigenvalue directions while GD/SGD with small learning rate converges along relatively small eigenvalue directions, which provide evidence from neural networks to support our theorems.

\subsection{Effects of the directional bias}

Next we justify the benefit of the particular directional bias of SGD with moderate learning rate.

Recall the hypothesis class
\(\Hcal_{\Scal} \)
(Eq.~\eqref{eq:hypothesis}) for SGD and GD.
Then for an algorithm output $w^\algo$, we have the following generalization error decomposition~\citep{shalev2014understanding},
\begin{equation*}
  L_{\Dcal}(w^\algo) - \inf_{w} L_{\Dcal}(w) = \underbrace{L_{\Dcal}(w^\algo) - \inf_{w'\in\Hcal_{\Scal}} L_{\Dcal}(w')}_{\Delta(w^\algo),\ \text{estimation error}} + \underbrace{\inf_{w'\in\Hcal_{\Scal}} L_{\Dcal}(w') - \inf_{ w} L_{\Dcal}( w)}_{\text{approximation error}}.
\end{equation*}
The \emph{approximate error} is an intrinsic error determined by the hypothesis class, and is not improvable unless enlarging the hypothesis class.
In contrast, the \emph{estimation error} $\Delta(w^\algo)$ is determined by the algorithm as well as its hyperparameters.
Thus, in the following theorem, we use the estimation error to compare the generalization performance of the SGD and GD outputs in different learning rate regimes.

\begin{thm}[Effects of the directional bias, informal]\label{thm:sgd-opt-gd-subopt-informal}
  Let
  \(
    \Wcal_\alpha := \bigl\{w \in\Hcal_{\Scal} : L_{\Scal}(w) = \alpha \bigr\}
  \)
  be an $\alpha$-level set of the training loss $L_\Scal(w)$. Let
  \(
    \Delta_\alpha^* := \inf_{w\in\Wcal_\alpha} \Delta(w)
  %   =  \inf_{w\in\Wcal} L_{\Dcal}(w) - \inf_{ w'\in\Hcal_\Scal} L_{\Dcal}( w')
  \)
  be the minimum estimation error within the $\alpha$-level set $\Wcal_\alpha$. 
Under the same conditions as Theorems~\ref{thm:sgd-bias-informal}- \ref{thm:sgd-bias-smallLR-informal} and assuming that $b < n/2 - \bigTht{\nu}$,
then the following holds with high probability:
\begin{itemize}[leftmargin=*,nosep]
    \item The output of \eqref{eq:sgd} with moderate LR~\eqref{eq:moderate-lr} in Theorem \ref{thm:sgd-bias-informal} satisfies 
    \(\Delta(w^\sgd) < (1+\epsilon) \cdot \Delta_\alpha^* ,\)
    where $\alpha$ is the training loss of $w^\sgd$ and $\epsilon$ is a small constant;
    \item The output of \eqref{eq:gd} with moderate or small LR~\eqref{eq:gd-lr} in Theorem \ref{thm:gd-bias-informal} satisfies
    \(\Delta(w^\gd) > M \cdot \Delta_\alpha^* ,\)
    where $\alpha$ is the training loss of $w^\gd$ and $M = (1-\epsilon)\cdot \gamma_1/\gamma_n$ is a constant larger than $1+\epsilon$;
    \item The output of \eqref{eq:sgd} with small LR~\eqref{eq:small-lr} in Theorem \ref{thm:sgd-bias-smallLR-informal} satisfies  
    \(\Delta(w^\sgd) > M \cdot \Delta_\alpha^* ,\)
    where $\alpha$ is the training loss of $w^\sgd$ and {$M = (1-\epsilon)\cdot\gamma_1/\gamma_n$} is a constant larger than $1+\epsilon$.
\end{itemize}
\end{thm}

Theorem~\ref{thm:sgd-opt-gd-subopt-informal} suggests that:
\rebuttal{(1) in the moderate learning rate regime, there is a separation between the test error of SGD and that of GD. In detail, early stopped SGD finds a nearly optimal solution thanks to its particular directional bias. In contrast, early stopped GD can only find a suboptimal one;}
and (2) in the small learning rate regime, however, SGD no longer admits the dedicated directional bias for moderate learning rate. Instead it behaves similarly as GD, and hence outputs suboptimal solutions when early stopping is adopted.
%\rebuttal{(3) moreover, the gap between the optimality constants in Theorem \ref{thm:sgd-opt-gd-subopt-informal} indicates that SGD with moderate learning rate generalizes better than GD or SGD with small learning rate.}

\begin{rmk}
  In practice it is usually intractable and unnecessary to achieve the exact global minima of the training loss;
  instead we often \emph{early stop} the algorithm once obtaining a small enough training loss, i.e., reaching an $\alpha$-level set. 
  % Thus in order to characterize the generalization ability of an algorithm,
  % it is reasonable to compare its early stopped output with the best possible outcome within the $\alpha$-level set, as in our definition.
  In this spirit, Theorem~\ref{thm:sgd-opt-gd-subopt-informal} compares the generalization ability of SGD with moderate learning rate vs. GD/SGD with small learning rate within a level set.
\end{rmk}

\begin{rmk}
\rebuttal{
We note the second conclusion in Theorem \ref{thm:sgd-opt-gd-subopt-informal} is also obtained by~\citet{nakkiran2020learning} for a different purpose.
In specific, \citet{nakkiran2020learning} show the separation between the test error of GD with ``large'' and annealing learning rate, and test error of GD with small learning rate. 
However, the ``large'' learning rate for GD in their analysis is linear in the training sample size and is not practical as we have discussed in Remark \ref{rmk:gd-effective-lr}. 
In contrast, we show that under the practically used moderate learning rate, there is a separation between the generalization abilities of SGD and GD.
To our knowledge, our work gives the first theoretical justification of the phenomenon that SGD outperforms GD when the learning rate is moderate.
}
\end{rmk}

\paragraph{Experiment}
Figure~\ref{fig:mnist-generalization} shows the generalization performance of a neural network trained by SGD and GD in both moderate and small learning rate regimes. The setup details are included in Appendix \ref{sec:appendix-exps}.
We can observe that
(1) SGD with moderate learning rate generalizes the best, 
and (2) GD and SGD with small learning rate perform similarly, but both are worse than SGD with moderate learning rate.
The empirical results suggest SGD with moderate learning has certain benign regularization effect.
This is explained by the distinct directional bias for SGD with moderate learning rate shown in previous theorems.

%% RELATED WORKS

\section{Related Work}

\begin{wrapfigure}{r}{0.4\linewidth}
  \centering
  \vspace{-0mm}
  \includegraphics[width=1.0\linewidth]{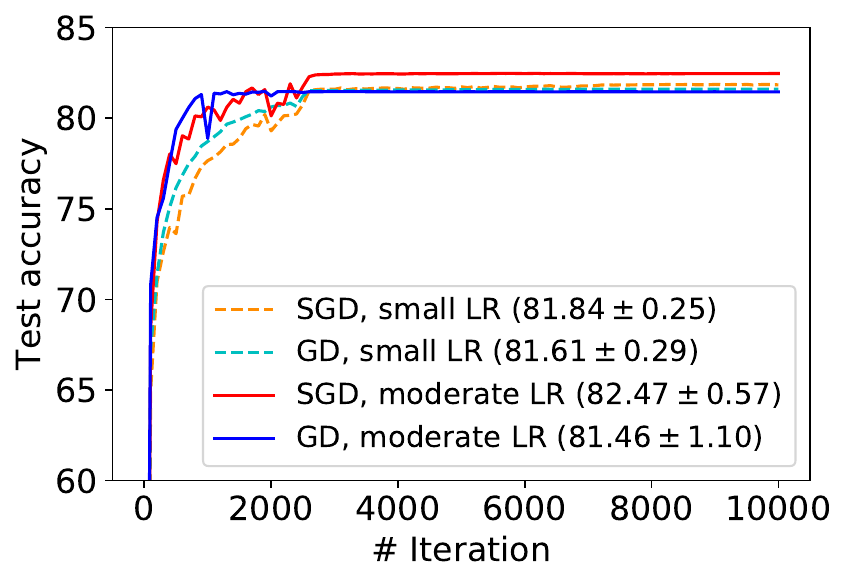}
  \vspace{-3mm}
  \caption{\small
  {The test accuracy of a neural network on a subset of FashionMNIST. The plots are averaged over $10$ runs. The experimental setting is identical to that in Figure \ref{fig:mnist-RQ}.
  The plots show that SGD with moderate learning rate achieves the highest test accuracy, and GD and SGD with small learning rate perform similarly, but are worse than the former.}
  }
  \label{fig:mnist-generalization}
  \vspace{-0mm}
\end{wrapfigure}

In this section, we review related
works and discuss their similarities and differences to our work.

\textbf{The implicit bias of GD}~
The implicit bias of GD has been extensively studied in recent years.
We summarize several representative results as follows.
For homogeneous model with exponentially-tailed loss, GD converges along the max-margin direction~\citep{gunasekar2017implicit,gunasekar2018characterizing,gunasekar2018implicit,soudry2018implicit,ma2018implicit,ji2019implicit,ji2020gradient,nacson2019convergence}.
For least square problem and its variants, GD is biased towards minimum-norm solution~\citep{gunasekar2018characterizing,ali2019continuous,suggala2018connecting}.
Note that this is also the foundation for the learning theory in the interpolation regime such as \emph{double descent}~\citep{belkin2019two} and \emph{benign overfitting}~\citep{bartlett2020benign}.
For matrix factorization and linear network, \citet{gunasekar2017implicit,li2018algorithmic} show the GD solution minimizes nuclear norm in special cases;
more generally, \citet{arora2019implicit,ji2018gradient} show GD balances/aligns layers; \citet{moroshko2020implicit} suggest the GD bias relies on initialization scale and training accuracy.
For infinite-width network, \citet{chizat2020implicit} show GD finds a max-margin classifier in a functional space.
The regularization theory for GD is fruitful; 
While some of them can be applied to SGD with small learning rate as have discussed~\citep{nacson2019stochastic,gunasekar2018characterizing,ali2020implicit}, none of them can be carried over to SGD in the moderate learning rate regime.
As far as we know, our paper is the first to study the regularization effect of SGD with moderate learning rate.

\textbf{The stability of SGD}~
\emph{Stability} is another approach to justify the generalization ability of SGD, where a stable algorithm is guaranteed to have small generalization error~\citep{bousquet2002stability}.
Along this line, several works show that SGD is stable under certain assumptions and therefore generalizes  well~\citep{hardt2016train,kuzborskij2018data,charles2018stability,bassily2020stability}.
More interestingly, \citet{charles2018stability} show a simple example where SGD is stable but GD is not, which partly explains the empirically superior performance of SGD.
We also aim to justify the benefits of SGD, but take a different approach from the algorithmic regularization.

\textbf{The escaping behavior of SGD}~
A popular theory~\citep{jastrzkebski2017three,zhu2018anisotropic,simsekli2019heavy} attributes the regularization effect of SGD to its behavior of escaping from sharp minima. 
These works are built upon the continuous approximation of SGD via stochastic differential equations~\citep{li2017stochastic,hu2017diffusion,xu2018global,simsekli2019heavy}.
However it requires a small learning rate for the approximation to hold.
In contrast, our main interest is in the moderate learning rate regime. 
Another related work in this line is \citep{wu2018sgd}, where they study the dynamical stability of minima and how SGD chooses them, but they do not show a directional bias introduced in our work.

\textbf{Non-small learning rate}~
The regularization effect of non-small learning rate has received increasing attentions recently.
% From optimization perspective, \citet{ge2019step} shows a learning rate decaying rule that achieves optimal convergence rate for SGD solving least square.
% \cite{zhou2019towards} show a noisy gradient dynamic with a dedicated learning rate scheme can achieve global optimality for a non-convex objective. 
Theoretically, 
\citet{li2019towards,haochen2020shape} study the generalization of certain stochastic dynamics equipped with annealing learning rate scheme. However, their results cannot cover vanilla SGD.
\citet{lewkowycz2020large} study the role of initial large learning rate for infinity-width network.
Our work is motivated by \citet{nakkiran2020learning}, which shows that a large initial learning rate helps GD generalize. This result can be recovered from our theorems; more importantly, we characterize the directional bias of SGD.
Empirically, \citet{jastrzebski2020break,leclerc2020two} investigate the impact of two-phase learning rate for training with SGD, but they do not provide any theoretical analysis.

%% DISCUSSIONS

\section{Discussions}

% In this part we discuss insights for practice as well as challenges for theory related to our work.

\textbf{Linear scaling rule}~
Linearly enlarging the initial learning rate according to batch size, or the \emph{linear scaling rule}~\citep{goyal2017accurate}, is an important folk art for paralleling SGD with large batch size while maintaining a good generalization performance.
Interestingly, the linear scaling rule arises naturally from Theorem~\ref{thm:sgd-bias-informal}, where LR~\eqref{eq:moderate-lr} suggests the initial learning rate $\eta$ to scale linearly with batch size $b$ to guarantee the desired directional bias.
Our theory thus partly explains the mechanism of the linear scaling rule.

\textbf{Overrunning SGD with high learning rate}~
Besides the moderate initial learning rate, another key ingredient in Theorem~\ref{thm:sgd-bias-informal} is a sufficiently large $k_1$, i.e., SGD needs to run with moderate learning rate for sufficiently long time (to fit small eigenvalue directions).
This requirement coincidentally agrees with the folk art to \emph{overrun SGD with high learning rate}.
For example, see Figure~4 in~\citep{he2016deep}: from the $\sim2\times 10^5$-th to the $\sim 3\times 10^5$-th iteration, even the training error seems to make no progress, practitioners let SGD run with a relatively high learning rate for nearly $10^5$ iterates.
Our theory sheds light on understanding the hidden benefits in this ``overrunning'' phase:
indeed the loss is not decreasing since SGD with high learning rate cannot fit the large eigenvalue directions, but on the other hand the overrunning lets SGD fit the small eigenvalue directions better, 
which in the end leads to the directional bias that SGD converges along the large eigenvalue directions according to Theorem~\ref{thm:sgd-bias-informal}.
Thus overrunning SGD with high learning rate is helpful.

\textbf{Key technical challenges}~
With non-small learning rate, analyzing SGD is usually hard since measure concentration turns vacuous and as a result one cannot relate the SGD iterates to that of GD.
Alternatively, we control the SGD iterates epoch by epoch, then bound their composition to characterize the long run behavior of SGD.
However, controlling the epoch-wise update of SGD is highly non-trivial, since in different epochs the sequence of stochastic gradient steps varies, and they do not commute due to the issue of matrix multiplication. 
To overcome this difficulty, we adopt techniques from matrix perturbation theory~\citep{horn2012matrix} and conduct an analysis in the overparameterized regime.
We believe these techniques are of independent interest.

\section{Conclusion and Future Work}
We characterize a distinct directional regularization effect of SGD with moderate learning rate,
where SGD converges along the large eigenvalue directions of the data matrix.
In contrast, neither
GD nor SGD with small learning rate can achieve this effect.
Moreover, we show this directional bias benefits generalization when early stopping is adopted.
Finally, our theory explains several folk arts used in practice for SGD hyperparameter tuning.

\rebuttal{As an initial attempt, our results are limited to overparameterized linear models, and we ignore other factors that may contribute to the good generalization of SGD for training neural networks, e.g., network structures and explicit regularization.
It is left as a future work to extend our results to nonlinear/nonconvex models (with explicit regularization).
}

\subsubsection*{Acknowledgement}
We would like to thank the anonymous reviewers for their helpful comments.
QG is partially supported by the National Science Foundation CAREER Award 1906169,  IIS-2008981 and Salesforce Deep Learning Research Award. DZ is supported by the Bloomberg Data Science Ph.D. Fellowship. 
VB is supported in part by NSF CAREER grant 1652257, ONR Award N00014-18-1-2364 and the Lifelong Learning Machines program from DARPA/MTO.
JW is supported by ONR Award N00014-18-1-2364.
The views and conclusions contained in this paper are those of the authors and should not be interpreted as representing any funding agencies.

\bibliography{ref}
\bibliographystyle{iclr2021_conference}

\newpage
\appendix

%% APPENDIX

\section{Preliminaries }\label{sec:appendix_preliminary}

\subsection{Additional notations}
We adopt the notations and settings in main text.
In addition we make the following notations.

For a vector $x\in\Rbb^d$, denote its direction as $\bar{x} := \frac{x}{\norm{x}_2}$. 
For simplicity assume the training data $\{x_1,\dots,x_n\}$ are linear independent.
For training data $x_i,\ i\in[n]$, we denote $\lambda_i = \norm{x_i}_2^2$, then by construction we have $\lambda_i \in (0,1]$.
Without loss of generality let $\lambda_1 \ge \cdots \ge \lambda_n$.
We define
\begin{gather*}
  X := \bracket{x_1, \dots, x_n} \in \Rbb^{d\times n}, \\
  X_{-1} := (x_2,x_3,\dots,x_n)\in\Rbb^{d\times (n-1)}.
\end{gather*}
Then based on the above definitions, we define the following two projection operators
\begin{gather*}
  P = X(X^\top X)^{-1}X^\top,\\
  P_\perp = I - P.
\end{gather*}
Clearly for any $v\in\Rbb^d$, $Pv$ projects $v$ onto subspace $\linspace\set{x_1,\dots,x_n}$, while $P_\perp v$ projects $v$ onto the orthogonal complement of $\linspace\set{x_1,\dots,x_n}$.
Furthermore we introduce two more projection operators
\begin{gather*}
  P_{-1} = X_{-1}(X_{-1}^\top X_{-1})^{-1}X_{-1}^\top,\\
  P_1 = P - P_\inv = I - P_\perp - P_\inv.
\end{gather*}
For any $v\in\Rbb^d$, $P_\inv v$ projects $v$ onto subspace $\linspace\set{x_2,\dots,x_n}$, while $P_1 v$ projects $v$ into the orthogonal complement of $\linspace\set{x_2,\dots,x_n}$ with respect to $\linspace\set{x_1,\dots,x_n}$.
In the following, we often write the column space of $P$, which refers to $\set{P v: v\in \Rbb^d}$, similarly for $P_\inv$, $P_1$ and $P_\perp$ as well.
Clearly the column space of $P$ is also $\linspace\set{x_1,\dots, x_n}$; the column space of $P_\inv$ is also the data manifold $\linspace\set{x_2,\dots,x_n}$.
We highlight that the total space $\Rbb^d$ can be decomposed as the direct sum of the column space of $P_\inv$, $P_1$ and $P_\perp$, i.e., \(I = P_\inv + P_1 + P_\perp.\)
By definition, it is easy to verify that
\begin{gather*}
  P_\perp X = 0,\\
  P X = X,\\
  P_1 X = \bracket{P_1 x_1, 0,\dots, 0},\\
  P_\inv X = \bracket{P_\inv x_1, x_2,\dots, x_n}.
\end{gather*}
Then we define the following matrices which will be repeatedly used in the subsequent proof. 
\begin{gather*}
  H:= XX^\top,\\
  H_\inv := (P_\inv X) (P_\inv X)^\top,\\
  H_1:= (P_1 X) (P_1 X)^\top, \\
  H_c := (P_\inv x_1) (P_1 x_1)^\top + (P_1 x_1)(P_\inv x_1)^\top.
\end{gather*}
Based on the above definitions, it is easy to show that
\begin{equation*}
  \begin{split}
    H &= (P_1 X + P_\inv X) (P_1 X + P_\inv X)^\top \\
    &= (P_\inv X) (P_\inv X)^\top + (P_1 X) (P_1 X)^\top + (P_1 X) (P_\inv X)^\top + (P_\inv X) (P_1 X)^\top \\
    &= H_\inv + H_1 + H_c.
  \end{split}
\end{equation*}

\subsection{Lemmas}
We present the following theorems and lemmas as preparation for our analysis.

\begin{thm*}[Gershgorin circle theorem, restated for symmetric matrix]
  Let $A \in \Rbb^{n\times n}$ be a symmetric matrix. Let $A_{ij}$ be the entry in the $i$-th row and the $j$-th column.
  Let \[R_i(A) := \sum_{j\ne i}\abs{A_{ij}},\ i=1,\dots,n.\]
  Consider $n$ Gershgorin discs
  \[D_i(A) := \set{z\in \Rbb, \abs{z-A_{ii}} \le R_i (A)},\ i=1,\dots,n.\]
  The eigenvalues of $A$ are in the union of Gershgorin discs
  \[
    G(A) := \bigcup_{i=1}^n D_i(A).
  \]
  Furthermore, if the union of $k$ of the $n$ discs that comprise $G(A)$ forms a set $G_k(A)$ that is disjoint from the remaining $n-k$ discs, then $G_k(A)$ contains exactly $k$ eigenvalues of $A$, counted according to their algebraic multiplicities.
\end{thm*}
\begin{proof}
  See, e.g., \citet{horn2012matrix}, Chap 6.1, Theorem 6.1.1.
\end{proof}

\begin{thm*}[Hoffman-Wielandt theorem, restated for symmetric matrix]
  Let $A, E\in\Rbb^{n\times n}$ be symmetric.
  Let $\lambda_1,\dots,\lambda_n$ be the eigenvalues of $A$, arranged in decreasing order.
  Let $\hat{\lambda}_1,\dots,\hat{\lambda}_n$ be the eigenvalues of $A+E$, arranged in decreasing order.
  Then
  \[
    \sum_{i=1}^n \bigl| \hat{\lambda}_i - \lambda_i\bigr|^2 \le \norm{E}_F^2.
  \]
\end{thm*}
\begin{proof}
  See, e.g., \citet{horn2012matrix}, Chapter 6.3, Theorem 6.3.5 and Corollary 6.3.8.
\end{proof}

\begin{lem}\label{lem:near-orthogonal}
  Let $d\ge 4\log(2n^2/\delta)$ for some $\delta \in (0,1)$. Then with probability at least $1-\delta$, we have
  \begin{equation*}
    \abs{\abracket{\bar{x}_i, \bar{x}_j}} < \iota := \bigOT{\frac{1}{\sqrt{d}}},\quad i\ne j.
  \end{equation*}
\end{lem}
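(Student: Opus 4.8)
The plan is to reduce the claim to a standard concentration inequality for the inner product of two independent uniform vectors on the sphere. By the data-generating process, $x_i = \zeta_i\cdot\xi_i$ with $\zeta_i\in(0,1]$ and $\xi_i\sim\Ucal(S^{d-1})$ drawn i.i.d., so that $\bar x_i = x_i/\norm{x_i}_2 = \xi_i$: the magnitudes $\zeta_i$, being strictly positive, cancel and play no role whatsoever in this lemma. Hence it suffices to show that, with probability at least $1-\delta$, $\abs{\abracket{\xi_i,\xi_j}} < \iota$ for all $i\neq j$, with $\iota = \bigOT{1/\sqrt d}$.

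Next I would fix a pair $i\neq j$ and condition on $\xi_j$. By rotational invariance of the uniform measure on $S^{d-1}$, the conditional law of $\abracket{\xi_i,\xi_j}$ equals that of $\abracket{\xi_i,e_1}$, i.e. the first coordinate of a single uniform sphere vector. The key estimate is the sub-Gaussian tail bound
\begin{equation*}
  \Pr\bigl[\,\abs{\abracket{\xi,e_1}} \ge t\,\bigr] \le 2\exp\!\bigl(-d t^2/2\bigr),\qquad t\in(0,1),
\end{equation*}
which can be obtained, e.g., from the Gaussian representation $\xi = g/\norm{g}_2$ with $g\sim N(0,I_d)$ (controlling $g_1$ by a Gaussian tail and $\norm{g}_2$ by its concentration around $\sqrt d$), or directly from the fact that $\abracket{\xi,e_1}^2$ follows a $\mathrm{Beta}(1/2,(d-1)/2)$ distribution. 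Since the right-hand side is independent of $\xi_j$, the same bound holds unconditionally for $\abs{\abracket{\xi_i,\xi_j}}$.

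Finally I would apply a union bound over the at most $n^2$ ordered pairs $(i,j)$ with $i\neq j$ and set $\iota := \sqrt{2\log(2n^2/\delta)/d}$, so that $2n^2\exp(-d\iota^2/2)\le\delta$. The hypothesis $d\ge 4\log(2n^2/\delta)$ guarantees $\iota\le 1/\sqrt2<1$, which is precisely what makes the tail inequality applicable at $t=\iota$; and $\iota=\bigOT{1/\sqrt d}$ as claimed. I do not expect a genuine obstacle here — this is a routine measure-concentration estimate — and the only steps meriting care are the rotational-invariance reduction to the one-dimensional marginal and checking that the chosen $\iota$ falls in the range $(0,1)$ where the sub-Gaussian tail is valid, which is exactly what the assumption on $d$ buys.
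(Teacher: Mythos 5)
Your proof is correct, and it takes a mildly different route from the paper's. The paper works entirely in the Gaussian representation $z_i = \|z_i\|_2 \cdot \bar x_i$ and bounds the \emph{ratio} $\abs{\abracket{z_i,z_j}}/(\norm{z_i}_2\norm{z_j}_2)$ by two separate sub-exponential concentration estimates: one for the bilinear form $\abracket{z_i,z_j}$ (written as a difference of squares of sub-Gaussians) and one for the norm $\norm{z_i}_2^2$, with the hypothesis $d\ge 4\log(2n^2/\delta)$ used precisely to guarantee $\norm{z_i}_2^2\ge d/2$ so the ratio can be controlled. You instead condition on one vector and invoke rotational invariance to reduce $\abracket{\xi_i,\xi_j}$ to the one-dimensional marginal $\abracket{\xi,e_1}$, then apply the standard spherical-cap (equivalently $\mathrm{Beta}(1/2,(d-1)/2)$) tail $\Pr[\abs{\abracket{\xi,e_1}}\ge t]\le 2e^{-dt^2/2}$; the union bound and the choice $\iota=\sqrt{2\log(2n^2/\delta)/d}$ then close the argument. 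Both are routine and yield $\iota=\bigOT{1/\sqrt d}$; your reduction is arguably cleaner since it needs only one tail bound rather than a joint control of numerator and denominator, while the paper's version is self-contained at the level of sub-Gaussian/sub-exponential calculus without appealing to the cap-measure or Beta-marginal fact. The only caveat is cosmetic: later lemmas in the appendix reuse the paper's specific constant $\iota = 2\sqrt{\log(2n^2/\delta)/d}$, and your (smaller) constant $\sqrt{2\log(2n^2/\delta)/d}$ would propagate harmlessly but should be stated consistently if substituted.
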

\begin{proof}
See Section \ref{sec:proof_near-orthogonal}.
\end{proof}

By Lemma~\ref{lem:near-orthogonal} we can assume $d \ge \poly{n}$ such that $n\iota$ is sufficiently small depends on requirements.

The following two lemmas characterize the projected components of each training data onto the column space of $P_1$, $P_{-1}$, and $P_{\perp}$.
\begin{lem}\label{lem:proj-xinv}
  For $x_j \ne x_1$, we have
 \begin{itemize}[leftmargin=*]
     \item \(  P_{-1}x_j = x_j; \)
     \item \(P_1 x_j = 0;\)
     \item \(P_\perp x_j = 0.\)
 \end{itemize}
\end{lem}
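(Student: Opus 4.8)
The plan is to exploit the fact that for $x_j \neq x_1$, the vector $x_j$ is one of the columns $x_2,\dots,x_n$ that span the column space of $X_{-1}$, so it lies entirely in that subspace. First I would observe that $x_j \in \linspace\{x_2,\dots,x_n\}$ is immediate from the definition of $X_{-1} = (x_2,\dots,x_n)$ together with the standing assumption that $x_1,\dots,x_n$ are linearly independent (so in particular $x_j$ really is a nonzero vector sitting in that span, and the Gram matrix $X_{-1}^\top X_{-1}$ is invertible, making $P_{-1}$ well defined). Since $P_{-1} = X_{-1}(X_{-1}^\top X_{-1})^{-1}X_{-1}^\top$ is the orthogonal projector onto $\linspace\{x_2,\dots,x_n\}$, and $x_j$ already lies in that subspace, we get $P_{-1}x_j = x_j$. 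Concretely, writing $x_j = X_{-1}e$ where $e$ is the appropriate standard basis vector of $\Rbb^{n-1}$, a direct computation gives $P_{-1}x_j = X_{-1}(X_{-1}^\top X_{-1})^{-1}X_{-1}^\top X_{-1} e = X_{-1} e = x_j$.

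Next, for the second and third claims I would use the decomposition of the identity established in the appendix preliminary, $I = P_{-1} + P_1 + P_\perp$, together with the fact that the column spaces of $P_{-1}$, $P_1$, $P_\perp$ are mutually orthogonal (they form a direct sum decomposition of $\Rbb^d$ into orthogonal pieces, since $P_\perp$ is the orthogonal complement of $\linspace\{x_1,\dots,x_n\}$ and $P_1 = P - P_{-1}$ is orthogonal to $P_{-1}$ inside $\linspace\{x_1,\dots,x_n\}$). Applying this identity to $x_j$ gives $x_j = P_{-1}x_j + P_1 x_j + P_\perp x_j = x_j + P_1 x_j + P_\perp x_j$, whence $P_1 x_j + P_\perp x_j = 0$. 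Because $P_1 x_j$ and $P_\perp x_j$ live in orthogonal subspaces, this forces $P_1 x_j = 0$ and $P_\perp x_j = 0$ separately. Alternatively one can note directly that $P_\perp X = 0$ (listed in the preliminaries) already gives $P_\perp x_j = 0$ for every $j$, and then $P_1 x_j = (P - P_{-1})x_j = x_j - x_j = 0$ using $Px_j = x_j$ and the first bullet.

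I do not anticipate a genuine obstacle here: this is a bookkeeping lemma whose entire content is that $x_j$ for $j \neq 1$ has no component along the "special" direction $P_1 x_1$ or along the orthogonal complement of the data manifold. The only point requiring a word of care is making sure the projectors are well defined, which is why I would explicitly invoke the linear-independence assumption on $\{x_1,\dots,x_n\}$ (guaranteeing $X_{-1}^\top X_{-1}$ is invertible) stated at the start of the appendix. Everything else is a one-line consequence of the definitions of the projection operators and the orthogonal direct sum $I = P_{-1} + P_1 + P_\perp$.
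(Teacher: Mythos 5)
Your proposal is correct and matches the paper's approach: the paper's own proof is the one-line remark that these identities follow from the construction of the projection operators, and your argument simply fills in those routine details (that $x_j$ lies in the column space of $X_{-1}$, hence is fixed by $P_{-1}$ and annihilated by $P_1 = P - P_{-1}$ and $P_\perp$).
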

\begin{proof}
  These are by the construction of the projection operators.
\end{proof}

\begin{lem}\label{lem:proj-x1}
Assume $\sqrt{n}\iota\le 1/4$.
 With probability at least $1-\delta$, we have
 \begin{itemize}[leftmargin=*]
     \item \( 0\le \|P_{-1}\bar{x}_1\|_2\le 2\sqrt{n}\iota; \)
     \item \(\sqrt{1 - 4n\iota^2} \le \norm{P_1 \bar{x}_1}_2 \le 1;\)
     \item \(P_\perp {x}_1 = 0.\)
 \end{itemize}
\end{lem}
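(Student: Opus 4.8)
\textbf{Proof proposal for Lemma~\ref{lem:proj-x1}.}

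The plan is to decompose $\bar x_1 = P_\perp \bar x_1 + P_{-1}\bar x_1 + P_1 \bar x_1$ and control each piece. The claim $P_\perp x_1 = 0$ is immediate: $P_\perp X = 0$ by construction (the third bullet of Lemma~\ref{lem:proj-xinv} is the analogous statement for the other columns, and here it holds trivially since $x_1$ is a column of $X$), so $P_\perp x_1 = 0$ and hence $P_\perp \bar x_1 = 0$. The content is therefore in estimating $\|P_{-1}\bar x_1\|_2$ and $\|P_1 \bar x_1\|_2$, and since $P_{-1} + P_1 + P_\perp = I$ on the column space of $X$, the two norms satisfy $\|P_{-1}\bar x_1\|_2^2 + \|P_1\bar x_1\|_2^2 = 1$; so it suffices to prove the upper bound $\|P_{-1}\bar x_1\|_2 \le 2\sqrt{n}\iota$, and the lower bound on $\|P_1\bar x_1\|_2$ then follows as $\sqrt{1 - 4n\iota^2}$. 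The upper bounds $\|P_{-1}\bar x_1\|_2 \le 1$ (it equals a projection of a unit vector, so this uses nothing) and $\|P_1\bar x_1\|_2 \le 1$ are similarly automatic.

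The crux is the bound $\|P_{-1}\bar x_1\|_2 \le 2\sqrt n\,\iota$. First I would invoke Lemma~\ref{lem:near-orthogonal}: with probability at least $1-\delta$, $|\langle \bar x_i, \bar x_j\rangle| < \iota$ for all $i\ne j$; condition on this event. Write $P_{-1} = X_{-1}(X_{-1}^\top X_{-1})^{-1}X_{-1}^\top$, so $P_{-1}\bar x_1 = X_{-1}(X_{-1}^\top X_{-1})^{-1} v$ where $v := X_{-1}^\top \bar x_1 \in \Rbb^{n-1}$ has entries $\langle x_j, \bar x_1\rangle = \sqrt{\lambda_j}\langle \bar x_j,\bar x_1\rangle$ for $j=2,\dots,n$; since $\lambda_j \le 1$, each $|v_j| \le \iota$ and so $\|v\|_2 \le \sqrt{n-1}\,\iota \le \sqrt n\,\iota$. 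Then $\|P_{-1}\bar x_1\|_2^2 = v^\top (X_{-1}^\top X_{-1})^{-1} v \le \|(X_{-1}^\top X_{-1})^{-1}\|_{\mathrm{op}}\,\|v\|_2^2$, so it remains to show $\|(X_{-1}^\top X_{-1})^{-1}\|_{\mathrm{op}} \le 4$ under $\sqrt n\,\iota \le 1/4$ — i.e., that the smallest eigenvalue of the Gram matrix $G := X_{-1}^\top X_{-1}$ is at least $1/4$. I would get this from the Gershgorin circle theorem (stated above): $G$ is symmetric with diagonal entries $G_{jj} = \lambda_j$ and off-diagonal $G_{jk} = \sqrt{\lambda_j\lambda_k}\langle\bar x_j,\bar x_k\rangle$, so $R_j(G) = \sum_{k\ne j}|G_{jk}| \le (n-2)\iota \le n\iota$; hence every eigenvalue of $G$ is at least $\lambda_j - n\iota \ge \lambda_n - n\iota$. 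Here I need $\lambda_n$ bounded away from $0$ — this is exactly the hypothesis $\lambda_n > \smallO 1$ carried throughout the paper (together with $d \ge \poly n$, which makes $n\iota$, and $\sqrt n\iota$, as small as needed), so $\lambda_n - n\iota \ge 1/4$ holds in the relevant regime, giving $\|G^{-1}\|_{\mathrm{op}} \le 4$. Chaining the inequalities, $\|P_{-1}\bar x_1\|_2^2 \le 4\cdot n\iota^2$, i.e. $\|P_{-1}\bar x_1\|_2 \le 2\sqrt n\,\iota$, as claimed; and then $\|P_1\bar x_1\|_2 = \sqrt{1 - \|P_{-1}\bar x_1\|_2^2} \ge \sqrt{1 - 4n\iota^2}$.

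The main obstacle is the lower bound on $\lambda_{\min}(X_{-1}^\top X_{-1})$, i.e. making sure the near-orthogonal training data actually form a well-conditioned system; everything else is bookkeeping with projections and the Pythagorean identity $P_{-1}+P_1+P_\perp = I$. The Gershgorin argument handles this cleanly provided one is careful that the bound $\lambda_n - n\iota \ge 1/4$ is what the standing assumptions ($d \ge \poly n$, $\lambda_n > \smallO 1$) are designed to guarantee; I would state this dependence explicitly rather than hide it, since the constant $1/4$ in the hypothesis $\sqrt n\iota \le 1/4$ and the constant $2$ in the conclusion are both driven by it. One should also double-check that $x_1 \neq x_j$ for all $j \ge 2$ (which follows from linear independence of the training data, assumed in Section~\ref{sec:appendix_preliminary}) so that $P_1 x_1 \neq 0$ and the decomposition is nondegenerate, but this is not where the difficulty lies.
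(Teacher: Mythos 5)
Your proposal is correct in substance but takes a genuinely different route from the paper's. The paper exploits the fact that $P_{-1}$ depends only on the directions $\bar x_2,\dots,\bar x_n$: it replaces $X_{-1}$ by a matrix $Z_{-1}$ of i.i.d.\ standard Gaussians spanning the same column space, observes that conditionally on $\bar x_1$ the vector $Z_{-1}^\top\bar x_1$ is standard Gaussian in $\Rbb^{n-1}$ (so $\norm{Z_{-1}^\top\bar x_1}_2^2\sim\chi^2_{n-1}$), and controls $\norm{Z_{-1}(Z_{-1}^\top Z_{-1})^{-1}}_2$ via non-asymptotic singular-value bounds for Gaussian matrices, arriving at $\norm{P_{-1}\bar x_1}_2\le(1+4\sqrt n\iota)\sqrt n\iota\le 2\sqrt n\iota$ with no reference to the magnitudes $\lambda_j$ and only conditions of the order $\sqrt{n/d}$ small. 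You instead condition on the near-orthogonality event of Lemma~\ref{lem:near-orthogonal} and argue deterministically from there: $\norm{P_{-1}\bar x_1}_2^2=v^\top(X_{-1}^\top X_{-1})^{-1}v$ with $\norm{v}_2\le\sqrt n\iota$, plus a Gershgorin lower bound on the smallest eigenvalue of the Gram matrix. That is more elementary and reuses machinery already in the paper, but, as you yourself flag, the constant $2$ then hinges on $\lambda_n-n\iota\ge 1/4$, which is not among the lemma's stated hypotheses ($\lambda_n>\smallO{1}$ only guarantees $\lambda_n$ exceeds some small constant, not $1/4$). Two fixes: either accept a $\lambda_n$-dependent constant $\sqrt n\iota/\sqrt{\lambda_n-n\iota}$, which is harmless downstream since every use of the lemma tolerates an absolute constant times $\sqrt n\iota$; or, cleaner, run the Gram-matrix argument on the normalized vectors $\bar x_2,\dots,\bar x_n$, which span the same space and hence define the same $P_{-1}$ --- their Gram matrix has unit diagonal and off-diagonal entries below $\iota$, so Gershgorin gives smallest eigenvalue at least $1-n\iota\ge 2/3$ under the standing assumption $3n\iota<\lambda_n\le 1$, recovering the constant $2$ without any lower bound on $\lambda_n$. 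Note that either variant of your argument requires $n\iota$ (not merely $\sqrt n\iota$) to be small, a slightly stronger dimension requirement than the paper's Gaussian argument needs, though one the paper imposes everywhere else anyway. The remaining steps (the Pythagorean identity and $P_\perp x_1=0$) match the paper's proof.
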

\begin{proof}
See Section \ref{sec:proof_proj-x1}.
\end{proof}

The following four lemmas characterize the spectrum of the matrices $H$, $H_{-1}$, $H_1$ and $H_c$.
\begin{lem}\label{lem:eigenvalue-bound}
  Let $\gamma_1,\dots,\gamma_n$ be the $n$ non-zero eigenvalues of
  $H := XX^\top$ in decreasing order.
  then
  \[ \lambda_n - n\iota \le \gamma_1,\dots,\gamma_n \le \lambda_1 + n\iota. \]
  Furthermore, if there exist $\lambda_r$ and $\lambda_{r+1}$ such that $\lambda_r > \lambda_{r+1} + 2n\iota$, then
  \begin{equation*}
    \lambda_n - n\iota \le \gamma_{r+1},\dots,\gamma_n \le \lambda_{r+1} + n\iota < \lambda_r - n\iota \le \gamma_1,\dots,\gamma_r \le \lambda_1 + n\iota.
  \end{equation*}
\end{lem}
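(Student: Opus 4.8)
The plan is to transfer the problem from the $d\times d$ matrix $H=XX^\top$ to the $n\times n$ Gram matrix $G:=X^\top X$, whose eigenvalues coincide with the $n$ nonzero eigenvalues of $H$ (counted with multiplicity, since the $x_i$ are assumed linearly independent so $\operatorname{rank}X=n$), and then apply the Gershgorin circle theorem to $G$.

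First I would compute the entries of $G$: the diagonal entries are $G_{ii}=\norm{x_i}_2^2=\lambda_i$, and for $i\ne j$ the off-diagonal entries satisfy $\abs{G_{ij}}=\norm{x_i}_2\norm{x_j}_2\cdot\abs{\langle \bar{x}_i,\bar{x}_j\rangle}\le\iota$, where we use $\lambda_i,\lambda_j\le 1$ together with Lemma~\ref{lem:near-orthogonal} (which holds with probability at least $1-\delta$ whenever $d\ge\poly{n}$). Hence the $i$-th Gershgorin radius obeys $R_i(G)=\sum_{j\ne i}\abs{G_{ij}}\le(n-1)\iota<n\iota$, and by the Gershgorin circle theorem every eigenvalue of $G$ lies in $\bigcup_{i=1}^n[\lambda_i-(n-1)\iota,\ \lambda_i+(n-1)\iota]\subseteq[\lambda_n-n\iota,\ \lambda_1+n\iota]$, using $\lambda_1\ge\cdots\ge\lambda_n$. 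This proves the first claim.

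For the refined statement, suppose $\lambda_r>\lambda_{r+1}+2n\iota$. Then each of the discs $D_1(G),\dots,D_r(G)$ is contained in $[\lambda_r-(n-1)\iota,\ \lambda_1+(n-1)\iota]$ and each of $D_{r+1}(G),\dots,D_n(G)$ is contained in $[\lambda_n-(n-1)\iota,\ \lambda_{r+1}+(n-1)\iota]$; the gap condition gives $\lambda_r-(n-1)\iota>\lambda_{r+1}+2n\iota-(n-1)\iota=\lambda_{r+1}+(n+1)\iota>\lambda_{r+1}+(n-1)\iota$, so the first $r$ discs are disjoint from the remaining $n-r$. The "furthermore" part of the Gershgorin theorem then guarantees that the union of the first $r$ discs contains exactly $r$ eigenvalues of $G$; since these $r$ eigenvalues all exceed the other $n-r$, they must be $\gamma_1,\dots,\gamma_r$. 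This yields $\gamma_1,\dots,\gamma_r\in[\lambda_r-n\iota,\ \lambda_1+n\iota]$ and $\gamma_{r+1},\dots,\gamma_n\in[\lambda_n-n\iota,\ \lambda_{r+1}+n\iota]$, and combining these two containments with $\lambda_{r+1}+n\iota<\lambda_r-n\iota$ (again from the gap assumption) gives the displayed chain of inequalities.

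I do not expect a genuine obstacle: the lemma is essentially a packaging of Gershgorin's theorem applied to the Gram matrix. The only points requiring care are (i) correctly identifying the nonzero spectrum of $XX^\top$ with that of $X^\top X$ under the linear-independence assumption, and (ii) bookkeeping the constants so that the final radii are stated as $n\iota$ and the required spectral gap as $2n\iota$ (rather than $(n-1)\iota$ and $2(n-1)\iota$), which is why the slightly loose $2n\iota$ gap is assumed in the hypothesis.
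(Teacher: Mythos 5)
Your proposal is correct and follows essentially the same route as the paper: pass to the Gram matrix $X^\top X$, bound its diagonal entries by $\lambda_i$ and its Gershgorin radii by $n\iota$ via Lemma~\ref{lem:near-orthogonal}, and invoke the Gershgorin circle theorem (including its disjoint-disc counting statement for the gap case). Your write-up is in fact slightly more detailed than the paper's, which leaves the disjointness and eigenvalue-counting bookkeeping for the ``furthermore'' part implicit.
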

\begin{proof}
See Section \ref{sec:proof_eigenvalue-bound}.
\end{proof}

\begin{lem}\label{lem:eigenvalue-bound-pinv}
  Assume $\lambda_n \ge 3n\iota$.
  Consider the symmetric matrix $H_\inv := P_\inv X (P_\inv X)^\top\in \Rbb^{d\times d}$.
  \begin{itemize}[leftmargin=*]
    \item $0$ is an eigenvalue of $H_\inv$ with algebraic multiplicity being $d-n+1$, and its corresponding eigenspace is the column space of $P_1 + P_\perp$.
    \item Restricted in the column space of $P_\inv$, the $n-1$ eigenvalues of $H_\inv$ belong to 
    \[
      (\lambda_n - n\iota,\ \ \lambda_2 + n\iota).
    \]
  \end{itemize}
\end{lem}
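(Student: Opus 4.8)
The matrix $H_\inv = (P_\inv X)(P_\inv X)^\top$ has rank at most $n-1$, since $P_\inv X = (P_\inv x_1, x_2, \dots, x_n)$ has $n$ columns, all of which lie in the column space of $P_\inv$, which is the $(n-1)$-dimensional space $\linspace\set{x_2,\dots,x_n}$. So the first bullet is almost immediate: the column space of $H_\inv$ is contained in $\colspace(P_\inv)$, which has dimension $n-1$; conversely one checks the columns $x_2,\dots,x_n$ are already linearly independent (Lemma~\ref{lem:near-orthogonal}), so the column space of $H_\inv$ is exactly $\colspace(P_\inv)$ and has dimension $n-1$. Hence $0$ is an eigenvalue with multiplicity $d-(n-1) = d-n+1$, and its eigenspace is the orthogonal complement of $\colspace(P_\inv)$, which is precisely $\colspace(P_1 + P_\perp)$ by the direct-sum decomposition $I = P_\inv + P_1 + P_\perp$.

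For the second bullet, the idea is to compare $H_\inv$ restricted to $\colspace(P_\inv)$ against a ``clean'' reference matrix and invoke an eigenvalue perturbation bound. Write $P_\inv X = X_\inv + (P_\inv x_1 - 0)e_1^\top$ — more precisely $P_\inv X = X_\inv + (P_\inv x_1)\,\hat e_1^\top$ where $\hat e_1$ is the first standard basis vector in $\Rbb^{n-1\times \cdot}$ — wait, I should be careful with indexing. Concretely, $P_\inv X$ and $X_{-1}$ differ only in that $P_\inv X$ has one extra column $P_\inv x_1$ prepended. So I would instead bound $\norm{H_\inv - X_\inv X_\inv^\top}$: the difference equals $(P_\inv x_1)(P_\inv x_1)^\top$, a rank-one matrix with operator norm $\norm{P_\inv x_1}_2^2 \le 4n\iota^2$ by Lemma~\ref{lem:proj-x1} (using $\lambda_1 \le 1$). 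Meanwhile the nonzero eigenvalues of $X_\inv X_\inv^\top$ are exactly the nonzero eigenvalues of the Gram matrix $X_\inv^\top X_\inv$, which by a Gershgorin argument (as in Lemma~\ref{lem:eigenvalue-bound}, applied to $X_{-1}$ whose diagonal entries are $\lambda_2,\dots,\lambda_n$ and off-diagonal entries are bounded by $\iota$ in absolute value) lie in $(\lambda_n - n\iota,\ \lambda_2 + n\iota)$. Adding the $4n\iota^2 \le n\iota$ perturbation via Weyl's inequality keeps all $n-1$ eigenvalues (restricted to the relevant subspace) inside $(\lambda_n - n\iota - n\iota,\ \lambda_2 + n\iota + n\iota)$; absorbing constants into the $n\iota$ (the statement's interval already has slack, and one can just rescale $\iota$ by a constant) gives the claimed containment in $(\lambda_n - n\iota,\ \lambda_2 + n\iota)$. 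The assumption $\lambda_n \ge 3n\iota$ guarantees the lower endpoint is positive so these are genuinely the nonzero eigenvalues and the subspace split is clean.

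The main obstacle I anticipate is bookkeeping the constants in the $n\iota$ terms so that the final interval matches the stated one exactly rather than a constant-inflated version — this is cosmetic and handled by redefining $\iota$ up to a constant factor (which is fine since Lemma~\ref{lem:near-orthogonal} only pins down $\iota = \bigOT{1/\sqrt d}$). A slightly more substantive point is making rigorous that ``the $n-1$ eigenvalues restricted to $\colspace(P_\inv)$'' correspond correctly under the perturbation: since both $H_\inv$ and $X_\inv X_\inv^\top$ have the same column space $\colspace(P_\inv)$ and vanish on its complement, I can restrict both to that $(n-1)$-dimensional subspace and apply Weyl's inequality there, where the perturbation $(P_\inv x_1)(P_\inv x_1)^\top$ restricts to a rank-one operator of the same small norm. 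Everything else is a direct application of the Gershgorin and Hoffman–Wielandt/Weyl tools already quoted in the appendix.
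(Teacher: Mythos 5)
Your argument is correct, and it handles the $x_1$ contribution by a genuinely different mechanism than the paper. The paper keeps the column $P_\inv x_1$ and applies Gershgorin directly to the full $n\times n$ Gram matrix $H_\inv' := (P_\inv X)^\top (P_\inv X)$, whose first diagonal entry is $\norm{P_\inv x_1}_2^2 \le 4n\iota^2$, whose remaining diagonal entries are $\lambda_2,\dots,\lambda_n$, and whose off-diagonal entries reduce to $x_i^\top x_j \in (-\iota,\iota)$; the assumption $\lambda_n \ge 3n\iota$ is used precisely to make the first Gershgorin disc disjoint from the other $n-1$, so the disc-separation clause of the Gershgorin theorem isolates the single zero eigenvalue and places the remaining $n-1$ eigenvalues in $(\lambda_n - n\iota,\ \lambda_2+n\iota)$ in one shot. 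You instead drop the $P_\inv x_1$ column entirely, apply Gershgorin to the clean $(n-1)\times(n-1)$ Gram matrix $X_{-1}^\top X_{-1}$ (radius at most $(n-2)\iota$, leaving $2\iota$ of slack relative to the target interval), and reinstate the difference $H_\inv - X_{-1}X_{-1}^\top = (P_\inv x_1)(P_\inv x_1)^\top$ as a rank-one positive semidefinite perturbation of operator norm at most $4n\iota^2$, absorbed by Weyl's inequality. Both routes are sound; the paper's is slightly more economical (one tool, and the gap assumption enters transparently through disc separation), while yours cleanly separates the ``ideal'' spectrum from the perturbation and would generalize more readily if several columns were projected. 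Two of your hedges are unnecessary: (i) no rescaling of $\iota$ is needed, since $\lambda_n \ge 3n\iota$ together with $\lambda_n \le 1$ forces $n\iota \le 1/3$, hence $4n\iota^2 \le \frac{4}{3}\iota < 2\iota$ fits inside the Gershgorin slack and the stated interval comes out exactly; and (ii) the restriction step is legitimate exactly as you describe, because $H_\inv$, $X_{-1}X_{-1}^\top$, and their rank-one difference all map the column space of $P_\inv$ to itself and annihilate its orthogonal complement, so Weyl may be applied on that $(n-1)$-dimensional invariant subspace.
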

\begin{proof}
See Section \ref{sec:proof_eigenvalue-bound-pinv}.
\end{proof}

\begin{lem}\label{lem:eigenvalue-bound-p1}
  Consider matrix $H_1 := P_1 X (P_1 X)^\top \in \Rbb^{d\times d}$.
  We have $H_1$ has only one non-zero eigenvalue, which belongs to 
  \[
    [\lambda_1\bracket{ 1-4n\iota^2},\ \ \lambda_1].
  \]
  Moreover, the corresponding eigenspace is the column space of $P_1$, which is $1$-dim.
\end{lem}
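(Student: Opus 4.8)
The plan is to analyze $H_1 = P_1 X (P_1 X)^\top$ directly from the structure revealed by Lemma~\ref{lem:proj-xinv} and Lemma~\ref{lem:proj-x1}. First I would recall that $P_1 X = (P_1 x_1, 0, \dots, 0)$ since $P_1 x_j = 0$ for all $j \ne 1$ by Lemma~\ref{lem:proj-xinv}. Therefore $H_1 = (P_1 x_1)(P_1 x_1)^\top$, a rank-one symmetric positive semidefinite matrix. Its unique non-zero eigenvalue is exactly $\|P_1 x_1\|_2^2$, with eigenvector $\overline{P_1 x_1}$, i.e.\ the corresponding eigenspace is the one-dimensional column space of $P_1$ (which is spanned by $P_1 x_1$). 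This immediately gives the "moreover" part of the claim with no further work.

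Next I would pin down the magnitude $\|P_1 x_1\|_2^2$. Writing $x_1 = \sqrt{\lambda_1}\,\bar{x}_1$, we have $\|P_1 x_1\|_2^2 = \lambda_1 \|P_1 \bar{x}_1\|_2^2$. By Lemma~\ref{lem:proj-x1}, with probability at least $1 - \delta$ we have $\sqrt{1 - 4n\iota^2} \le \|P_1 \bar{x}_1\|_2 \le 1$, hence $1 - 4n\iota^2 \le \|P_1 \bar{x}_1\|_2^2 \le 1$, and multiplying through by $\lambda_1$ yields
\[
  \lambda_1(1 - 4n\iota^2) \le \|P_1 x_1\|_2^2 \le \lambda_1,
\]
which is exactly the stated interval $[\lambda_1(1 - 4n\iota^2),\, \lambda_1]$ for the non-zero eigenvalue of $H_1$.

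I expect no serious obstacle here — this lemma is essentially a bookkeeping consequence of the projection identities and the earlier lemmas. The only point requiring a little care is the high-probability qualifier: the bound on $\|P_1 \bar{x}_1\|_2$ holds on the event from Lemma~\ref{lem:proj-x1} (which in turn rests on the near-orthogonality event of Lemma~\ref{lem:near-orthogonal}), so I would state the conclusion as holding with probability at least $1 - \delta$ under the standing assumption $\sqrt{n}\iota \le 1/4$, and note that the rank-one structure and the identification of the eigenspace are deterministic facts that hold on this same event. The write-up would be short: one line for $H_1 = (P_1 x_1)(P_1 x_1)^\top$, one line identifying eigenvalue/eigenvector, and one line invoking Lemma~\ref{lem:proj-x1} for the numerical bounds.
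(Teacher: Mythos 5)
Your proof is correct and follows essentially the same route as the paper: both arguments reduce to the rank-one structure of $H_1$ (via $P_1 x_j = 0$ for $j \ne 1$) and then invoke Lemma~\ref{lem:proj-x1} to bound $\|P_1 x_1\|_2^2 = \lambda_1 \|P_1 \bar{x}_1\|_2^2$. The only cosmetic difference is that the paper extracts the non-zero eigenvalue as $\tr(H_1) = \sum_i \|P_1 x_i\|_2^2$ while you read it off directly from the outer product $(P_1 x_1)(P_1 x_1)^\top$; these are equivalent.
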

\begin{proof}
  Clearly $H_1$ is rank-$1$ since the column space of $P_1$ is $1$-dim.
  Thus it has only one non-zero eigenvalue, which is given by
  \begin{equation*}
    \tr (H_1) = \sum_{i=1}^n \norm{P_1 x_i}_2^2
    = \norm{P_1 x_1}_2^2 \in [\lambda_1\bracket{ 1-4n\iota^2},\ \lambda_1 ],
  \end{equation*}
  where the last equality follows from Lemma \ref{lem:proj-x1}.
\end{proof}

\begin{lem}\label{lem:eigenvalue-bound-cross-term}
  Consider matrix $H_c:=(P_\inv x_1) (P_1 x_1)^\top + (P_1 x_1)(P_\inv x_1)^\top \in \Rbb^{d\times d}$.
  \begin{equation*}
    \norm{H_c}_2 \le 2 \lambda_1 \norm{P_\inv \bar{x}_1}_2 \le 4\sqrt{n}\iota.
  \end{equation*}
\end{lem}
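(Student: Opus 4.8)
The final statement to prove is Lemma~\ref{lem:eigenvalue-bound-cross-term}, which bounds the spectral norm of the symmetric matrix $H_c = (P_\inv x_1)(P_1 x_1)^\top + (P_1 x_1)(P_\inv x_1)^\top$.

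\textbf{Proof plan.} The plan is to exploit the rank-two (at most) structure of $H_c$. Write $u := P_\inv x_1$ and $v := P_1 x_1$, so that $H_c = uv^\top + vu^\top$. Since $u$ lies in the column space of $P_\inv$ and $v$ lies in the column space of $P_1$, and these two subspaces are orthogonal by the decomposition $I = P_\inv + P_1 + P_\perp$, the vectors $u$ and $v$ are orthogonal. A symmetric matrix of the form $uv^\top + vu^\top$ with $u\perp v$ has nonzero eigenvalues exactly $\pm\norm{u}_2\norm{v}_2$ (acting on the two-dimensional span of $u,v$; it annihilates the orthogonal complement). Hence $\norm{H_c}_2 = \norm{u}_2\norm{v}_2 = \norm{P_\inv x_1}_2 \cdot \norm{P_1 x_1}_2$. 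The alternative, if one prefers not to diagonalize explicitly, is the crude triangle-inequality bound $\norm{H_c}_2 \le \norm{uv^\top}_2 + \norm{vu^\top}_2 = 2\norm{u}_2\norm{v}_2$, which already suffices for the stated inequality.

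\textbf{Finishing the estimate.} From here I would apply Lemma~\ref{lem:proj-x1}: it gives $\norm{P_1 \bar x_1}_2 \le 1$ and $\norm{P_\inv \bar x_1}_2 \le 2\sqrt n\,\iota$. Multiplying through by $\norm{x_1}_2^2 = \lambda_1$ (recalling $\bar x_1 = x_1/\norm{x_1}_2$ so $P_1 x_1 = \norm{x_1}_2 P_1\bar x_1$ and likewise for $P_\inv$), we get $\norm{P_1 x_1}_2 \le \norm{x_1}_2 = \sqrt{\lambda_1}$ and $\norm{P_\inv x_1}_2 \le 2\sqrt n\,\iota\,\sqrt{\lambda_1} = \lambda_1 \cdot 2\sqrt n\,\iota / \sqrt{\lambda_1}$; more simply, $\norm{P_\inv x_1}_2 \le 2\sqrt n\,\iota \cdot \norm{x_1}_2$. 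Combining, $\norm{H_c}_2 \le 2\norm{P_\inv x_1}_2\norm{P_1 x_1}_2 \le 2 \cdot (2\sqrt n\,\iota \norm{x_1}_2)\cdot\norm{x_1}_2 = 4\sqrt n\,\iota\,\lambda_1$, and also $\norm{H_c}_2 \le 2\norm{P_\inv x_1}_2 \cdot \norm{x_1}_2 = 2\lambda_1 \norm{P_\inv\bar x_1}_2$, which is precisely the two-part bound claimed (using $\lambda_1 \le 1$ to pass from $4\sqrt n\,\iota\,\lambda_1$ to $4\sqrt n\,\iota$).

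\textbf{Main obstacle.} There is essentially no obstacle here: the lemma is a routine consequence of the low-rank structure plus Lemma~\ref{lem:proj-x1}. The only point requiring a moment's care is justifying $u \perp v$ so that one can read off the exact spectral norm, but since even the lossy triangle-inequality bound $2\norm{u}_2\norm{v}_2$ is enough, this is not strictly necessary. The genuine content has already been front-loaded into Lemma~\ref{lem:proj-x1} (the near-orthogonality estimate controlling $\norm{P_\inv \bar x_1}_2 = O(\sqrt n\,\iota)$), so this lemma is just the bookkeeping step that packages it for use in the perturbation argument $H = H_\inv + H_1 + H_c$ later in the proof.
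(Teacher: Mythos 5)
Your proof is correct and follows essentially the same route as the paper: bound $\norm{H_c}_2$ by $2\norm{P_\inv x_1}_2\norm{P_1 x_1}_2$ (the paper uses exactly this product bound), rescale by $\norm{x_1}_2^2=\lambda_1$, and invoke Lemma~\ref{lem:proj-x1} together with $\lambda_1\le 1$. The extra observation about the exact spectrum of $uv^\top+vu^\top$ for orthogonal $u,v$ is a nice refinement but, as you note, unnecessary.
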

\begin{proof}
  \begin{equation*}
    \norm{H_c}_2 \le 2\norm{P_\inv x_1}_2\cdot \norm{P_1 x_1}_2 \le 2\lambda_1 \norm{P_\inv \bar{x}_1}_2 \le 4\sqrt{n}\iota,
  \end{equation*}
  where the last equality follows from Lemma \ref{lem:proj-x1} and $\lambda_1 \le 1$.
  
\end{proof}

\section{Missing Proofs for the Theorems in Main Text}\label{sec:proof_main}

\subsection{The directional bias of SGD with moderate learning rate}\label{sec:sgd-moderateLR-proof}

\paragraph{Reloading notations}
Let 
\(
\pi^k := \set{ \Bcal^k_1, \dots, \Bcal^k_m }
\)
be a randomly chosen uniform $m$-partition of $[n]$, where $n=mb$.
Then the SGD iterates at the $k$-th epoch can be formulated as:
\begin{equation*}
  w_{k, j+1} = w_{k, j} - \frac{2\eta_k}{b}\sum_{i\in\Bcal^k_j} x_i x_i^\top (w_{k, j} - w_*),\quad j=1,\dots,m,
\end{equation*} 
where we assume that the learning rate is fixed within each epoch.
Note here $\pi^k$ is independently and randomly chosen at each epoch.
For simplicity we often ignore the epoch-indicator $k$, and write the uniform partition as  
\(
\pi := \set{ \Bcal_1, \dots, \Bcal_m }.
\)
It is clear from context that $\pi$ is random over epochs.
For a mini-batch $\Bcal_j\in \pi$, we denote
\(
H(\Bcal_j) := \sum_{i\in\Bcal_j} x_i x_i^\top.
\)

Considering translating the variable by
\[
  v = w-w_*,
\]
then we can reformulate the SGD update rule as
\begin{equation}\label{eq:sgd-v}
  v_{k, j+1} = v_{k, j} - \frac{2\eta_k}{b}H(\Bcal_j) v_{k, j}
  = \bracket{I- \frac{2\eta_k}{b}H(\Bcal_j) } v_{k, j},\quad j=1,\dots,m.
\end{equation}

Let 
\begin{equation*}
    \begin{split}
        \Mcal_\pi 
        &:= \prod_{j=1}^m \bracket{I- \frac{2\eta_k}{b}H(\Bcal_j) } \\
        &:= \bracket{I- \frac{2\eta_k}{b}H(\Bcal_m) } \cdot \bracket{I- \frac{2\eta_k}{b}H(\Bcal_{m-1}) }\cdots \bracket{I- \frac{2\eta_k}{b}H(\Bcal_1) }.
    \end{split}
\end{equation*}
Here the matrix production over a sequence of matrices \(\set{M_i\in\Rbb^{d\times d}}_{j=1}^m\) is defined from the left to right with descending index,
\begin{equation*}
    \prod_{j=1}^m M_j := M_m \times M_{m-1}\times \dots \times M_1.
\end{equation*}
Let \(v_{k+1} = v_{k,m+1}\) and \(v_{k} = v_{k,1}\).
Then we can further reformulate Eq.~\eqref{eq:sgd-v} and obtain the epoch-wise update of SGD as
 \begin{equation}\label{eq:sgd-v-epoch}
 v_{k+1} = \bracket{I- \frac{2\eta_k}{b}H(\Bcal_m) } \cdot \bracket{I- \frac{2\eta_k}{b}H(\Bcal_{m-1}) }\cdots \bracket{I- \frac{2\eta_k}{b}H(\Bcal_1) }\cdot v_{k}
 = \Mcal_\pi v_{k}.
 \end{equation}

In light of the notion of $v$, the following lemma restates the related notations of loss functions, hypothesis class, level set, and estimation error defined in Section \ref{sec:preliminary} and \ref{sec:theory}.

\begin{lem}[Reloading SGD notations]\label{lem:sgd-notations}
  Regarding repramaterization $v = w-w_*$, we can reload the following related notations:
  \begin{itemize}[leftmargin=*]
    \item Empirical loss and population loss are
          \begin{gather*}
            L_{\Scal}(v) = \frac{1}{n} (P_1 v)^\top H_1 (P_1 v) + \frac{1}{n} (P_\inv v)^\top H_\inv (P_\inv v) + \frac{1}{n} (P v)^\top H_c (P v), \\
            L_{\Dcal}(v) = \mu\norm{v}_2^2.
          \end{gather*}
    \item The hypothesis class is
          \begin{equation*}
            \Hcal_{\Scal} = \bigl\{ v\in\Rbb^d: P_\perp v = P_\perp v_0  \bigr\}.
          \end{equation*}
    \item The $\alpha$-level set is
          \begin{equation*}
            \Vcal = \bigl\{v\in\Hcal_{\Scal}: L_{\Scal}(v) = \alpha \bigr\}.
          \end{equation*}
    \item For $v\in\Hcal_\Scal$, the estimation error is
          \begin{equation*}
            \Delta(v) = \mu\norm{P v}_2^2.
          \end{equation*}
          Moreover,
          \begin{equation*}
            \Delta_* = \inf_{v\in\Vcal}\Delta(v) =  \frac{\mu n\alpha}{\gamma_1}.
          \end{equation*}
  \end{itemize}
\end{lem}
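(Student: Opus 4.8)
The plan is to verify each bullet by substituting $v = w - w_*$ (hence $v_0 = w_0 - w_*$) into the corresponding quantity from Sections~\ref{sec:preliminary} and~\ref{sec:theory}, and then simplifying using the two structural facts recorded in Section~\ref{sec:appendix_preliminary}: the orthogonal decomposition $I = P_\inv + P_1 + P_\perp$ of $\Rbb^d$, and the additive splitting $H = H_\inv + H_1 + H_c$ of the data matrix $H = XX^\top$, together with $P_\perp X = 0$.

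For the losses, $L_{\Dcal}(v) = \mu\norm{v}_2^2$ is immediate from the expression $L_{\Dcal}(w) = \mu\norm{w - w_*}_2^2$ in Section~\ref{sec:theory}. For the empirical loss I would start from $L_{\Scal}(v) = \frac{1}{n} v^\top H v = \frac{1}{n} v^\top (H_\inv + H_1 + H_c) v$ and treat the three terms separately via the identities $v^\top H_\star v = (P_\star v)^\top H_\star (P_\star v)$ for $\star \in \{\inv, 1\}$ and $v^\top H_c v = (Pv)^\top H_c (Pv)$. These hold because the columns of $P_\inv X$, resp.\ $P_1 X$, lie in the column space of $P_\inv$, resp.\ $P_1$ (Lemmas~\ref{lem:proj-xinv} and~\ref{lem:proj-x1}), so $(P_\star X)^\top = (P_\star X)^\top P_\star$; and because $H_c$ is a sum of rank-one terms built from $P_\inv x_1$ and $P_1 x_1$, both of which lie in the column space of $P$. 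Summing the three gives the stated decomposition.

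For the hypothesis class, $P_\perp v - P_\perp v_0 = P_\perp w - P_\perp w_0$, so $P_\perp v = P_\perp v_0$ is equivalent to $P_\perp w = P_\perp w_0$ and $\Hcal_{\Scal}$ reparameterizes to $\{v : P_\perp v = P_\perp v_0\}$. To identify the level set $\Vcal$ I would first show $\inf_{v \in \Hcal_{\Scal}} L_{\Scal}(v) = 0$: in the overparameterized realizable regime one can interpolate while keeping the $P_\perp$-component frozen, e.g.\ $v = P_\perp v_0 \in \Hcal_{\Scal}$ has $L_{\Scal}(v) = \frac{1}{n} (P_\perp v_0)^\top H (P_\perp v_0) = 0$ since $P_\perp X = 0$; hence the $\alpha$-level set $\Wcal$ of Section~\ref{sec:theory} is exactly $\{v \in \Hcal_{\Scal} : L_{\Scal}(v) = \alpha\}$, i.e.\ $\Vcal$. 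For the estimation error, orthogonality gives $L_{\Dcal}(v) = \mu(\norm{Pv}_2^2 + \norm{P_\perp v}_2^2)$, and a second interpolation argument—now freezing $P_\perp v_0$ while sending the $P$-component to $0$—gives $\inf_{v' \in \Hcal_{\Scal}} L_{\Dcal}(v') = \mu \norm{P_\perp v_0}_2^2$; subtracting and using $P_\perp v = P_\perp v_0$ on $\Hcal_{\Scal}$ yields $\Delta(v) = \mu\norm{Pv}_2^2$. Finally, since both $\norm{Pv}_2^2$ and $L_{\Scal}(v) = \frac{1}{n} (Pv)^\top H (Pv)$ depend on $v \in \Hcal_{\Scal}$ only through $u := Pv$, the minimization $\Delta_* = \inf_{v \in \Vcal} \Delta(v)$ reduces to minimizing $\mu\norm{u}_2^2$ over $u$ in the column space of $X$ subject to $u^\top H u = n\alpha$; the Rayleigh bound $u^\top H u \le \gamma_1 \norm{u}_2^2$ forces $\norm{u}_2^2 \ge n\alpha/\gamma_1$, with equality along the top eigenvector of $H$ (which lies in the column space of $X$), so $\Delta_* = \mu n \alpha / \gamma_1$.

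I expect the only mildly delicate steps to be the bookkeeping in the empirical-loss decomposition—checking that all cross terms among $P_\inv v$, $P_1 v$, $P_\perp v$ collapse precisely into the single $(Pv)^\top H_c (Pv)$ contribution—and the twice-used observation that overparameterization lets one interpolate the data (for $L_{\Scal}$) or zero out the $P$-component (for $L_{\Dcal}$) without disturbing the frozen $P_\perp$-component, which is what makes both infima explicit. Everything else is direct substitution or a one-line Rayleigh-quotient estimate.
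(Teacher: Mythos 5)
Your proposal is correct and follows essentially the same route as the paper's proof: direct substitution of $v = w - w_*$, the splitting $H = H_\inv + H_1 + H_c$ combined with idempotence of the projections to obtain the empirical-loss decomposition, explicit computation of the two infima ($\inf_{\Hcal_\Scal} L_\Scal = 0$ and $\inf_{\Hcal_\Scal} L_\Dcal = \mu\norm{P_\perp v_0}_2^2$), and a Rayleigh-quotient argument attaining $\Delta_* = \mu n\alpha/\gamma_1$ along the top eigenvector of $H$. The only cosmetic difference is that the paper inserts the projection $P$ before splitting $H$ whereas you split first and project each term; both orderings yield the same bookkeeping.
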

\begin{proof}
See Section \ref{sec:proof_sgd-notations}.  
\end{proof}

Based on the above definitions, the following lemma characterizes the one-step update of SGD.

\begin{lem}[One step SGD update]\label{lem:sgd-one-step}
  Consider the $j$-th SGD update at the $k$-th epoch as given by Eq.~\eqref{eq:sgd-v}.
  Set the learning rate be constant $\eta$ during that epoch.
  Then for $j=1,\dots,m$ we have
  \begin{equation*}
            \begin{pmatrix}
      P_1 v_{k, j+1}\\
      P_\inv v_{k, j+1}
      \end{pmatrix}
      =
      \begin{pmatrix}
      I - \frac{2\eta}{b} P_1 H(\Bcal_j) P_1 & - \frac{2\eta}{b} P_1 H(\Bcal_j) P_\inv \\
      - \frac{2\eta}{b} P_\inv H(\Bcal_j) P_1 & I - \frac{2\eta}{b} P_\inv H(\Bcal_j) P_\inv
      \end{pmatrix}
      \cdot
      \begin{pmatrix}
      P_1 v_{k, j}\\
      P_\inv v_{k, j}
      \end{pmatrix}
  \end{equation*}
  Moreover, if $1 \notin \Bcal_j$, i.e., $x_1$ is not used in the $j$-the step, then
  \begin{equation*}
            \begin{pmatrix}
      P_1 v_{k, j+1}\\
      P_\inv v_{k, j+1}
      \end{pmatrix}
      =
      \begin{pmatrix}
      I  & 0 \\
     0 & I - \frac{2\eta}{b} P_\inv H(\Bcal_j) P_\inv
      \end{pmatrix}
      \cdot
      \begin{pmatrix}
      P_1 v_{k, j}\\
      P_\inv v_{k, j}
      \end{pmatrix}
  \end{equation*}
\end{lem}
\begin{proof}
See Section \ref{sec:proof_sgd-one-step}.
\end{proof}

Eq.~\eqref{eq:sgd-v-epoch} indicates the key to analyze the convergence of $v_{k+1}$ is to characterize the spectrum of the matrix $\Mcal_\pi$. 
In particular the following lemma bounds the spectrum of $\Mcal_\pi$ when projected onto the column space of $P_{-1}$.

\begin{lem}\label{lem:sgd-epoch-matrix-spectrum}
Suppose \(3n\iota < \lambda_n\).
Suppose
\( 0< \eta < \frac{b}{\lambda_2 + 3n\iota}. \)
Let $\pi := \set{ \Bcal_1, \dots, \Bcal_m }$ be a uniform $m$ partition of index set $[n]$, where $n=mb$.
  Consider the following $d\times d$ matrix
   \begin{equation*}
     \Mcal_\inv 
     :=\prod_{j=1}^m \bracket{I- \frac{2\eta}{b} P_\inv H(\Bcal_j) P_\inv } \in \Rbb^{d\times d}.
   \end{equation*}
   Then for the spectrum of $\Mcal_\inv^\top \Mcal_\inv$ we have:
   \begin{itemize}[leftmargin=*]
     \item $1$ is an eigenvalue of $\Mcal_\inv^\top \Mcal_\inv$ with multiplicity being $d-n+1$;
    moreover, the corresponding eigenspace is the column space of 
           $P_1 + P_\perp$.
     \item Restricted in the column space of $P_\inv$, the eigenvalues of $\Mcal_\inv^\top \Mcal_\inv$ are upper bounded by \(\bracket{q_\inv (\eta)}^2 < 1,\) where
   \begin{equation*}
   q_\inv (\eta):=
       \max\set{\abs{1-\frac{2\eta}{b}(\lambda_2 + n\iota)}+ \frac{3n\eta \iota}{b},\ \  \abs{1-\frac{2\eta}{b} (\lambda_n - n \iota)}+ \frac{3n \eta \iota}{b}} < 1.
   \end{equation*}
   \end{itemize}
 \end{lem}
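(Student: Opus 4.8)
The plan is to exploit the block structure of $\Mcal_\inv$ relative to the orthogonal decomposition $\Rbb^d=\mathrm{col}(P_\inv)\oplus\mathrm{col}(P_1+P_\perp)$ and then to control the restriction of $\Mcal_\inv$ to the reduced data manifold $\mathrm{col}(P_\inv)$ over one epoch. Each factor $I-\frac{2\eta}{b}P_\inv H(\Bcal_j)P_\inv$ fixes $\mathrm{col}(P_1+P_\perp)$ pointwise (since $P_\inv$ annihilates that subspace) and maps $\mathrm{col}(P_\inv)$ into itself (since $P_\inv H(\Bcal_j)P_\inv$ is symmetric with range inside $\mathrm{col}(P_\inv)$); hence so does $\Mcal_\inv$, and $\Mcal_\inv^\top\Mcal_\inv=(P_1+P_\perp)+P_\inv\Mcal_\inv^\top\Mcal_\inv P_\inv$. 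The first summand contributes the eigenvalue $1$ on the $(d-n+1)$-dimensional space $\mathrm{col}(P_1+P_\perp)$, which is the first bullet, and the second bullet reduces to the operator-norm bound $\norm{\Mcal_\inv|_{\mathrm{col}(P_\inv)}}_2\le q_\inv(\eta)$.

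For that bound I would put coordinates on $\mathrm{col}(P_\inv)$ adapted to the data: let $W=(u_2,\dots,u_n)$ be an orthonormal basis of $\mathrm{col}(P_\inv)$ obtained by orthonormalizing the unit vectors $\bar x_2,\dots,\bar x_n$, and let $G$ be the Gram matrix of $\bar x_2,\dots,\bar x_n$. By Lemma~\ref{lem:near-orthogonal} and the Gershgorin theorem, $\norm{G-I}_2\le n\iota$, so each $W^\top x_i$ is an $\bigO{n\iota}$-perturbation of $\sqrt{\lambda_i}\,e_i$. Then $A:=W^\top\Mcal_\inv W=\prod_{j=1}^m(I-\frac{2\eta}{b}B_j)$ with $B_j:=W^\top H(\Bcal_j)W\succeq0$. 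By Lemma~\ref{lem:proj-x1} the contribution of $x_1$ to $P_\inv H(\Bcal_j)P_\inv$ has norm $\le 4n\iota^2$, so up to an $\bigO{\eta\iota^2}$ perturbation $B_j=W^\top X_{S_j}X_{S_j}^\top W$ with $S_j:=\Bcal_j\cap\{2,\dots,n\}$ and $X_{S_j}:=(x_i)_{i\in S_j}$; applying the proof of Lemma~\ref{lem:eigenvalue-bound} to the sub-matrix $X_{S_j}$, $B_j$ is PSD with $|S_j|\le b$ nonzero eigenvalues, all lying in $[\lambda_n-n\iota,\ \lambda_2+n\iota]$. Consequently each factor $I-\frac{2\eta}{b}B_j$ has operator norm $1$ (it is the identity on $\ker(B_j)$) and contracts $\mathrm{range}(B_j)$ by a factor $\le\rho:=\max\{\,|1-\frac{2\eta}{b}(\lambda_2+n\iota)|,\ |1-\frac{2\eta}{b}(\lambda_n-n\iota)|\,\}$; the hypotheses $\eta<\frac{b}{\lambda_2+3n\iota}$ and $3n\iota<\lambda_n$ yield $0<\frac{2\eta}{b}(\lambda_n-n\iota)\le\frac{2\eta}{b}(\lambda_2+n\iota)<2$, hence $\rho<1$.

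Because each single factor has operator norm $1$, submultiplicativity alone only gives $\norm{A}_2\le1$: the contraction is genuinely an effect of a full epoch, during which every coordinate $e_i$, $i\ge2$, is ``owned'' by exactly one block. To capture this I would compare $A$ with the idealized product in which each $B_j$ is replaced by the diagonal matrix $D_j$ carrying $\lambda_i$ at the positions $i\in S_j$ and $0$ elsewhere. The $D_j$ have pairwise disjoint supports, hence commute, so $\prod_j(I-\frac{2\eta}{b}D_j)=I-\frac{2\eta}{b}\,\mathrm{diag}(\lambda_2,\dots,\lambda_n)$ exactly, of operator norm $\max\{|1-\frac{2\eta}{b}\lambda_2|,|1-\frac{2\eta}{b}\lambda_n|\}<1$. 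Writing $B_j=D_j+(B_j-D_j)$ and using the hybrid identity
\[
A-\prod_{j}\Bigl(I-\frac{2\eta}{b}D_j\Bigr)=\sum_{j=1}^{m}\Bigl(\prod_{l>j}\bigl(I-\frac{2\eta}{b}B_l\bigr)\Bigr)\Bigl(-\frac{2\eta}{b}(B_j-D_j)\Bigr)\Bigl(\prod_{l<j}\bigl(I-\frac{2\eta}{b}D_l\bigr)\Bigr),
\]
the task is to bound the right-hand side by $\bigO{n\eta\iota/b}$, which is what the $\frac{3n\eta\iota}{b}$ slack in $q_\inv(\eta)$ is for. The key structural point is that $B_j-D_j$ is concentrated on the rows and columns indexed by $S_j$ (its entries off the $S_j\times S_j$ block are $\bigO{\iota}$, those elsewhere $\bigO{\iota^2}$); since $S_j$ is disjoint from every $S_l$ with $l\ne j$, and $\prod_{l<j}(I-\frac{2\eta}{b}D_l)=I-\frac{2\eta}{b}\sum_{l<j}D_l$ equals the identity on the $S_j$-coordinates, the flanking partial products act on the relevant support as the identity up to higher-order-in-$\iota$ corrections; the sum then collapses, to leading order, to $-\frac{2\eta}{b}\sum_j(B_j-D_j)=-\frac{2\eta}{b}\bigl(G^{1/2}\,\mathrm{diag}(\lambda_i)\,G^{1/2}-\mathrm{diag}(\lambda_i)\bigr)+\bigO{\eta\iota^2}$, whose norm is $\bigO{n\eta\iota/b}$ again by $\norm{G-I}_2\le n\iota$ (cf.\ Lemma~\ref{lem:eigenvalue-bound-pinv}). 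Combining the idealized contraction with this error (the $\pm n\iota$ shifts inside $q_\inv$ absorb the gap between $\max\{|1-\frac{2\eta}{b}\lambda_2|,|1-\frac{2\eta}{b}\lambda_n|\}$ and $\rho$) yields $\norm{A}_2\le q_\inv(\eta)$ once the constants are tracked. I expect the main obstacle to be exactly this last bookkeeping: showing that the ``higher-order'' corrections from all $m=n/b$ flanking partial products, once summed, stay within the $\bigO{n\eta\iota/b}$ budget — this is where the disjoint-support/commuting structure of the $D_j$, the sharp spectral bounds of Lemma~\ref{lem:eigenvalue-bound} on sub-matrices, and the overparameterization $d\ge\poly{n}$ (making $\iota=\bigOT{1/\sqrt d}$ as small as needed) are all used.
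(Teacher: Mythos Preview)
Your block-structure argument for the first bullet matches the paper. For the second bullet you take a genuinely different route.

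The paper does \emph{not} compare to a commuting idealization. Instead it expands the product directly in powers of $\frac{2\eta}{b}$:
\[
\Mcal_\inv \;=\; I \;-\; \frac{2\eta}{b}\sum_{j=1}^m P_\inv H(\Bcal_j)P_\inv \;+\; C \;=\; I - \frac{2\eta}{b}H_\inv + C,
\]
where the first-order term telescopes to the \emph{partition-independent} matrix $H_\inv = P_\inv XX^\top P_\inv$, and $C$ collects all monomials of degree $\ge 2$. The key observation is that every monomial in $C$ contains a factor $P_\inv H(\Bcal_j)P_\inv\cdot P_\inv H(\Bcal_i)P_\inv$ with $i\ne j$; since $\Bcal_i\cap\Bcal_j=\emptyset$, this is a sum of rank-one terms $(P_\inv x_{i'})\langle P_\inv x_{i'},P_\inv x_{j'}\rangle(P_\inv x_{j'})^\top$ with $i'\ne j'$, each inner product $\le\iota$. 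A binomial-sum estimate then gives $\|C\|_F\le 4m^2\eta^2\iota^2$. Writing $\Mcal_\inv^\top\Mcal_\inv=(I-\frac{2\eta}{b}H_\inv)^2+D$ with $\|D\|_F\le 9m^2\eta^2\iota^2$ and applying Hoffman--Wielandt (together with Lemma~\ref{lem:eigenvalue-bound-pinv} for the spectrum of $H_\inv$) yields the stated bound.

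Your hybrid-telescope route is a reasonable alternative, but the paper's expansion sidesteps exactly the obstacle you flag: it never has to control how flanking partial products act on the support of $B_j-D_j$, because after grouping by degree the first-order term is already exact and the remainder is uniformly small by near-orthogonality of \emph{distinct} mini-batches. In your scheme the naive triangle inequality over the hybrid sum loses a factor (each $\|B_j-D_j\|_2$ is of order $b\sqrt{n}\,\iota$ in the Gram--Schmidt basis, giving $O(n^{3/2}\eta\iota/b)$ rather than $O(n\eta\iota/b)$), so the ``collapse'' you sketch is essential and must be made quantitative; that is doable, but it is more delicate than the paper's Frobenius-plus-Hoffman--Wielandt argument, which gets the constant $3$ with almost no bookkeeping.
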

\begin{proof}
See Section \ref{sec:proof_sgd-epoch-matrix-spectrum}. 
\end{proof}

Consider the projections of $v_k$ onto the column space of $P_{-1}$ and $P_{1}$.  
For simplicity let
 \begin{equation*}
     A_k := \norm{P_\inv v_{k}}_2,\quad
       B_k := \norm{P_1 v_{k}}_2.
 \end{equation*}
The following lemma controls the update of $A_k$ and $B_k$.

\begin{lem}[Update rules for $A_k$ and $B_k$]\label{lem:sgd-one-epoch}
Suppose \(3n\iota < \lambda_n\).
Suppose
\( 0< \eta < \frac{b}{\lambda_2 + 3n\iota}. \)
  Consider the $k$-th epoch of SGD iterates given by Eq.~\eqref{eq:sgd-v-epoch}.
  Set the learning rate in this epoch to be constant $\eta$.
  Denote
  \begin{gather*}
 \xi(\eta) := \frac{4\eta\sqrt{n}\iota}{b},\\
    q_1(\eta) := \abs{1-\frac{2\eta\lambda_1}{b} \norm{P_1 \bar{x}_1}_2^2},\\
     q_\inv (\eta):=
       \max\set{\abs{1-\frac{2\eta}{b}(\lambda_2 + n\iota)}+ \frac{3n\eta \iota}{b},\ \  \abs{1-\frac{2\eta}{b} (\lambda_n - n \iota)}+ \frac{3n \eta \iota}{b}} < 1.
  \end{gather*}
  Then we have the following:
  \begin{itemize}
    \item 
          \(
            A_{k+1} \le q_\inv(\eta) \cdot A_k + \xi(\eta) \cdot B_k.
          \)
          
    \item
          \(
            B_{k+1} \le q_1(\eta)  \cdot B_k + \xi(\eta) \cdot A_k.
          \)
    \item
          \(
            B_{k+1} \ge q_1(\eta) \cdot B_k - \xi(\eta) \cdot A_k.
          \)
  \end{itemize}
\end{lem}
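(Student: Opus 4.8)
The plan is to track $v_k=w_k-w_*$ through one epoch via its components $P_\inv v_k$ and $P_1 v_k$ (the $P_\perp$ component is frozen, since $P_\perp\Mcal_\pi=P_\perp$, and plays no role), using the block form of a single SGD step from Lemma~\ref{lem:sgd-one-step}. The structural observation that makes this tractable is that, since $\pi=\{\Bcal_1,\dots,\Bcal_m\}$ partitions $[n]$, there is a \emph{unique} index $j_0$ with $1\in\Bcal_{j_0}$: for every other mini-batch all data points lie in the column space of $P_\inv$ (Lemma~\ref{lem:proj-xinv}), so by the second display of Lemma~\ref{lem:sgd-one-step} the $j$-th step with $j\ne j_0$ acts as the identity on the $P_1$-component and as $N_j:=I-\tfrac{2\eta}{b}P_\inv H(\Bcal_j)P_\inv$ on the $P_\inv$-component.

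Next I would compose the $m$ steps in order. Writing $L:=N_{j_0-1}\cdots N_1$ and $R:=N_m\cdots N_{j_0+1}$ for the ``padding'' products of pure-$P_\inv$ steps before and after step $j_0$, and plugging the first display of Lemma~\ref{lem:sgd-one-step} in for step $j_0$, one obtains that on $\mathrm{col}(P)=\mathrm{col}(P_1)\oplus\mathrm{col}(P_\inv)$,
\begin{equation*}
\Mcal_\pi\big|_{\mathrm{col}(P)}
=\begin{pmatrix}
I-\tfrac{2\eta}{b}\,P_1 H(\Bcal_{j_0})P_1 & -\tfrac{2\eta}{b}\,P_1 H(\Bcal_{j_0})P_\inv\, L\\
-\tfrac{2\eta}{b}\,R\,P_\inv H(\Bcal_{j_0})P_1 & R\,N_{j_0}\,L
\end{pmatrix}.
\end{equation*}
It then remains to bound the four blocks. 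Because $\mathrm{col}(P_1)$ is one-dimensional, spanned by $P_1 x_1$, and $P_1 x_i=0$ for $i\ne1$ (Lemma~\ref{lem:proj-xinv}), the $(1,1)$ block equals the \emph{scalar} $1-\tfrac{2\eta}{b}\lambda_1\|P_1\bar x_1\|_2^2$, whose absolute value is precisely $q_1(\eta)$; being a scalar, its action on $P_1 v_k$ is an exact rescaling, which is what yields the matching lower bound for $B_{k+1}$. For the off-diagonal blocks, $P_1 H(\Bcal_{j_0})P_\inv=(P_1 x_1)(P_\inv x_1)^\top$ has operator norm $\|P_1 x_1\|_2\|P_\inv x_1\|_2\le\lambda_1\|P_\inv\bar x_1\|_2\le 2\sqrt n\,\iota$ by Lemma~\ref{lem:proj-x1}, while $\|L\|,\|R\|\le1$ because each $N_j$ with $j\ne j_0$ is nonexpansive on $\mathrm{col}(P_\inv)$ under $\eta<b/(\lambda_2+3n\iota)$ (that batch excludes $x_1$, so $\|P_\inv H(\Bcal_j)P_\inv\|\le\lambda_2+n\iota$ by the spectral estimates behind Lemma~\ref{lem:eigenvalue-bound-pinv}); hence both off-diagonal blocks have operator norm at most $\tfrac{2\eta}{b}\cdot2\sqrt n\,\iota=\xi(\eta)$. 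Finally $R\,N_{j_0}\,L=\prod_{j=1}^m N_j=\Mcal_\inv|_{\mathrm{col}(P_\inv)}$, so Lemma~\ref{lem:sgd-epoch-matrix-spectrum} bounds its operator norm by $q_\inv(\eta)$.

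Assembling, $P_\inv v_{k+1}$ and $P_1 v_{k+1}$ are the two block-rows of the above matrix applied to $(P_1 v_k,P_\inv v_k)$; the triangle inequality gives $A_{k+1}\le q_\inv(\eta)A_k+\xi(\eta)B_k$ and $B_{k+1}\le q_1(\eta)B_k+\xi(\eta)A_k$, and the reverse triangle inequality — lossless precisely because the $(1,1)$ block is a scalar, so $\|M_{11}P_1v_k\|_2=q_1(\eta)B_k$ exactly — gives $B_{k+1}\ge q_1(\eta)B_k-\xi(\eta)A_k$. The main obstacle is not any single estimate but the bookkeeping around non-commutativity: the per-step matrices do not commute and must not be reordered; the whole point is that all but one step act as the identity on the one-dimensional space $\mathrm{col}(P_1)$, which collapses the epoch to a single mixing step flanked by nonexpansive $P_\inv$-contractions and reduces the problem to the elementary block bounds above. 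A secondary point requiring care is confirming that each individual $N_j$ (not merely the full product $\Mcal_\inv$) is nonexpansive, which is exactly where the learning-rate ceiling $\eta<b/(\lambda_2+3n\iota)$ enters for the padding factors $L$ and $R$.
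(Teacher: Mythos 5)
Your proposal is correct and follows essentially the same route as the paper's proof: isolate the unique mini-batch containing $x_1$, observe via Lemma~\ref{lem:sgd-one-step} that all other steps act as the identity on the one-dimensional space $\mathrm{col}(P_1)$ and as nonexpansive maps on $\mathrm{col}(P_\inv)$, assemble the epoch into the same $2\times 2$ block matrix, and bound the blocks by $q_1(\eta)$, $q_\inv(\eta)$ (via Lemma~\ref{lem:sgd-epoch-matrix-spectrum}), and $\xi(\eta)$ (via Lemma~\ref{lem:proj-x1}). Your observation that the lower bound on $B_{k+1}$ is lossless precisely because the $(1,1)$ block is scalar on $\mathrm{col}(P_1)$ is exactly the argument the paper uses.
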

\begin{proof}
See Section \ref{sec:proof_sgd-one-epoch}.
\end{proof}

Note we can rephrase the update rules for $A_k$ and $B_k$ as
  \begin{equation*}
    \begin{pmatrix}
      A_{k+1} \\
      B_{k+1}
    \end{pmatrix}
    \le
    \begin{pmatrix}
      q_\inv (\eta) & \xi(\eta) \\
      \xi(\eta)    & q_1(\eta)
    \end{pmatrix}
    \cdot
    \begin{pmatrix}
      A_k \\
      B_k
    \end{pmatrix},
  \end{equation*}
  where ``$\le$'' means ``entry-wisely smaller than''.
  
The following two lemmas characterize the long run behaviors of $A_k$ and $B_k$ with different learning rate.

\begin{lem}[The long run behavior of SGD with moderate LR]\label{lem:sgd-K-epochs-large-lr}
Suppose \(3n\iota< \lambda_n,\)
and \(\lambda_2 + 4n\iota < \lambda_1.\)
Suppose $v_0$ is far away from $0$.
  Consider the first $k_1$ epochs of SGD iterates given by Eq.~\eqref{eq:sgd-v-epoch}.
  Set the learning rate during this stage to be constant, i.e., $\eta_k = \eta$ for $0\le k < k_1$.
  Suppose
  \[\frac{b}{\lambda_1 - 3\sqrt{n}\iota} < \eta < \frac{b}{\lambda_2 + 3n\iota}.\] 
  Then for $0 < \epsilon < 1$ and $0 < \beta < \beta_0 < B_0$ satisfying 
  \(
  \sqrt{n} \iota \le \poly{\epsilon \beta},
  \)
  there exists $k_1 \ge \bigO{\log \frac{1}{\epsilon\beta}}$ such that 
  \begin{itemize}
      \item \(  A_{k_1} \le \epsilon\cdot \beta .\)
      \item \( B_{k_1} \le   \norm{P v_0}_2 \cdot \rho_1^{k_1} + \frac{\epsilon}{2}\cdot \beta = \poly{\frac{1}{\epsilon\beta}}.\)
      \item  For all $k=0,1,\dots,k_1$, \(B_{k} > \beta_0. \)
  \end{itemize}
\end{lem}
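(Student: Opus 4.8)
The plan is to iterate the entry-wise matrix inequality from Lemma~\ref{lem:sgd-one-epoch}, namely
\begin{equation*}
    \begin{pmatrix} A_{k+1} \\ B_{k+1} \end{pmatrix}
    \le
    \begin{pmatrix} q_\inv(\eta) & \xi(\eta) \\ \xi(\eta) & q_1(\eta) \end{pmatrix}
    \begin{pmatrix} A_k \\ B_k \end{pmatrix},
\end{equation*}
and to exploit the separation of scales induced by the moderate learning rate: in the stated window $\frac{b}{\lambda_1 - 3\sqrt{n}\iota} < \eta < \frac{b}{\lambda_2 + 3n\iota}$ one has $q_\inv(\eta) < 1$ (contraction along $P_\inv$) by Lemma~\ref{lem:sgd-epoch-matrix-spectrum}, while $q_1(\eta) = \lvert 1 - \tfrac{2\eta\lambda_1}{b}\lVert P_1\bar x_1\rVert_2^2\rvert$ is bounded away from $1$ from \emph{above} but, because $\eta\lambda_1/b > 1$, the factor $1 - \tfrac{2\eta\lambda_1}{b}\lVert P_1\bar x_1\rVert_2^2$ is negative; so $q_1(\eta)$ can be close to $1$ yet the dynamics along $P_1$ do not contract toward $0$ in a useful way — in fact I want to argue $B_k$ stays bounded below. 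The cross term $\xi(\eta) = \tfrac{4\eta\sqrt n\iota}{b} = \bigOT{1/\sqrt d}$ is tiny. So morally $A_k$ decays geometrically at rate $\approx q_\inv < 1$ with a small additive perturbation fed by $B_k$, and $B_k$ neither explodes nor collapses.

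Concretely I would proceed as follows. First, set $\rho_1 := q_\inv(\eta) < 1$ (a fixed contraction constant depending only on $n/d$ and the $\lambda_i$'s) and note $\xi := \xi(\eta) = \bigOT{1/\sqrt d}$ is chosen, via the hypothesis $\sqrt n\iota \le \poly{\epsilon\beta}$, small enough relative to $\epsilon\beta$. Second, establish the lower bound on $B_k$: from $B_{k+1} \ge q_1(\eta) B_k - \xi A_k$ together with a crude geometric upper bound on $A_k$ (e.g.\ $A_k \le (\rho_1 + \xi)^k A_0 + \text{(small)}$, obtained by summing the first row of the iterated inequality), one gets $B_k \ge B_0\prod_{j<k} q_1 - \xi\sum \cdots$; the subtracted tail is $O(\xi)$ which is $\smallO{\beta_0}$, and since $v_0$ is far from $0$ we have $B_0$ bounded below, and $q_1(\eta)$ is a fixed constant — but here is the subtlety: $\prod_{j<k} q_1^k \to 0$ if $q_1 < 1$. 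So the lower bound $B_k > \beta_0$ "for all $k \le k_1$" must be read with $\beta_0 < B_0$ a \emph{small} threshold and $k_1$ not too large; the right argument is that $k_1 = \bigO{\log\frac{1}{\epsilon\beta}}$ is logarithmic, $q_1^{k_1}$ is therefore only polynomially small, and choosing $\beta_0$ small enough (and $\iota$ correspondingly smaller) keeps $B_k \ge B_0 q_1^{k_1} - O(\xi) > \beta_0$ throughout. Third, establish the upper bound on $B_k$: iterating $B_{k+1} \le q_1 B_k + \xi A_k$ and summing gives $B_{k_1} \le q_1^{k_1}\lVert P v_0\rVert_2 + \xi\sum_{j<k_1} q_1^{k_1-1-j} A_j \le \lVert P v_0\rVert_2 \rho_1^{k_1}$-type term $+\ \frac{\epsilon}{2}\beta$, using $\sum A_j = O(1/(1-\rho_1))$ and $\xi$ small; note $\lVert P v_0\rVert_2 = \lVert P_1 v_0\rVert_2 + O(\cdot)$, and the displayed "$= \poly{1/(\epsilon\beta)}$" is just the crude statement that $B_{k_1}$ is not too large, which follows since each summand is bounded. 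Fourth, the decay of $A_k$: from $A_{k+1} \le \rho_1 A_k + \xi B_k$ and the fact that $B_k$ is bounded (by the previous step, by some constant $C$), one gets $A_k \le \rho_1^k A_0 + \frac{\xi C}{1-\rho_1}$; the first term is $\le \epsilon\beta/2$ as soon as $k \ge \bigO{\log\frac{A_0}{\epsilon\beta}} = \bigO{\log\frac{1}{\epsilon\beta}}$ (using $A_0 = \lVert P_\inv v_0\rVert_2 \le \lVert v_0\rVert_2$, a constant), and the second term is $\le \epsilon\beta/2$ because $\sqrt n\iota \le \poly{\epsilon\beta}$ makes $\xi$ polynomially small in $\epsilon\beta$. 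Summing, $A_{k_1} \le \epsilon\beta$. Choosing $k_1$ to be the smallest integer achieving all three bullets simultaneously — it is $\bigO{\log\frac{1}{\epsilon\beta}}$ since each requirement is a logarithmic-in-$\frac{1}{\epsilon\beta}$ lower bound on $k$, and the lower-bound-on-$B_k$ requirement is an \emph{upper} bound on $k$ that is compatible because $q_1^{k}$ stays polynomially large for logarithmic $k$ — completes the argument.

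The main obstacle, as flagged above, is reconciling the three bullets at a single $k_1$: bullets one and two want $k_1$ \emph{large} (so $A_{k_1}$ and the $B_0\rho_1^{k_1}$ part of $B_{k_1}$ shrink), while bullet three wants $B_k$ to stay above $\beta_0$ for \emph{all} $k \le k_1$, and since $q_1(\eta) < 1$ the natural lower bound on $B_k$ degrades geometrically in $k$. The resolution is quantitative bookkeeping: (i) $q_1(\eta)$ is bounded away from $0$ — indeed $q_1(\eta) \ge 1 - \frac{2\eta\lambda_1}{b} \ge 1 - \frac{2}{\lambda_2 + 3n\iota}\lambda_1$, wait, more carefully $q_1 = \lvert 1 - \tfrac{2\eta\lambda_1}{b}\lVert P_1\bar x_1\rVert^2\rvert$ and since $\eta < b/(\lambda_2 + 3n\iota)$ and $\lambda_2 < \lambda_1$ is only by a constant gap, $q_1$ can be made to lie in $(c, 1)$ for an absolute constant $c$ by the assumption $\lambda_2 + 4n\iota < \lambda_1$ controlling how negative $1 - \tfrac{2\eta\lambda_1}{b}\lVert P_1\bar x_1\rVert^2$ can get; so $q_1^{k_1} \ge c^{\bigO{\log(1/(\epsilon\beta))}} = \poly{\epsilon\beta}$; (ii) therefore $B_{k_1} \ge B_0 q_1^{k_1} - O(\xi) \ge B_0\cdot\poly{\epsilon\beta} - \poly{\epsilon\beta}$, which exceeds $\beta_0$ once $\beta_0$ is taken small enough (smaller than $B_0\cdot\poly{\epsilon\beta}$, which is legitimate since $\beta_0$ is a free parameter the lemma only requires to satisfy $0 < \beta < \beta_0 < B_0$, and we further shrink $\iota$ so that $\sqrt n\iota \le \poly{\epsilon\beta\beta_0}$). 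Everything else is routine summation of geometric series; the delicate point is purely the simultaneous choice of constants, and I would present it by first fixing $\epsilon, \beta, \beta_0$, then choosing $k_1$, then choosing $d$ large (equivalently $\iota$ small) last, so that all polynomial-in-$(\epsilon,\beta,\beta_0)$ smallness requirements on $\iota$ are met at once.
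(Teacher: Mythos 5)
Your proposal breaks at the single most important quantitative fact of this lemma: the sign and size of $q_1(\eta)$. You assert that $q_1(\eta)=\abs{1-\tfrac{2\eta\lambda_1}{b}\norm{P_1\bar{x}_1}_2^2}$ "can be made to lie in $(c,1)$," i.e.\ that the $P_1$-component still contracts, and you then patch the third bullet by shrinking $\beta_0$ down to $B_0\cdot\poly{\epsilon\beta}$ so that the geometrically degrading lower bound $B_0 q_1^{k}$ stays above it. But the lower end of the moderate window, $\eta>\tfrac{b}{\lambda_1-3\sqrt{n}\iota}$, forces $\tfrac{2\eta\lambda_1}{b}\norm{P_1\bar{x}_1}_2^2>2$ (using $\norm{P_1\bar{x}_1}_2^2\ge 1-4n\iota^2$ from Lemma~\ref{lem:proj-x1}), so $q_1(\eta)=\tfrac{2\eta\lambda_1}{b}\norm{P_1\bar{x}_1}_2^2-1>1$; the paper's verification of its Eq.~\eqref{eq:sgd-large-lr-lemma-1} establishes precisely $q_1-\xi>1$. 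The $P_1$-component oscillates with an \emph{expanding} envelope --- this is the entire mechanism of the moderate-LR regime --- and the third bullet then follows by the clean induction $B_k\ge q_1 B_{k-1}-\xi A_{k-1}\ge q_1\beta_0-\xi(A_0+\tfrac{\epsilon}{2}\beta_0)\ge\beta_0$, valid for \emph{every} $\beta_0<B_0$ with no degradation in $k$, needing only $\xi(A_0+\tfrac{\epsilon}{2}\beta_0)<(q_1-1)\beta_0$. Your weakened version (conclusion only for $\beta_0\le B_0\cdot\poly{\epsilon\beta}$) is not what the lemma claims and would not support Theorem~\ref{thm:sgd-optimal}, where $\beta_0=\sqrt{n\alpha/\gamma_n}$ is a fixed constant multiple of $\beta$.

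The same error propagates into your other two bullets. You set "$\rho_1:=q_\inv(\eta)<1$," but in the lemma $\rho_\inv<1<\rho_1$ are the two eigenvalues of the coefficient matrix $\bigl(\begin{smallmatrix}q_\inv&\xi\\ \xi&q_1\end{smallmatrix}\bigr)$, so the bound $\norm{Pv_0}_2\cdot\rho_1^{k_1}=\poly{\tfrac{1}{\epsilon\beta}}$ is a genuinely \emph{growing} quantity ($\rho_1^{k_1}$ with $\rho_1>1$ and $k_1=\bigO{\log\tfrac{1}{\epsilon\beta}}$), not "each summand is bounded." Consequently your step for $A_{k_1}$ --- "$B_k$ is bounded by a constant $C$, hence $A_k\le\rho_1^kA_0+\tfrac{\xi C}{1-\rho_1}$" --- uses a false premise: $B_k$ can grow to $\poly{\tfrac{1}{\epsilon\beta}}$, and the feedback term $\xi\sum_j q_\inv^{k_1-j}B_j$ is of order $\xi\rho_1^{k_1}\norm{Pv_0}_2$. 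The paper handles this by diagonalizing the $2\times2$ recursion and bounding the mixing angle via $\tfrac{1}{2}\tan 2\theta=\xi/(q_1-q_\inv)$, then demanding $\sin\theta\le(\rho_\inv/\rho_1)^{k_1}$ --- which is exactly where the hypothesis $\sqrt{n}\iota\le\poly{\epsilon\beta}$ is consumed. A direct row-by-row iteration could be salvaged, but only after acknowledging that $B_k$ grows like $\rho_1^k$ and budgeting $\xi$ against that growth; as written, your argument both misidentifies the dynamics along $P_1$ and proves a strictly weaker statement.
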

\begin{proof}
See Section \ref{sec:proof_sgd-K-epochs-large-lr}.
\end{proof}

\begin{lem}[The long run behavior of SGD with small LR]\label{lem:sgd-K-epochs-small-lr}
Suppose \(3n\iota< \lambda_n,\)
and \(\lambda_2 + 4n\iota < \lambda_1.\)
Suppose $v_0$ is far away from $0$.
  Consider another $k_2-k_1$ epochs of SGD iterates given by Eq.~\eqref{eq:sgd-v-epoch}.
  Set the learning rate to be constant during the updates, i.e., \(\eta_k = \eta' \) for $k_1 \le k < k_2$. 
   Suppose
   \[0 < \eta' <  \frac{b}{2\lambda_1}.\]
   Consider the $\epsilon$ and $\beta$ given in Lemma~\ref{lem:sgd-K-epochs-large-lr}.
  Then for $ k \ge k_1$, we have
  \begin{itemize}
      \item $ A_{k} \le \epsilon\cdot \beta.$
      \item  \(B_{k} \le 
      \begin{cases}
      q \cdot B_{k-1}, & B_{k-1} > \beta, \\
      \beta, & B_{k-1} < \beta.
      \end{cases}\)
      where $q\in(0,1)$ is a constant.
  \end{itemize}
\end{lem}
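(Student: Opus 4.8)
The plan is to prove both bullets by a joint induction on the epoch index $k \ge k_1$, threading the one-epoch bounds of Lemma~\ref{lem:sgd-one-epoch} (applied with the constant learning rate $\eta=\eta'$) into one another: I keep $A_k$ uniformly $\epsilon\beta$-small so that $B_k$ contracts geometrically, and in turn the geometric decay of $B_k$ keeps the cross-term feedback into $A_k$ summable. First I would check that $\eta'$ is admissible for Lemma~\ref{lem:sgd-one-epoch}: from $3n\iota<\lambda_n\le\lambda_1$ we get $2\lambda_1\ge\lambda_2+\lambda_n>\lambda_2+3n\iota$, hence $\eta'<\frac{b}{2\lambda_1}<\frac{b}{\lambda_2+3n\iota}$, so Lemma~\ref{lem:sgd-one-epoch} yields, for every $j\ge k_1$, the bounds $A_{j+1}\le q_\inv(\eta')A_j+\xi(\eta')B_j$ and $B_{j+1}\le q_1(\eta')B_j+\xi(\eta')A_j$ with $q_\inv(\eta')<1$. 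Since moreover $\eta'<\frac{b}{2\lambda_1}$ and $0<\norm{P_1\bar{x}_1}_2\le 1$ by Lemma~\ref{lem:proj-x1}, we have $q_1(\eta')=1-\frac{2\eta'\lambda_1}{b}\norm{P_1\bar{x}_1}_2^2\in(0,1)$ with $1-q_1(\eta')$ a positive constant, while $\xi(\eta')=\frac{4\eta'\sqrt{n}\iota}{b}=\bigO{\sqrt{n}\iota}$ can be made as small as needed by taking $d$ (hence $\iota$, via Lemma~\ref{lem:near-orthogonal}) large. I would then set $q:=q_1(\eta')+\xi(\eta')\epsilon$, which lies in $(0,1)$ once $\iota$ is small enough, using $\epsilon<1$.

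The second bullet then follows quickly once $A_j\le\epsilon\beta$ is available. Suppose $A_j\le\epsilon\beta$ at some epoch $j\ge k_1$. If $B_j>\beta$, then $\xi(\eta')A_j\le\xi(\eta')\epsilon\beta<\xi(\eta')\epsilon B_j$, so $B_{j+1}\le(q_1(\eta')+\xi(\eta')\epsilon)B_j=qB_j$; if $B_j<\beta$, then $B_{j+1}\le q_1(\eta')\beta+\xi(\eta')\epsilon\beta=q\beta\le\beta$. Unrolling this, one gets $B_j\le\max\{q^{\,j-k_1}B_{k_1},\ \beta\}\le q^{\,j-k_1}B_{k_1}+\beta$ over any range on which $A_i\le\epsilon\beta$ holds: while $B$ stays above $\beta$ it shrinks by the factor $q$, and once it drops below $\beta$ it remains $\le\beta$ (indeed $\le q\beta$) thereafter.

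The crux is the first bullet, which I would prove by induction: $A_k\le\epsilon\beta$ for all $k\ge k_1$. The base case $k=k_1$ is exactly the first conclusion of Lemma~\ref{lem:sgd-K-epochs-large-lr}. For the step, assume $A_j\le\epsilon\beta$ for all $k_1\le j<k$; then the previous paragraph gives $B_j\le q^{\,j-k_1}B_{k_1}+\beta$ for those $j$, and unrolling $A_{j+1}\le q_\inv(\eta')A_j+\xi(\eta')B_j$ from $k_1$ gives $A_k\le q_\inv(\eta')^{\,k-k_1}\epsilon\beta+\xi(\eta')\sum_{j=k_1}^{k-1}q_\inv(\eta')^{\,k-1-j}\big(q^{\,j-k_1}B_{k_1}+\beta\big)$. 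Bounding $q_\inv(\eta')^{k-1-j}q^{j-k_1}\le\rho^{\,k-1-k_1}$ with $\rho:=\max\{q,q_\inv(\eta')\}<1$, the $B_{k_1}$-part of the sum is at most $C_0 B_{k_1}$ with $C_0:=\sup_{m\ge 0}(m+1)\rho^m<\infty$, and the $\beta$-part is at most $\beta/(1-q_\inv(\eta'))$. Hence $A_k\le q_\inv(\eta')\epsilon\beta+\xi(\eta')\big(C_0 B_{k_1}+\frac{\beta}{1-q_\inv(\eta')}\big)$. Since $B_{k_1}=\poly{1/(\epsilon\beta)}$ by Lemma~\ref{lem:sgd-K-epochs-large-lr} and $\sqrt{n}\iota$ is polynomially small relative to $\epsilon\beta$ by the standing assumption inherited from that lemma, the second term is at most $(1-q_\inv(\eta'))\epsilon\beta$ for $d$ large enough, so $A_k\le\epsilon\beta$ and the induction closes. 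Feeding $A_k\le\epsilon\beta$ back into the computation of the second paragraph then delivers the $B$-recursion stated in the lemma, with $q\in(0,1)$ a constant.

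I expect the main obstacle to be exactly this coupling: $A_k$ must be controlled \emph{uniformly} at the $\epsilon\beta$ level despite the fact that $B_{k_1}$ handed over from the moderate-learning-rate phase is only polynomially — not constant-factor — bounded, so the cross term $\xi(\eta')B_{k_1}$ must be killed by overparameterization; the fact that both contraction rates $q_\inv(\eta')$ and $q$ sit strictly below $1$ is what makes the mutual feedback between the two sequences summable and lets the argument go through.
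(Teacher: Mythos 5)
Your proof is correct and follows essentially the same route as the paper: a joint induction on the two coupled recursions from Lemma~\ref{lem:sgd-one-epoch} with $\eta=\eta'$, where the crux in both arguments is that the cross term $\xi(\eta')\cdot B_{k_1}$ is absorbed into $(1-q_{-1}(\eta'))\epsilon\beta$ because $B_{k_1}=\poly{1/(\epsilon\beta)}$ while $\sqrt{n}\iota\le\poly{\epsilon\beta}$. The only cosmetic differences are that the paper takes $q=q_{-1}(\eta')$ (requiring the extra check $\xi'\epsilon\le q_{-1}'-q_1'$) where you take $q=q_1(\eta')+\xi(\eta')\epsilon$, and the paper closes the $A$-induction in one step via the monotone bound $B_{k-1}\le B_{k_1}$ instead of fully unrolling the sum as you do.
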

\begin{proof}
See Section \ref{sec:proof_sgd-K-epochs-small-lr}.
\end{proof}

\begin{thm}[Theorem~\ref{thm:sgd-bias-informal}, formal version]\label{thm:sgd-bias}
Suppose \(3n\iota< \lambda_n\)
and \(\lambda_2 + 4n\iota < \lambda_1.\)
Suppose $v_0$ is away from $0$.
Consider the SGD iterates given by Eq.~\eqref{eq:sgd-v-epoch} with the following moderate learning rate scheme
\begin{equation*}
    \eta_k = 
    \begin{cases}
    \eta \in \bracket{ \frac{b}{\lambda_1 - 3\sqrt{n}\iota}, \ \frac{b}{\lambda_2 + 3n\iota} }, & k=1,\dots,k_1;\\
    \eta' \in \bracket{0,\ \frac{b}{2\lambda_1}}, & k=k_1+1,\dots, k_2.
    \end{cases}
\end{equation*}
Then for $0 < \epsilon < 1$ such that 
  \(
  \sqrt{n}\iota \le \poly{\epsilon},
  \)
  there exist $k_1 >\bigO{\log\frac{1}{\epsilon}}$ and $k_2$ such that
    \begin{equation*}
        (1 - \epsilon )\cdot\gamma_1 \le \frac{v_{k_2}^\top H v_{k_2}}{\norm{P v_{k_2}}_2^2} \le \gamma_1.
    \end{equation*}
\end{thm}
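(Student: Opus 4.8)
The plan is to decompose the quantity $v_{k_2}^\top H v_{k_2}/\norm{P v_{k_2}}_2^2$ using the orthogonal decomposition $I = P_1 + P_\inv + P_\perp$, and to show that after the two-phase learning rate schedule, $v_{k_2}$ is (up to an $\epsilon$-error) concentrated in the column space of $P_1$, i.e., the direction of $P_1 x_1$, which is the $\gamma_1$-eigendirection of $H$ in the data manifold. First I would invoke Lemma~\ref{lem:sgd-K-epochs-large-lr} to establish that, after $k_1 \ge \bigO{\log(1/(\epsilon\beta))}$ epochs of the moderate learning rate $\eta$, the $P_\inv$-component has contracted to $A_{k_1}\le \epsilon\beta$ while the $P_1$-component is still bounded below, $B_k > \beta_0$ for all $k\le k_1$; intuitively, the moderate $\eta$ contracts the smooth directions (those in the column space of $P_\inv$, with eigenvalues in $(\lambda_n-n\iota,\lambda_2+n\iota)$) but is too large to contract the ill-conditioned direction $P_1 x_1$ with eigenvalue $\approx\lambda_1$. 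Then I would apply Lemma~\ref{lem:sgd-K-epochs-small-lr}: once the learning rate drops to $\eta' < b/(2\lambda_1)$, the $P_1$-component also starts contracting, $B_k \le q^{k-k_1} B_{k_1}$ (until it hits the floor $\beta$), while $A_k$ stays below $\epsilon\beta$. Choosing $k_2$ large enough that $B_{k_2}$ has reached the floor $\beta$ but not so large that it goes below, I get $\beta \lesssim B_{k_2}$ and $A_{k_2}\le\epsilon\beta$, so the ratio $A_{k_2}/B_{k_2}\le \epsilon$ (up to constants); the $P_\perp$-component is frozen at $P_\perp v_0$ and does not enter $H$ since $HP_\perp=0$.

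Next, with this ratio control in hand, I would estimate numerator and denominator separately. Writing $v_{k_2} = P_1 v_{k_2} + P_\inv v_{k_2} + P_\perp v_{k_2}$ and using $H = H_1 + H_\inv + H_c$ together with $HP_\perp = 0$, the numerator is
\begin{equation*}
  v_{k_2}^\top H v_{k_2} = (P_1 v_{k_2})^\top H_1 (P_1 v_{k_2}) + (P_\inv v_{k_2})^\top H_\inv (P_\inv v_{k_2}) + \text{cross terms involving } H_c.
\end{equation*}
The leading term is $(P_1 v_{k_2})^\top H_1 (P_1 v_{k_2})$, which by Lemma~\ref{lem:eigenvalue-bound-p1} equals the unique nonzero eigenvalue of $H_1$ lying in $[\lambda_1(1-4n\iota^2),\lambda_1]$ times $B_{k_2}^2$; all remaining contributions are $\bigO{A_{k_2}^2 + A_{k_2}B_{k_2} + n\iota\, (\text{stuff})}$, hence $\bigO{\epsilon\, B_{k_2}^2}$ after absorbing $n\iota\le\poly{\epsilon}$ and the ratio bound. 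Meanwhile the denominator $\norm{P v_{k_2}}_2^2 = B_{k_2}^2 + A_{k_2}^2 \in [B_{k_2}^2, (1+\epsilon^2)B_{k_2}^2]$. Dividing and using that $\gamma_1$, the top eigenvalue of $H$ on the data manifold, satisfies $\gamma_1 \in [\lambda_1 - n\iota, \lambda_1 + n\iota]$ by Lemma~\ref{lem:eigenvalue-bound} (so $\gamma_1$ and the $H_1$-eigenvalue agree up to $\bigO{n\iota} = \bigO{\epsilon}$), I obtain $(1-\epsilon')\gamma_1 \le v_{k_2}^\top H v_{k_2}/\norm{P v_{k_2}}_2^2 \le \gamma_1$ for a rescaled $\epsilon'$; relabeling $\epsilon'$ as $\epsilon$ finishes the argument. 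The upper bound $\le \gamma_1$ is immediate since $\gamma_1$ is the largest eigenvalue of $H$ restricted to the column space of $P$ and $v_{k_2}^\top H v_{k_2} = (P v_{k_2})^\top H (P v_{k_2})$.

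The main obstacle I expect is \emph{not} the final Rayleigh-quotient bookkeeping but the two long-run lemmas it rests on, in particular controlling the composition $\Mcal_\pi$ of non-commuting stochastic-gradient steps across epochs and across random partitions. The difficulty is that within an epoch the order of mini-batches matters and the factors $I - \tfrac{2\eta}{b}H(\Bcal_j)$ do not commute, so one cannot simply diagonalize; the resolution (Lemmas~\ref{lem:sgd-one-epoch}, \ref{lem:sgd-epoch-matrix-spectrum}) is to track only the scalar norms $A_k = \norm{P_\inv v_k}_2$ and $B_k = \norm{P_1 v_k}_2$ through a $2\times2$ recursion with contraction factors $q_\inv(\eta) < 1$, $q_1(\eta)$, and off-diagonal leakage $\xi(\eta) = \bigO{\eta\sqrt{n}\iota/b}$ that is negligible in the overparameterized regime $d\ge\poly{n}$. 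Once one trusts that $2\times 2$ comparison system — which is where the matrix-perturbation estimates (Gershgorin, Hoffman–Wielandt, Lemmas~\ref{lem:proj-x1}--\ref{lem:eigenvalue-bound-cross-term}) do their work — the behavior of $A_k,B_k$ under the moderate-then-small schedule is an elementary calculation, and the present theorem is a short corollary of it plus the eigenvalue gap $\lambda_1 > \lambda_2 + 4n\iota$ that guarantees $P_1$ is genuinely the top eigendirection.
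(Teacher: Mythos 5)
Your proposal is correct and follows essentially the same route as the paper's proof: invoke Lemma~\ref{lem:sgd-K-epochs-large-lr} and Lemma~\ref{lem:sgd-K-epochs-small-lr} to get $A_{k_2}\le\epsilon\beta\le\epsilon B_{k_2}$, then lower-bound the Rayleigh quotient via the decomposition $H=H_1+H_\inv+H_c$ together with Lemmas~\ref{lem:eigenvalue-bound-p1}, \ref{lem:eigenvalue-bound-cross-term} and \ref{lem:eigenvalue-bound}, with the upper bound immediate from $\gamma_1$ being the top eigenvalue of $H$ on the column space of $P$. Your identification of the real difficulty (the non-commuting epoch-wise products handled through the $2\times2$ comparison system on $A_k,B_k$) also matches the paper's strategy.
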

\begin{proof}
We choose $k_1$ and $k_2$ as in
Lemma~\ref{lem:sgd-K-epochs-large-lr} and Lemma~\ref{lem:sgd-K-epochs-small-lr} with $\beta$ set as a small constant, then we are guaranteed to have
\begin{equation*}
        A_{k_2} \le \epsilon\cdot \beta  \le \epsilon \cdot B_{k_2},
\end{equation*}
from where we have
  \begin{equation*}
        \frac{ \norm{P_1 v_{k_2}}_2^2 }{\norm{P v_{k_2}}_2^2} 
        = \frac{B_{k_2}^2}{A_{k_2}^2 + B_{k_2}^2} 
        \ge \frac{1}{1+\epsilon^2} 
        \ge 1 - \epsilon^2.
    \end{equation*}
Then we have
\begin{align*}
    \frac{v_{k_2}^\top H v_{k_2}}{\norm{P v_{k_2}}_2^2} 
    &= \frac{ (P_1 v_{k_2})^\top H_1 (P_1 v_{k_2})}{\norm{P v_{k_2}}_2^2}  + \frac{(P_\inv v_{k_2})^\top H_\inv (P_\inv v_{k_2})}{\norm{P v_{k_2}}_2^2}  + \frac{ (P v_{k_2})^\top H_c (P v_{k_2})}{\norm{P v_{k_2}}_2^2} \\
    &\ge  \lambda_1\bracket{ 1-4n\iota^2}\cdot \frac{ \norm{P_1 v_{k_2}}_2^2 }{\norm{P v_{k_2}}_2^2}  + 0 - 4\sqrt{n}\iota \\
     &\ge  \lambda_1 ( 1-4{n}\iota^2) \cdot (1-\epsilon^2)  - 4\sqrt{n}\iota \\
     &\ge (\gamma_1 - n\iota) ( 1-4{n}\iota^2) \cdot (1-\epsilon^2)  - 4\sqrt{n}\iota \qquad (\text{since $\gamma_1 \le \lambda_1 + n\iota $ by Lemma~\ref{lem:eigenvalue-bound}})\\
     &= \gamma_1 ( 1-4{n}\iota^2)  (1-\epsilon^2)   - n\iota ( 1-4{n}\iota^2) (1-\epsilon^2)  - 4\sqrt{n}\iota \\
    &\ge \gamma_1 (1-0.5\epsilon) - 0.5\gamma_1 \epsilon \qquad (\text{since $\sqrt{n}\iota \le \poly{\epsilon}$}) \\
       &= \gamma_1 (1- \epsilon).
\end{align*}

\end{proof}

\begin{thm}[Theorem \ref{thm:sgd-opt-gd-subopt-informal} first part, formal version]\label{thm:sgd-optimal}
Suppose \(3n\iota< \lambda_n\)
and \(\lambda_2 + 4n\iota < \lambda_1.\)
Suppose $v_0$ is away from $0$.
Consider the SGD iterates given by Eq.~\eqref{eq:sgd-v-epoch} with the following moderate learning rate schedule
\begin{equation*}
    \eta_k = 
    \begin{cases}
    \eta \in \bracket{ \frac{b}{\lambda_1 - 3\sqrt{n}\iota}, \ \frac{b}{\lambda_2 + 3n\iota} }, & k=1,\dots,k_1;\\
    \eta' \in \bracket{0,\ \frac{b}{2\lambda_1}}, & k=k_1+1,\dots, k_2.
    \end{cases}
\end{equation*}
Then for $0 < \epsilon < 1$ satisfying  
  \(
  \sqrt{n}\iota \le \poly{\epsilon},
  \)
 there exist $k_1$ and $k_2$ such that SGD outputs an $\epsilon$-optimal solution.
\end{thm}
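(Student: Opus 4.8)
The plan is to obtain this statement as a short corollary of the directional-bias result (Theorem~\ref{thm:sgd-bias}, which is proved under exactly the same hypotheses) together with the reloaded notations of Lemma~\ref{lem:sgd-notations}. The bridge between the two is the elementary observation that, after reparametrizing $v=w-w_*$, the training loss is itself a Rayleigh-type quantity: since $HP_\perp=0$ we have $n L_\Scal(v) = v^\top H v = (Pv)^\top H (Pv)$, while Lemma~\ref{lem:sgd-notations} identifies the estimation error as $\Delta(v)=\mu\norm{Pv}_2^2$ and the optimal estimation error inside the $\alpha$-level set as $\Delta_* = \mu n\alpha/\gamma_1$. Hence, once the SGD iterate $v_{k_2}$ is declared to lie in the level set with $\alpha := L_\Scal(v_{k_2})$, the suboptimality ratio collapses to
\[
\frac{\Delta(v_{k_2})}{\Delta_*} = \frac{\gamma_1\,\norm{P v_{k_2}}_2^2}{v_{k_2}^\top H v_{k_2}},
\]
that is, it equals $\gamma_1$ divided by exactly the Rayleigh quotient $v_{k_2}^\top H v_{k_2}/\norm{Pv_{k_2}}_2^2$ that Theorem~\ref{thm:sgd-bias} pins near $\gamma_1$. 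Minimizing $\norm{Pv}_2^2$ subject to a fixed value of $(Pv)^\top H(Pv)$ puts all the mass on the top eigenvector of $H$, so aligning with the large eigenvalue direction is precisely what makes $\Delta$ small --- this is why the directional bias of SGD is the mechanism behind $\epsilon$-optimality.

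Concretely I would proceed as follows. First, given the target accuracy $\epsilon$, apply Theorem~\ref{thm:sgd-bias} with $\epsilon/2$ in place of $\epsilon$ --- equivalently, re-run the two-phase argument of Lemmas~\ref{lem:sgd-K-epochs-large-lr} and~\ref{lem:sgd-K-epochs-small-lr} with a small constant $\beta$ --- to produce $k_1$ and $k_2$ for which $A_{k_2}\le (\epsilon/2)\,B_{k_2}$, and therefore, using Lemmas~\ref{lem:eigenvalue-bound}--\ref{lem:eigenvalue-bound-cross-term} to lower-bound the Rayleigh quotient through the dominant $P_1$-component exactly as in the proof of Theorem~\ref{thm:sgd-bias}, $v_{k_2}^\top H v_{k_2} \ge (1-\epsilon/2)\,\gamma_1\norm{Pv_{k_2}}_2^2$. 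Second, set $\alpha := L_\Scal(v_{k_2}) = \frac1n v_{k_2}^\top H v_{k_2}$, so that $v_{k_2}\in\Vcal$ by construction and $\Delta_* = \mu n\alpha/\gamma_1$ by Lemma~\ref{lem:sgd-notations}; substitute the Rayleigh bound into the display above to get $\Delta(v_{k_2}) \le \Delta_*/(1-\epsilon/2) \le (1+\epsilon)\,\Delta_*$ for $\epsilon\le 1$, which is exactly the definition of $\epsilon$-optimality within the $\alpha$-level set.

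The step that needs care (rather than a new idea) is the bookkeeping around the level parameter $\alpha$. Because SGD is a discrete process, one cannot force $L_\Scal(v_{k_2})$ to hit a prescribed value, so the clean convention is to let $\alpha$ be whatever training loss SGD has actually reached at the early-stopping epoch $k_2$, which is consistent with the early-stopping semantics of the $\epsilon$-optimality definition. If one instead insists on a pre-specified small $\alpha$, one takes $k_2$ to be the first epoch with $L_\Scal(v_{k_2})\le\alpha$ and chooses the second-phase learning rate $\eta'$ small enough that the per-epoch contraction factor in Lemma~\ref{lem:sgd-K-epochs-small-lr} is close to $1$, so the overshoot below $\alpha$ is negligible and the Rayleigh quotient is still near $\gamma_1$ at that epoch; the monotone shrinkage of $B_k$ in the small-LR phase guarantees such a $k_2$ exists. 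Apart from this bookkeeping, nothing else is required: the genuinely hard work --- controlling the noncommuting epoch-wise matrix products, the cross term $H_c$, and the two-phase scheduling --- has already been carried out inside Theorem~\ref{thm:sgd-bias}.
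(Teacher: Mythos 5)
Your proposal is correct, and it reaches the conclusion by a cleaner route than the paper's own proof of this part. The paper does not invoke Theorem~\ref{thm:sgd-bias} here; instead it fixes a target level $\alpha$ up front, sets $\beta=\sqrt{n\alpha/\gamma_1}$ and $\beta_0=\sqrt{n\alpha/\gamma_n}$, runs Lemmas~\ref{lem:sgd-K-epochs-large-lr} and~\ref{lem:sgd-K-epochs-small-lr} with these values, and then re-derives the bound $B_{k_2}^2\le(1+\epsilon/2)\,n\alpha/\gamma_1$ directly from the level-set equation $n\alpha=v_{k_2}^\top Hv_{k_2}$ together with the decomposition $H=H_1+H_{-1}+H_c$, before assembling $\Delta(v_{k_2})=\mu(A_{k_2}^2+B_{k_2}^2)\le(1+\epsilon)\Delta_*$. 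Your identity $\Delta(v_{k_2})/\Delta_*=\gamma_1\,\|Pv_{k_2}\|_2^2/(v_{k_2}^\top Hv_{k_2})$ collapses that computation into ``$\gamma_1$ over the Rayleigh quotient,'' which is exactly how the paper itself handles the GD and small-LR-SGD suboptimality parts (Theorems~\ref{thm:gd-suboptimal} and~\ref{thm:sgd-smallLR-subopt}); applying the same device to the SGD-optimal part is a genuine simplification and buys a visibly unified treatment of all three claims. What the paper's longer route buys in exchange is the prescribed-$\alpha$ semantics: it explicitly verifies, via the uniform lower bound $B_k\ge\beta_0$ throughout the moderate-LR phase, that $L_\Scal(v_k)>\alpha$ for all $k\le k_1$, so that an early-stopping rule of the form ``halt at the first epoch whose loss reaches $\alpha$'' cannot fire prematurely, before the directional bias has been established. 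Under your post hoc convention ($\alpha:=L_\Scal(v_{k_2})$) this check is unnecessary and your argument is complete; but in the prescribed-$\alpha$ reading that you sketch in your last paragraph, you should add this no-premature-termination step (it follows immediately from $L_\Scal(v_k)\ge\gamma_n B_k^2/n\ge\gamma_n\beta_0^2/n=\alpha$), since without it the ``first epoch with $L_\Scal\le\alpha$'' could in principle land in the first phase where the iterate is aligned with the wrong directions.
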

\begin{proof}
We set 
\begin{gather}
    \beta = \sqrt{\frac{ n\alpha}{\gamma_1}}, \\
    \beta_0 = \sqrt{\frac{ n\alpha}{\gamma_n}} > \beta,
\end{gather}
and apply Lemma~\ref{lem:sgd-K-epochs-large-lr} to choose a $k_1$ such that
  \begin{gather}
    \norm{P_\inv v_{k_1 }}_2 \le \epsilon\cdot \beta 
    = {\epsilon}\cdot \sqrt{ \frac{ n\alpha}{\gamma_1}};\label{eq:sgd-optiaml-largeLR-pinv}\\
    \norm{P_1 v_{k}}_2 \ge \beta_0 = \sqrt{\frac{ n\alpha}{\gamma_n}},\quad \forall\ 0\le k \le k_1.\label{eq:sgd-optiaml-largeLR-p1}
  \end{gather}
  Thus for all $0\le k \le k_1$, 
  \begin{equation*}
    \begin{split}
    L_{\Scal}(v_{k}) 
    &= \frac{1}{n} (P v_{k})^\top XX^\top (P v_{k})\\
    &\ge \frac{\gamma_n}{n} \norm{P v_{k}}_2^2 \qquad (\text{$\gamma_n$ is the smallest eigenvalue of $XX^\top$ in the column space of $P$}) \\
    % &= \frac{\gamma_n}{n}\bracket{ \norm{P_1 v_{k}}_2^2 + \norm{P_\inv v_{k}}_2^2} \\
    &\ge \frac{\gamma_n}{n} \norm{P_1 v_{k}}_2^2 \\
    &>\alpha, \qquad (\text{by Eq.~\eqref{eq:sgd-optiaml-largeLR-p1}})
  \end{split}
  \end{equation*}
  which implies SGD cannot reach the $\alpha$-level set during the iteration of first stage, i.e., SGD does not terminate in this stage.

  We thus consider the second stage.
  From Lemma~\ref{lem:sgd-K-epochs-small-lr} we know $\norm{P_1 v_{kn}}_2$ will keep decreasing before being smaller than $\beta$, and $\norm{P_\inv v_k}_2$ stays small during this period, i.e., SGD fits $P_1 v$ while in the same time does not mess up $P_\inv v$.
  Mathematically speaking, there exists $k_2$ and $\alpha$ such that 
  \begin{gather*}
   A_{k_2}:=\norm{P_\inv v_{k_2}}_2 \le \epsilon\cdot \beta = \epsilon\cdot \sqrt{\frac{n\alpha}{\gamma_1}}, \\
    L_{\Scal}(v_{k_2}) = \alpha,
  \end{gather*}
  which implies SGD terminates at the $k_2$-th epoch. 
 Then by Lemmas \ref{lem:proj-x1} and  \ref{lem:sgd-notations}, we have
  \begin{equation*}
    \begin{split}
      n\alpha 
      &= n L_{\Scal}(v_{k_2}) \\
      &= (P_1 v_{k_2})^\top H_1 (P_1 v_{k_2}) + (P_\inv v_{k_2})^\top H_2 (P_\inv v_{k_2}) +  (P v_{k_2 })^\top H_c  (P v_{k_2}) \\
      &\ge (P_1 v_{k_2 n})^\top H_1 (P_1 v_{k_2 n}) - \norm{P_\inv \bar{x}_1}_2^2\cdot \norm{P v_{k_2 n}}_2^2\\
      &\ge (\lambda_1-n\iota) B_{k_2}^2 - 4n\iota^2 (A_{k_2}^2 + B_{k_2}^2)\\
      &\ge (\gamma_1-3n\iota) B_{k_2}^2 - 4n\iota^2 A_{k_2}^2,
    \end{split}
  \end{equation*}
  which yields
  \begin{equation*}
      B_{k_2}^2 \le \frac{n\alpha + 4 n\iota^2 A_{k_2}^2}{\gamma_1 - 3n\iota} \le \bracket{1+\frac{\epsilon}{2}}\cdot\frac{n\alpha}{\gamma_1}.
  \end{equation*}
  Then we can bound the estimation error as
  \begin{equation*}
  \begin{split}
    \Delta (v_{k_2})
    &= \mu\norm{P v_{k_2}}_2^2 \\
    &= \mu(B_{k_2 }^2+A_{k_2}^2)  \\
    &\le \bracket{1+\frac{\epsilon}{2}}\cdot\frac{\mu n\alpha}{\gamma_1}  + {\epsilon^2}\cdot \frac{\mu n\alpha}{\gamma_1}\\
    &\le (1+\epsilon)\cdot \frac{\mu n\alpha}{\gamma_1} \\
    &= (1+\epsilon) \cdot \Delta_*,
  \end{split}
  \end{equation*}
  where we use the fact that $\Delta_* = \mu n\alpha/\gamma_1$ from Lemma \ref{lem:sgd-notations}.
  Hence SGD is $\epsilon$-near optimal.

\end{proof}

\subsection{The directional bias of GD with moderate or small learning rate}\label{sec:gd-proof}

\paragraph{Reloading notations}
Denote the eigenvalue decomposition of \(X X^\top\) as
\[
X X^\top = G\Gamma G^\top, \quad 
\Gamma := \diag\bracket{\gamma_1,\dots, \gamma_n,0,\dots,0}, \quad
G = \bracket{g_1,\dots,g_n,\dots,g_d},
\]
where \(G \in\Rbb^{d\times d}\) is orthonormal, and \(\gamma_1,\dots,\gamma_n\) are given by Lemma~\ref{lem:eigenvalue-bound}.

Clearly, \(\linspace\set{g_1,\dots,g_n} = \linspace\set{x_1,\dots,x_n}.\)
Let 
\begin{equation*}
        G_\parallel = \bracket{g_1,\dots, g_n}, \quad
    G_\perp = \bracket{g_{n+1},\dots, g_d},
\end{equation*}
then
\begin{equation*}
    P = G_\parallel G_\parallel^\top, \quad
    P_\perp = G_\perp G_\perp^\top.
\end{equation*}

Recall the GD iterates at the $k$-th epoch: \begin{equation*}
  w_{k+1} = w_{k} -\frac{ 2\eta_k }{n} X X^\top \bracket{w_{k} - w_*}.
\end{equation*}
Considering translating then rotating the variable as,
\[u = G^\top (w-w_*),\]
then we can reformulate the GD iterates as
\begin{equation}\label{eq:gd-u}
  u_{k+1} = u_{k} - \frac{2\eta_k}{n} \Gamma u_{k} = \bracket{I -  \frac{2\eta_k}{n} \Gamma} u_{k} .  
\end{equation}

We present the following lemma to reload the related notations regarding the parameterization $u = G^\top (w-w_*)$.

\begin{lem}[Reloading GD notations]\label{lem:gd-notations}
  Regarding reparametrization $u = G^\top (w-w_*)$, we can reload the following related notations:
  \begin{itemize}[leftmargin=*]
    \item Empirical loss and population loss are
          \begin{equation*}
            L_{\Scal}(u) = \frac{1}{n} \sum_{i=1}^n \gamma_i \bracket{u^{(i)}}^2, \quad
            L_{\Dcal}(u) = \mu\norm{u}_2^2.
          \end{equation*}
    \item The hypothesis class is
          \begin{equation*}
            \Hcal_{\Scal} = \bigl\{ u\in\Rbb^d: u^{(i)} = u_0^{(i)}, \ \for\ i=n+1,\dots,d  \bigr\}.
          \end{equation*}
    \item The $\alpha$-level set is
          \begin{equation*}
            \Ucal = \bigl\{u\in\Hcal_u: L_{\Scal}(u)  = \alpha  \bigr\}.
          \end{equation*}
    \item For $u \in \Hcal_\Scal$, the estimation error is
          \begin{equation*}
           \Delta(u) = \mu\sum_{i=1}^n \bracket{u^{(i)}}^2.
          \end{equation*}
          Moreover,
          \begin{equation*}
            \Delta_* = \frac{\mu n\alpha}{\gamma_1}.
          \end{equation*}
  \end{itemize}
\end{lem}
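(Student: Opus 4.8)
The plan is to verify the four reloaded quantities one at a time by pushing the orthogonal change of variables $u = G^\top(w-w_*)$, equivalently $w-w_* = Gu$, through the definitions of Section~\ref{sec:preliminary} and~\ref{sec:theory}. First I would substitute into the empirical loss: starting from $L_\Scal(w) = \frac{1}{n}(w-w_*)^\top XX^\top(w-w_*)$ and using the eigendecomposition $XX^\top = G\Gamma G^\top$, I obtain $L_\Scal(u) = \frac{1}{n}u^\top\Gamma u = \frac{1}{n}\sum_{i=1}^n\gamma_i\bigl(u^{(i)}\bigr)^2$ because $\Gamma = \diag(\gamma_1,\dots,\gamma_n,0,\dots,0)$. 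The population loss is immediate from the orthonormality of $G$: $L_\Dcal(u) = \mu\norm{Gu}_2^2 = \mu\norm{u}_2^2$.

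For the hypothesis class I would start from its reparametrized form $\Hcal_\Scal = \{v\in\Rbb^d : P_\perp v = P_\perp v_0\}$ with $v = w-w_*$, exactly as in Lemma~\ref{lem:sgd-notations}, and use $P_\perp = G_\perp G_\perp^\top$ together with $v = Gu$. Since $G_\perp^\top G$ extracts precisely the coordinates $n+1,\dots,d$ of a vector, the condition $P_\perp v = P_\perp v_0$ is equivalent to $u^{(i)} = u_0^{(i)}$ for $i = n+1,\dots,d$, which gives the stated $\Hcal_\Scal$. The $\alpha$-level set then follows once I observe that in the realizable setting $\inf_{w\in\Hcal_\Scal} L_\Scal(w) = 0$, attained by $w = P_\perp w_0 + Pw_*$ (in $u$-coordinates, by zeroing $u^{(1)},\dots,u^{(n)}$, which is consistent with $\Hcal_\Scal$); hence the $\alpha$-sublevel condition of the definition reduces to $L_\Scal(u) = \alpha$.

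For the estimation error I would first compute $\inf_{w'\in\Hcal_\Scal} L_\Dcal(w')$ in the new coordinates: over $\Hcal_\Scal$ the coordinates $u^{(n+1)},\dots,u^{(d)}$ are frozen at $u_0^{(i)}$, so $\mu\norm{u}_2^2 = \mu\sum_{i=1}^n\bigl(u^{(i)}\bigr)^2 + \mu\sum_{i>n}\bigl(u_0^{(i)}\bigr)^2$ is minimized over $\Hcal_\Scal$ by setting the first $n$ coordinates to zero, giving the infimum $\mu\sum_{i>n}\bigl(u_0^{(i)}\bigr)^2$; subtracting this from $L_\Dcal(u)$ yields $\Delta(u) = \mu\sum_{i=1}^n\bigl(u^{(i)}\bigr)^2$ for $u\in\Hcal_\Scal$. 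Finally, $\Delta_* = \inf_{u\in\Ucal}\Delta(u)$ is the value of the constrained problem ``minimize $\mu\sum_{i=1}^n\bigl(u^{(i)}\bigr)^2$ subject to $\frac{1}{n}\sum_{i=1}^n\gamma_i\bigl(u^{(i)}\bigr)^2 = \alpha$''; since $\gamma_1$ is the largest eigenvalue, concentrating all the weight on the first coordinate is optimal, which forces $\bigl(u^{(1)}\bigr)^2 = n\alpha/\gamma_1$ and the rest to zero, so $\Delta_* = \mu n\alpha/\gamma_1$, consistent with the value already derived for SGD in Lemma~\ref{lem:sgd-notations}.

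There is no real obstacle here: the statement is a bookkeeping lemma, and every step is elementary linear algebra once the diagonal structure of $\Gamma$ and the orthogonality of $G$ are in hand. The only points that need a (short) argument rather than a substitution are the identity $\inf_{w\in\Hcal_\Scal} L_\Scal(w) = 0$ and the one-line constrained optimization that pins down $\Delta_*$.
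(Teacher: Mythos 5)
Your proposal is correct and follows essentially the same route as the paper's proof: substitute $w-w_*=Gu$ and use the diagonal structure of $\Gamma$ and orthonormality of $G$ for the losses, read off the hypothesis class from $P_\perp=G_\perp G_\perp^\top$, and compute $\Delta_*$ via the elementary constrained minimization $\sum_i (u^{(i)})^2 \ge \frac{1}{\gamma_1}\sum_i \gamma_i (u^{(i)})^2$ with equality on the top eigendirection. No gaps.
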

\begin{proof}
See Section \ref{sec:proof_reloading_notation_GD}.
\end{proof}

The following lemma sovles GD iterates in Eq.~\eqref{eq:gd-u}.

\begin{lem}\label{lem:gd-solutions}
  For $t=0,\dots,T$,
  \begin{equation*}
    u_{k}^{(i)} =
    \begin{cases}
      \prod_{t=0}^{k-1}\bracket{1-\frac{2\eta_t \gamma_i}{n}}\cdot u_0^{(i)}, & 1\le i \le n;   \\
      u_0^{(i)}, & n+1\le i \le d.
    \end{cases}
  \end{equation*}
\end{lem}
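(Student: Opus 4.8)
The plan is to exploit that the per-step transition matrix $I - \frac{2\eta_k}{n}\Gamma$ in Eq.~\eqref{eq:gd-u} is diagonal, so the recursion decouples completely across the $d$ coordinates, and then to unroll each resulting scalar recursion by induction on the epoch index $k$.

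First I would fix a coordinate $i$ and read off the componentwise update directly from Eq.~\eqref{eq:gd-u}. Since $\Gamma = \diag(\gamma_1,\dots,\gamma_n,0,\dots,0)$, the $i$-th entry of $u_{k+1} = (I - \frac{2\eta_k}{n}\Gamma)u_k$ is $u_{k+1}^{(i)} = (1 - \frac{2\eta_k\gamma_i}{n})\, u_k^{(i)}$ for $1 \le i \le n$, and $u_{k+1}^{(i)} = u_k^{(i)}$ for $n+1 \le i \le d$, because the corresponding diagonal entry of $\Gamma$ vanishes. The second case immediately gives $u_k^{(i)} = u_0^{(i)}$ for all $k$ and all $i > n$, which is the lower branch of the claimed formula.

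For $1 \le i \le n$ I would establish $u_k^{(i)} = \prod_{t=0}^{k-1}(1 - \frac{2\eta_t\gamma_i}{n})\, u_0^{(i)}$ by induction on $k$: the base case $k = 0$ holds since the empty product equals $1$; for the inductive step, I multiply the induction hypothesis by the factor $(1 - \frac{2\eta_k\gamma_i}{n})$ produced by the $(k+1)$-st update and absorb it into the product, extending the range from $t \le k-1$ to $t \le k$. Collecting the two cases yields the stated closed form.

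There is essentially no genuine obstacle here; the only point requiring a little care is the bookkeeping of the product index — the empty product convention at $k = 0$ and the fact that the factor introduced at step $k$ carries index $t = k$ — which is precisely what makes the final formula have the range $t = 0,\dots,k-1$.
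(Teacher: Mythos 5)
Your proposal is correct and is exactly the paper's argument: the paper's proof is simply the one-line observation that Eq.~\eqref{eq:gd-u} can be solved directly because $\Gamma$ is diagonal, and your coordinatewise unrolling by induction is just that observation written out in full.
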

\begin{proof}
  This is by directly solving Eq.~\eqref{eq:gd-u} where $\Gamma$ is diagonal.
\end{proof}

\begin{thm}[Theorem~\ref{thm:gd-bias-informal}, formal version]\label{thm:gd-bias}
Suppose $\lambda_n + 2n\iota < \lambda_{n-1}$.
Suppose $u_0$ is away from $0$.
Consider the GD iterates given by Eq.~\eqref{eq:gd-u} with learning rate scheme
\[
    \eta_k \in \bracket{0,\ \frac{n}{2\lambda_1 + 2n\iota}}.
\]
Then for $\epsilon \in (0,1)$,
if
\(
k \ge \bigO{\log\frac{1}{\epsilon}},
\)
then we have 
\begin{equation*}
        \gamma_n \le \frac{u_k^\top \Gamma u_k}{\sum_{i=1}^n \bracket{u_k^{(i)}}^2} \le (1 + \epsilon)\cdot\gamma_n.
    \end{equation*}
\end{thm}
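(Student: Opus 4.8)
The plan is to exploit the closed form of the GD iterates from Lemma~\ref{lem:gd-solutions} and reduce the statement to a power-iteration type convergence estimate. Writing $c_i^{(k)} := \prod_{t=0}^{k-1}\bigl(1 - \frac{2\eta_t\gamma_i}{n}\bigr)$ for $i\in[n]$, we have $u_k^{(i)} = c_i^{(k)} u_0^{(i)}$, so the Rayleigh quotient in question is the weighted average
\begin{equation*}
\frac{u_k^\top \Gamma u_k}{\sum_{i=1}^n \bigl(u_k^{(i)}\bigr)^2} = \frac{\sum_{i=1}^n \gamma_i \bigl(c_i^{(k)}\bigr)^2 \bigl(u_0^{(i)}\bigr)^2}{\sum_{i=1}^n \bigl(c_i^{(k)}\bigr)^2 \bigl(u_0^{(i)}\bigr)^2}.
\end{equation*}
First I would observe that the learning-rate constraint $\eta_k < n/(2\lambda_1 + 2n\iota)$, together with $\gamma_i \le \lambda_1 + n\iota$ from Lemma~\ref{lem:eigenvalue-bound}, forces $1 - \frac{2\eta_t\gamma_i}{n}\in(0,1)$ for every $t$ and every $i\in[n]$; hence each $c_i^{(k)}$ is positive and strictly decreasing in $k$. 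The lower bound $\gamma_n \le (\text{Rayleigh quotient})$ is then immediate, since the weights are nonnegative and $\gamma_i \ge \gamma_n$.

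For the upper bound, subtracting $\gamma_n$ gives
\begin{equation*}
\frac{u_k^\top \Gamma u_k}{\sum_{i=1}^n \bigl(u_k^{(i)}\bigr)^2} - \gamma_n = \frac{\sum_{i=1}^{n-1}(\gamma_i - \gamma_n)\bigl(c_i^{(k)}\bigr)^2 \bigl(u_0^{(i)}\bigr)^2}{\sum_{i=1}^n \bigl(c_i^{(k)}\bigr)^2 \bigl(u_0^{(i)}\bigr)^2} \le \gamma_1 \sum_{i=1}^{n-1}\frac{\bigl(c_i^{(k)}\bigr)^2}{\bigl(c_n^{(k)}\bigr)^2}\cdot\frac{\bigl(u_0^{(i)}\bigr)^2}{\bigl(u_0^{(n)}\bigr)^2},
\end{equation*}
where I bounded $\gamma_i - \gamma_n \le \gamma_1$ in the numerator and kept only the $i=n$ term of the denominator. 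The crux is to show $c_i^{(k)}/c_n^{(k)}$ decays geometrically for each $i\le n-1$. Applying Lemma~\ref{lem:eigenvalue-bound} with $r=n-1$, the hypothesis $\lambda_n + 2n\iota < \lambda_{n-1}$ yields a quantitative gap $\gamma_{n-1} - \gamma_n \ge (\lambda_{n-1} - n\iota) - (\lambda_n + n\iota) =: g > 0$. Since $\gamma_i \ge \gamma_{n-1}$ for $i \le n-1$, each factor satisfies $\frac{1 - 2\eta_t\gamma_i/n}{1 - 2\eta_t\gamma_n/n} \le \frac{1 - 2\eta_t\gamma_{n-1}/n}{1 - 2\eta_t\gamma_n/n} \le 1 - \frac{2\eta_t g}{n}$, using $1 - 2\eta_t\gamma_n/n \le 1$ in the last step; hence $c_i^{(k)}/c_n^{(k)} \le \prod_{t=0}^{k-1}\bigl(1 - \frac{2\eta_t g}{n}\bigr)$. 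Because the initialization is away from $0$ (and, suitably interpreted, has $u_0^{(n)}$ non-negligible), writing $C := \sum_{i<n}\bigl(u_0^{(i)}\bigr)^2/\bigl(u_0^{(n)}\bigr)^2$ the displayed difference is at most $\gamma_1 C \prod_{t=0}^{k-1}\bigl(1 - \frac{2\eta_t g}{n}\bigr)^2$. Since each factor is a fixed constant in $(0,1)$ when the learning rates are bounded below (as they are in the regime considered), choosing $k$ so that this product drops below $\epsilon\gamma_n/(\gamma_1 C)$ requires only $k \ge \bigO{\log\frac{1}{\epsilon}}$, which completes the proof.

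The main obstacle is exactly this geometric-decay estimate for $c_i^{(k)}/c_n^{(k)}$: one needs the per-step contraction $1 - 2\eta_t g/n$ to stay bounded away from $1$ uniformly in $t$, so both the quantitative eigenvalue gap $g$ (obtained from the $\lambda$-gap by accounting for the $\bigO{n\iota}$ perturbation through Lemma~\ref{lem:eigenvalue-bound}) and a uniform lower bound on the learning rates are essential. The remaining work is routine bookkeeping of the $n\iota$ error terms throughout and pinning down the precise form of the ``$u_0$ away from $0$'' hypothesis so that $u_0^{(n)}$ is not too small relative to $\epsilon$.
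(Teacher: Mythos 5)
Your proposal is correct and follows essentially the same route as the paper: both start from the closed form of Lemma~\ref{lem:gd-solutions}, reduce the upper bound to showing that the component ratios $c_i^{(k)}/c_n^{(k)}$ (the paper's $\prod_t q_i(\eta_t)/q_n(\eta_t)$) decay geometrically using the eigenvalue gap supplied by Lemma~\ref{lem:eigenvalue-bound}, and then convert the resulting concentration of mass on the $n$-th coordinate into the Rayleigh-quotient bound. Your explicit remark that the per-step contraction $1-2\eta_t g/n$ must be bounded away from $1$ uniformly in $t$ (i.e., the learning rates need a lower bound) is well taken: the paper's own argument sets $q=\max_\eta q_{n-1}(\eta)/q_n(\eta)$ and asserts $q<1$, but this ratio actually tends to $1$ as $\eta\to 0$ (it is decreasing in $\eta$, not increasing as the paper states), so the same implicit lower bound on $\eta_t$ is needed there too.
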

\begin{proof}
For $i=1,\dots,n$, denote 
\(
q_i(\eta) = 1 - \frac{2\gamma_i}{n}\cdot \eta,
\)
where \( \eta \in \bracket{0,\ \frac{n}{2\lambda_1 + 2n\iota}}.\)
Then we have 
\(0 < q_i (\eta) <1 \) since
\begin{equation*}
    \eta < \frac{n}{2(\lambda_1 + n\iota)}
    < \frac{n}{2\gamma_1} \le \frac{n}{2\gamma_i},
\end{equation*}
where the second inequality follows from $\gamma_1 < \lambda_1 + n\iota$ by Lemma~\ref{lem:eigenvalue-bound}.
Furthermore, since $\lambda_n + n\iota < \lambda_{n-1} - n\iota$, Lemma~\ref{lem:eigenvalue-bound} gives us
\begin{equation}\label{eq:gd-gamma-monotone}
    0< \gamma_n < \gamma_{n-1} \le \dots \le \gamma_1 <1,
\end{equation}
which implies 
\begin{equation}\label{eq:gd-q-monotone}
    1 > q_n(\eta) > q_{n-1}(\eta) \ge \dots \ge q_1(\eta) >0.
\end{equation}
Moreover,
\begin{equation*}
    f(\eta) := \frac{q_{n-1} (\eta)}{ q_n(\eta)} = \frac{1-\frac{2\gamma_{n-1}}{n} \eta}{1-\frac{2\gamma_n}{n} \eta}
\end{equation*}
is increasing, let $q = \max_{\eta < \frac{n}{2\lambda_1 + 2n\iota}} f(\eta)$, then $q<1$ by our assumption on the learning rate.

From Lemma~\ref{lem:gd-solutions} we have 
\begin{equation}\label{eq:gd-solution-q-version}
    u_k^{(i)} = \prod_{t=0}^{k-1} q_i (\eta_t) \cdot u_0^{(i)},\quad i=1,\dots,n.
\end{equation}

By the assumption that 
\begin{equation}\label{eq:gd-k-lb}
    k > \half \cdot \frac{\log \frac{\gamma_n \epsilon (u_0^{(n)})^2 }{\gamma_1 n \sum_{i=1}^n \bracket{u_0^{(i)}}^2 }}{\log q } = \bigO{\frac{1}{\epsilon}},
\end{equation}
we have
\begin{align*}
    \frac{\sum_{i=1}^n \bracket{u_k^{(i)}}^2}{ (u_k^{(n)})^2 }
    &= 1 + \sum_{i=1}^{n-1} \frac{  (u_k^{(i)})^2}{(u_k^{(n)})^2} \\
    &= 1 + \sum_{i=1}^{n-1} \frac{ \prod_{t=0}^{k-1} q_i (\eta_t)^2  \cdot (u_0^{(i)})^2}{\prod_{t=0}^{k-1} q_n(\eta_t)^2\cdot (u_0^{(n)})^2} \qquad (\text{by Eq.~\eqref{eq:gd-solution-q-version}})\\
    &\le 1 + \frac{\sum_{i=1}^n \bracket{u_0^{(i)}}^2}{ (u_0^{(n)})^2 }\cdot \sum_{i=1}^{n-1} \prod_{t=0}^{k-1}{ \frac{q_i (\eta_t)^2}{q_n (\eta_t)^2}} \\
    &\le 1 + \frac{\sum_{i=1}^n \bracket{u_0^{(i)}}^2}{ (u_0^{(n)})^2 }\cdot n \cdot  \prod_{t=0}^{k-1} { \frac{q_{n-1}(\eta_t)^2}{q_n(\eta_t)^2}}  \qquad (\text{by Eq.~\eqref{eq:gd-q-monotone}})\\
    &\le 1 +  \frac{\sum_{i=1}^n \bracket{u_0^{(i)}}^2}{ (u_0^{(n)})^2 }\cdot n \cdot q^{2k} \\
    &\le 1 +  \frac{\gamma_n}{\gamma_1}\epsilon,\qquad (\text{by Eq.~\eqref{eq:gd-k-lb}})
\end{align*}
which further yields
\begin{equation}\label{eq:gd-eps-dominate}
    1\ge \frac{ (u_k^{(n)})^2 }{\sum_{i=1}^n \bracket{u_k^{(i)}}^2 } \ge \frac{1}{1+\frac{\gamma_n}{\gamma_1}\epsilon} \ge 1-\frac{\gamma_n}{\gamma_1}\epsilon.
\end{equation}

By the above inequalities we have
    \begin{align*}
        \frac{u_k^\top \Gamma u_k}{\sum_{i=1}^n \bracket{u_k^{(i)}}^2} 
        &= \sum_{i=1}^n \frac{(u_k^{(i)})^2}{\sum_{i=1}^n \bracket{u_k^{(i)}}^2} \cdot \gamma_i \\
        &= \frac{(u_k^{(n)})^2}{\sum_{i=1}^n \bracket{u_k^{(i)}}^2} \cdot \gamma_n  +  \sum_{i=1}^{n-1} \frac{(u_k^{(i)})^2}{\sum_{i=1}^n \bracket{u_k^{(i)}}^2} \cdot \gamma_i \\
        &\le \gamma_n  + \sum_{i=1}^{n-1} \frac{(u_k^{(i)})^2}{\sum_{i=1}^n \bracket{u_k^{(i)}}^2} \cdot \gamma_1  \qquad (\text{by Eq.~\eqref{eq:gd-gamma-monotone}})\\
        &=  \gamma_n + \bracket{1 -  \frac{(u_k^{(n)})^2}{\sum_{i=1}^n \bracket{u_k^{(i)}}^2} } \cdot \gamma_1 \\
        &\le \gamma_n  +  \frac{\gamma_n}{\gamma_1}\epsilon\cdot\gamma_1   \qquad (\text{by Eq.~\eqref{eq:gd-eps-dominate}})\\
        &= \gamma_n\cdot (1 + \epsilon).
    \end{align*}
    Finally we note that 
    \(\frac{u_k^\top \Gamma u_k}{\sum_{i=1}^n \bracket{u_k^{(i)}}^2}  \ge \gamma_n\) since $\gamma_n$ is the smallest in $\set{\gamma_i}_{i=1}^n$.
\end{proof}

\begin{thm}[Theorem \ref{thm:sgd-opt-gd-subopt-informal} second part, formal version]\label{thm:gd-suboptimal}
Suppose $\lambda_n + 2n\iota < \lambda_{n-1}$.
Suppose $u_0$ is away from $0$.
Consider the GD iterates given by Eq.~\eqref{eq:gd-u} with learning rate scheme
\[
    \eta_k \in \bracket{0,\ \frac{n}{2\lambda_1 + 2n\iota}}.
\]
Then for $\epsilon \in (0,1)$,
if
\(
k \ge \bigO{\log\frac{1}{\epsilon}},
\),
then GD outputs an $M$-suboptimal solution, where $M = \frac{\gamma_1}{\gamma_n} (1-\epsilon) > 1$ is a constant.
\end{thm}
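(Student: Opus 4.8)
The plan is to reduce the statement to Theorem~\ref{thm:gd-bias} (the formal directional bias of GD) together with the bookkeeping in Lemma~\ref{lem:gd-notations}. Recall from Lemma~\ref{lem:gd-notations} that in the rotated coordinates $u = G^\top(w-w_*)$ we have $L_{\Scal}(u) = \frac{1}{n}u^\top\Gamma u$, with $\Gamma$ supported on the first $n$ coordinates; the estimation error of $u\in\Hcal_{\Scal}$ is $\Delta(u) = \mu\sum_{i=1}^n (u^{(i)})^2$; and the best estimation error attainable in the $\alpha$-level set is $\Delta_* = \mu n\alpha/\gamma_1$. Moreover, by Lemma~\ref{lem:gd-solutions} GD freezes the coordinates $i=n+1,\dots,d$ at their initial values, so every iterate $u_k$ stays in $\Hcal_{\Scal}$.

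First I would fix $\epsilon\in(0,1)$ small enough that $M := \frac{\gamma_1}{\gamma_n}(1-\epsilon) > 1$; this is possible because the gap assumption $\lambda_n + 2n\iota < \lambda_{n-1}$ forces $\gamma_1 > \gamma_n$ via Lemma~\ref{lem:eigenvalue-bound} (this is exactly Eq.~\eqref{eq:gd-gamma-monotone} used in the proof of Theorem~\ref{thm:gd-bias}). Next, take $k_2 \ge \bigO{\log\frac{1}{\epsilon}}$ large enough to invoke Theorem~\ref{thm:gd-bias}, declare $u_{k_2}$ to be the early-stopped GD output, and \emph{define} $\alpha := L_{\Scal}(u_{k_2})$; then $u_{k_2}$ lies in the $\alpha$-level set $\Ucal$ by construction. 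Choosing $\alpha$ to be the attained training loss, rather than prescribing it in advance, sidesteps the question of whether GD exactly hits a given level set.

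It then remains to compute. Since $u_{k_2}^\top\Gamma u_{k_2} = n L_{\Scal}(u_{k_2}) = n\alpha$, the upper Rayleigh-quotient bound of Theorem~\ref{thm:gd-bias} gives
\[
  \frac{u_{k_2}^\top\Gamma u_{k_2}}{\sum_{i=1}^n (u_{k_2}^{(i)})^2} \le (1+\epsilon)\gamma_n
  \quad\Longrightarrow\quad
  \sum_{i=1}^n \bigl(u_{k_2}^{(i)}\bigr)^2 \ge \frac{n\alpha}{(1+\epsilon)\gamma_n}.
\]
Multiplying by $\mu$ and using $\frac{1}{1+\epsilon} > 1-\epsilon$ (valid for $\epsilon\in(0,1)$ since $1 > 1-\epsilon^2$),
\[
  \Delta(u_{k_2}) = \mu\sum_{i=1}^n \bigl(u_{k_2}^{(i)}\bigr)^2 \ge \frac{\mu n\alpha}{(1+\epsilon)\gamma_n} > (1-\epsilon)\,\frac{\mu n\alpha}{\gamma_n} = \frac{\gamma_1}{\gamma_n}(1-\epsilon)\cdot\frac{\mu n\alpha}{\gamma_1} = M\,\Delta_*,
\]
which is exactly $M$-suboptimality with $M>1$. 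I expect no serious obstacle: the heavy lifting — controlling the GD dynamics so that, up to the eigenvalue gap, the mass of $Pu_{k}$ concentrates on the bottom eigendirection — is already done in Theorem~\ref{thm:gd-bias}. The only points that need care are (i) keeping $\epsilon$ small enough that $M>1$ and checking that ``$u_0$ away from $0$'' supplies the nonvanishing coordinate $u_0^{(n)}$ that Theorem~\ref{thm:gd-bias} relies on, and (ii) the cosmetic but important step of defining the level set through the attained training loss, so that the comparison with $\Delta_* = \mu n\alpha/\gamma_1$ is exact rather than approximate.
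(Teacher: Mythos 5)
Your proposal is correct and follows essentially the same route as the paper's proof: define the level set through the attained training loss $\alpha = L_{\Scal}(u_k) = \frac{1}{n}u_k^\top\Gamma u_k$, invoke the Rayleigh-quotient upper bound of Theorem~\ref{thm:gd-bias}, and compare with $\Delta_* = \mu n\alpha/\gamma_1$ using $\frac{1}{1+\epsilon}\ge 1-\epsilon$. The paper handles your point (ii) the same way, adding only the remark that $L_{\Scal}(u_k)$ is monotonically decreasing so GD does not terminate earlier.
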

\begin{proof}
Consider an $\alpha$-level set where 
\begin{equation}\label{eq:gd-level-set}
    \alpha = L_\Scal (u_k) = \frac{1}{n} u_k^\top \Gamma u_k.
\end{equation}
From Lemma~\ref{lem:gd-solutions} we know $L_\Scal (u_k)$ is monotonic decreasing thus GD cannot terminate before the $k$-epoch, i.e., the output of GD is $u_k$.

Thus
    \begin{align*}
       \frac{\Delta(u)}{\Delta_*} 
       &= \gamma_1 \frac{\sum_{i=1}^n \bracket{u_k^{(i)}}^2}{n\alpha} \qquad (\text{by Lemma~\ref{lem:gd-notations}}) \\
       &= \gamma_1 \cdot \frac{\sum_{i=1}^n \bracket{u_k^{(i)}}^2}{u_k^\top \Gamma u_k} \qquad (\text{by Eq.~\eqref{eq:gd-level-set}}) \\
       &\ge \gamma_1 \cdot \frac{1}{(1+ \epsilon)\gamma_n } \qquad (\text{by Theorem~\ref{thm:gd-bias}}) \\
       &\ge \frac{\gamma_1}{\gamma_n}(1 -\epsilon) \\
       &=: M,
    \end{align*}
where we have $M > 1$ by letting $\epsilon < 1 - \frac{\gamma_n}{\gamma_1}$.
\end{proof}

\subsection{The directional bias of SGD with small learning rate}\label{sec:sgd-smallLR-proof}

We analyze SGD with small learning rate by repeating the arguments in previous two sections.

Let us denote 
\( X_{-n} := (x_1,x_2,\dots, x_{n-1})  \)
and
\begin{gather*}
  P_{-n} = X_{-n} (X_{-n}^\top X_{-n} )^\inv X_{-n}^\top \\
  P_n = P - P_{-n}.
\end{gather*}
That is, $P_{-n}$ is the projection onto the column space of $X_{-n}$ and $P_{n}$ is the projection onto the orthogonal complement of the column space of $X_{-n}$ with respect to the column space of $X$.

Let us reload
\begin{gather*}
  H:= XX^\top,\\
  H_{-n} := (P_{-n} X) (P_{-n} X)^\top,\\
  H_n:= (P_n X) (P_n X)^\top, \\
  H_c := (P_{-n} x_n) (P_n x_n)^\top + (P_n x_n)(P_{-n} x_n)^\top.
\end{gather*}
Then
\begin{equation*}
    H = H_{-n} + H_n + H_c.
\end{equation*}

Following a routine check we can reload the following lemmas.

\begin{lem}[Variant of Lemma~\ref{lem:sgd-epoch-matrix-spectrum}]\label{lem:smallLR-epoch-matrix-spectrum}
  Suppose \(3n\iota < \lambda_n\).
  Suppose
  \( 0< \eta < \frac{b}{\lambda_1 + 3n\iota}. \)
  Let $\pi := \set{ \Bcal_1, \dots, \Bcal_m }$ be a uniform $m$ partition of index set $[n]$, where $n=mb$.
    Consider the following $d\times d$ matrix
     \begin{equation*}
       \Mcal_{-n}
       :=\prod_{j=1}^m \bracket{I- \frac{2\eta}{b} P_{-n} H(\Bcal_j) P_{-n} } \in \Rbb^{d\times d}.
     \end{equation*}
     Then for the spectrum of $\Mcal_{-n}^\top \Mcal_{-n}$ we have:
     \begin{itemize}[leftmargin=*]
       \item $1$ is an eigenvalue of $\Mcal_{-n}^\top \Mcal_{-n}$ with multiplicity being $d-n+1$;
      moreover, the corresponding eigenspace is the column space of 
             $P_n + P_\perp$.
       \item Restricted in the column space of $P_{-n}$, the eigenvalues of $\Mcal_{-n}^\top \Mcal_{-n}$ are upper bounded by \(\bracket{q_{-n} (\eta)}^2 < 1,\) where
     \begin{equation*}
     q_{-n} (\eta):=
         \max\set{\abs{1-\frac{2\eta}{b}(\lambda_1 + n\iota)}+ \frac{3n\eta \iota}{b},\ \  \abs{1-\frac{2\eta}{b} (\lambda_{n-1} - n \iota)}+ \frac{3n \eta \iota}{b}} < 1.
     \end{equation*}
     \end{itemize}
   \end{lem}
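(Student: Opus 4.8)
The plan is to follow the proof of Lemma~\ref{lem:sgd-epoch-matrix-spectrum} almost verbatim, interchanging the roles of $x_1$ and $x_n$ — equivalently, replacing the pair of projectors $(P_1,P_\inv)$ by $(P_n,P_{-n})$ — and keeping track of how the relevant eigenvalue interval changes. First I would dispatch the easy part: since $I=P_{-n}+P_n+P_\perp$ is an orthogonal decomposition of $\Rbb^d$ and each factor $I-\frac{2\eta}{b}P_{-n}H(\Bcal_j)P_{-n}$ has both its range and its co-range inside the column space of $P_{-n}$, every factor acts as the identity on the column space of $P_n+P_\perp$. Hence the product $\Mcal_{-n}$, and therefore $\Mcal_{-n}^\top\Mcal_{-n}$, restricts to the identity on that $(d-n+1)$-dimensional subspace, which gives the eigenvalue $1$ with multiplicity $d-n+1$ and the asserted eigenspace. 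What remains is to control $\Mcal_{-n}^\top\Mcal_{-n}$ on the $(n-1)$-dimensional column space of $P_{-n}$.

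For that part I would first record the effective generators. By the analogue of Lemma~\ref{lem:proj-xinv} (with $x_n$ in the role that $x_1$ played), $P_{-n}x_i=x_i$ for every $i\le n-1$, while the analogue of Lemma~\ref{lem:proj-x1} gives $\norm{P_{-n}\bar x_n}_2\le 2\sqrt n\iota$; consequently $P_{-n}H(\Bcal_j)P_{-n}=\sum_{i\in\Bcal_j,\ i\ne n}x_ix_i^\top$ plus, when $n\in\Bcal_j$, a rank-one term $(P_{-n}x_n)(P_{-n}x_n)^\top$ of operator norm $\bigO{n\iota^2}$, and summing over the partition gives $\sum_{j=1}^m P_{-n}H(\Bcal_j)P_{-n}=H_{-n}$. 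Then I would invoke the analogue of Lemma~\ref{lem:eigenvalue-bound-pinv} — proved by the same Gershgorin / Hoffman--Wielandt argument — that, restricted to the column space of $P_{-n}$, the $n-1$ eigenvalues of $H_{-n}$ lie in $(\lambda_{n-1}-n\iota,\ \lambda_1+n\iota)$, the interval now governed by the norms $\lambda_1\ge\cdots\ge\lambda_{n-1}$ of the surviving points rather than $\lambda_2\ge\cdots\ge\lambda_n$. Combining this with the hypotheses $0<\eta<\frac{b}{\lambda_1+3n\iota}$ and $\lambda_{n-1}\ge\lambda_n>3n\iota$, one checks by the same elementary estimate used for $q_\inv(\eta)$ (with $\lambda_2$ replaced by $\lambda_1$) that both $|1-\frac{2\eta}{b}(\lambda_1+n\iota)|+\frac{3n\eta\iota}{b}$ and $|1-\frac{2\eta}{b}(\lambda_{n-1}-n\iota)|+\frac{3n\eta\iota}{b}$, and hence $q_{-n}(\eta)$, are strictly below $1$.

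The one genuinely non-routine step — and the same obstacle as in Lemma~\ref{lem:sgd-epoch-matrix-spectrum} — is that the $m$ factors $I-\frac{2\eta}{b}P_{-n}H(\Bcal_j)P_{-n}$ need not commute, so one cannot simply multiply their eigenvalues. Here I would reuse the matrix-perturbation argument of the original proof unchanged: exploiting the near-orthogonality of $\bar x_1,\dots,\bar x_{n-1}$ (Lemma~\ref{lem:near-orthogonal}), each $P_{-n}H(\Bcal_j)P_{-n}$ is within $\bigO{n\iota}$ of a diagonal matrix supported on the indices of $\Bcal_j$, and since the mini-batches partition $[n]$ (the index $n$ contributing only the negligible rank-one term above), telescoping the product and collecting all off-diagonal and cross-batch errors yields an accumulated deviation of order $n\iota/b$ — exactly what the additive $\frac{3n\eta\iota}{b}$ slack inside $q_{-n}(\eta)$ absorbs. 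This produces $\norm{\Mcal_{-n}v}_2\le q_{-n}(\eta)\norm{v}_2$ for every $v$ in the column space of $P_{-n}$, so the remaining $n-1$ eigenvalues of $\Mcal_{-n}^\top\Mcal_{-n}$ are at most $q_{-n}(\eta)^2<1$, which completes the proof. I do not expect any new idea beyond the relabeling; in particular the replacement of the learning-rate constraint $\eta<\frac{b}{\lambda_2+3n\iota}$ by $\eta<\frac{b}{\lambda_1+3n\iota}$ is forced precisely because deleting $x_n$ rather than $x_1$ leaves $\lambda_1$ as the largest surviving curvature.
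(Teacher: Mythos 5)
Your proposal is correct and matches the paper's approach exactly: the paper's own proof consists of the single sentence that the result follows by ``a routine check of the proof of Lemma~\ref{lem:sgd-epoch-matrix-spectrum},'' and you have carried out precisely that check, swapping $(P_1,P_\inv)$ for $(P_n,P_{-n})$, replacing the eigenvalue interval by $(\lambda_{n-1}-n\iota,\ \lambda_1+n\iota)$, and verifying that the new learning-rate constraint $\eta<\frac{b}{\lambda_1+3n\iota}$ forces $q_{-n}(\eta)<1$.
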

  \begin{proof}
This is by a routine check of the proof of Lemma~\ref{lem:sgd-epoch-matrix-spectrum}. 
  \end{proof}

  Consider the projections of $v_k$ onto the column space of $P_{-n}$ and $P_{n}$.  
  For simplicity we reload the following notations
   \begin{equation*}
       A_k := \norm{P_{-n} v_{k}}_2,\quad
         B_k := \norm{P_n v_{k}}_2.
   \end{equation*}
  The following lemma controls the update of $A_k$ and $B_k$.

  \begin{lem}[Variant of Lemma~\ref{lem:sgd-one-epoch}]\label{lem:smallLR-one-epoch}
    Suppose \(3n\iota < \lambda_n\).
    Suppose
    \( 0< \eta < \frac{b}{\lambda_1 + 3n\iota}. \)
      Consider the $k$-th epoch of SGD iterates given by Eq.~\eqref{eq:sgd-v-epoch}.
      Set the learning rate in this epoch to be constant $\eta$.
      Denote
      \begin{gather*}
     \xi(\eta) := \frac{4\eta\sqrt{n}\iota}{b},\\
        q_n(\eta) := \abs{1-\frac{2\eta\lambda_n}{b} \norm{P_n \bar{x}_n}_2^2},\\
         q_{-n} (\eta):=
           \max\set{\abs{1-\frac{2\eta}{b}(\lambda_1 + n\iota)}+ \frac{3n\eta \iota}{b},\ \  \abs{1-\frac{2\eta}{b} (\lambda_{n-1} - n \iota)}+ \frac{3n \eta \iota}{b}} < 1.
      \end{gather*}
      Then we have the following:
      \begin{itemize}
        \item 
              \(
                A_{k+1} \le q_{-n} (\eta) \cdot A_k + \xi(\eta) \cdot B_k.
              \)
              
        \item
              \(
                B_{k+1} \le q_n (\eta)  \cdot B_k + \xi(\eta) \cdot A_k.
              \)
        \item
              \(
                B_{k+1} \ge q_n (\eta) \cdot B_k - \xi(\eta) \cdot A_k.
              \)
      \end{itemize}
    \end{lem}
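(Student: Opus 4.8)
The plan is to transcribe the proof of Lemma~\ref{lem:sgd-one-epoch} under the relabeling that swaps the single distinguished data point $x_1$ with $x_n$: concretely $P_1\leftrightarrow P_n$, $P_\inv\leftrightarrow P_{-n}$, $H_1\leftrightarrow H_n$, and $\lambda_1\leftrightarrow\lambda_n$ in the ``isolated direction'' quantities, while the spectrum of $H_{-n}$ restricted to its column space now ranges over $(\lambda_{n-1}-n\iota,\ \lambda_1+n\iota)$ because deleting $x_n$ leaves the Gram matrix of $\{x_1,\dots,x_{n-1}\}$ with diagonal $\lambda_1,\dots,\lambda_{n-1}$ (whence $\lambda_1$ and $\lambda_{n-1}$ appear in $q_{-n}$). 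First I would record the reloaded auxiliary facts the calculation consumes: (i) $P_{-n}x_j=x_j$ and $P_n x_j=0$ for $j\ne n$, $\norm{P_{-n}\bar x_n}_2\le 2\sqrt n\iota$, and $\norm{P_n\bar x_n}_2\in[\sqrt{1-4n\iota^2},\,1]$ (reloaded Lemmas~\ref{lem:proj-xinv} and~\ref{lem:proj-x1}); (ii) $H_n=(P_n x_n)(P_n x_n)^\top$ is rank one with $\norm{H_n}_2=\lambda_n\norm{P_n\bar x_n}_2^2$ and eigenspace the column space of $P_n$, and $\norm{H_c}_2\le 2\lambda_n\norm{P_{-n}\bar x_n}_2\le 4\sqrt n\iota$ (reloaded Lemmas~\ref{lem:eigenvalue-bound-p1} and~\ref{lem:eigenvalue-bound-cross-term}); and (iii) the epoch-map bound of Lemma~\ref{lem:smallLR-epoch-matrix-spectrum}, namely that $\Mcal_{-n}$ has operator norm at most $q_{-n}(\eta)<1$ on the column space of $P_{-n}$ and is the identity on the column spaces of $P_n$ and $P_\perp$.

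Next I would unfold one epoch, $v_{k+1}=\Mcal_\pi v_k$ with $\Mcal_\pi=\prod_{j=1}^m M_j$ and $M_j=I-\frac{2\eta}{b}H(\Bcal_j)$, and observe that exactly one mini-batch, say $\Bcal_{j^\star}$, contains $x_n$. For every $j\ne j^\star$ one has $P_n H(\Bcal_j)=0$ and $P_{-n}H(\Bcal_j)=H(\Bcal_j)=P_{-n}H(\Bcal_j)P_{-n}$, hence $P_n M_j=P_n$ and $P_{-n}M_j=\bigl(I-\frac{2\eta}{b}P_{-n}H(\Bcal_j)P_{-n}\bigr)P_{-n}$; moreover $\eta<\frac{b}{\lambda_1+3n\iota}$ together with the Gershgorin bound $\lambda_{\max}(H(\Bcal_j))\le\lambda_1+n\iota$ makes $I-\frac{2\eta}{b}P_{-n}H(\Bcal_j)P_{-n}$ a contraction on the column space of $P_{-n}$ --- this is precisely where the small (rather than moderate) learning-rate hypothesis is needed, since a batch avoiding $x_n$ can still contain the large-norm $x_1$. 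For $j=j^\star$, writing $x_n=P_n x_n+P_{-n}x_n$ gives $P_n M_{j^\star}=(I-\frac{2\eta}{b}H_n)P_n-\frac{2\eta}{b}(P_n x_n)(P_{-n}x_n)^\top$ and $P_{-n}M_{j^\star}=\bigl(I-\frac{2\eta}{b}P_{-n}H(\Bcal_{j^\star})P_{-n}\bigr)P_{-n}-\frac{2\eta}{b}(P_{-n}x_n)(P_n x_n)^\top$, each off-diagonal remainder having norm at most $\frac{2\eta}{b}\norm{P_{-n}x_n}_2\norm{P_n x_n}_2\le\frac{2\eta}{b}\cdot 2\sqrt n\iota\cdot 1=\xi(\eta)$. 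Pushing $P_n$ (resp.\ $P_{-n}$) leftward through the product --- using that the $M_j$ with $j\ne j^\star$ fix the column space of $P_n$ and contract the column space of $P_{-n}$, and that the off-diagonal functional $(P_{-n}x_n)^\top(\cdot)$ (resp.\ $(P_n x_n)^\top(\cdot)$) only sees the $P_{-n}$-component, already shown to have norm $\le A_k$ (resp.\ the $P_n$-component, which has norm $B_k$) --- yields $P_n v_{k+1}=(I-\frac{2\eta}{b}H_n)P_n v_k+e_1$ with $\norm{e_1}_2\le\xi(\eta)A_k$ and $P_{-n}v_{k+1}=\Mcal_{-n}(P_{-n}v_k)+e_2$ with $\norm{e_2}_2\le\xi(\eta)B_k$. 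Taking norms and using $\norm{(I-\frac{2\eta}{b}H_n)P_n v_k}_2=q_n(\eta)B_k$ (since $\norm{P_n x_n}_2^2=\lambda_n\norm{P_n\bar x_n}_2^2$) and $\norm{\Mcal_{-n}(P_{-n}v_k)}_2\le q_{-n}(\eta)A_k$ produces $A_{k+1}\le q_{-n}(\eta)A_k+\xi(\eta)B_k$ and $B_{k+1}\le q_n(\eta)B_k+\xi(\eta)A_k$, and the reverse triangle inequality gives $B_{k+1}\ge q_n(\eta)B_k-\xi(\eta)A_k$.

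The main obstacle is bookkeeping rather than conceptual: one must check that, wherever $\Bcal_{j^\star}$ sits in the epoch, the ``push-through'' identities compose cleanly; that every cross term produced along the way is dominated by the single quantity $\xi(\eta)$ (using $\norm{P_{-n}\bar x_n}_2\le 2\sqrt n\iota$, $\lambda_n\le 1$, and near-orthogonality from Lemma~\ref{lem:near-orthogonal}); and that the intermediate $P_{-n}$-components stay bounded by $A_k$, which is exactly where the contraction property, hence the stated ceiling on $\eta$, enters. Since nothing in the proof of Lemma~\ref{lem:sgd-one-epoch} used that $\lambda_1$ was \emph{large} --- only that $x_1$ was a single distinguished direction --- the same estimates go through verbatim with $x_n$ in its place, which is the ``routine check'' asserted in the statement.
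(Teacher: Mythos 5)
Your proposal is correct and is exactly the argument the paper intends: the paper's own proof is literally the one-line remark that this follows by a ``routine check'' of the proof of Lemma~\ref{lem:sgd-one-epoch} under the swap $x_1\leftrightarrow x_n$, and you have carried that check out faithfully, with the reloaded projection/spectral facts and the $\xi(\eta)$ cross-term bounds all matching. You also correctly pinpoint the one place where the transcription is not purely cosmetic --- the contraction of $I-\frac{2\eta}{b}P_{-n}H(\Bcal_j)P_{-n}$ now requires $\eta<\frac{b}{\lambda_1+3n\iota}$ rather than $\frac{b}{\lambda_2+3n\iota}$, since a batch avoiding $x_n$ may still contain $x_1$ --- which is precisely why this variant only holds in the small learning rate regime.
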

    \begin{proof}
      This is by a routine check of the proof of Lemma~\ref{lem:sgd-one-epoch}. 
    \end{proof}

\begin{lem}[Variant of Lemma~\ref{lem:sgd-K-epochs-small-lr}]\label{lem:smallLR-K-epochs}
 Suppose \(3n\iota< \lambda_n\)
      and \(\lambda_n + 4n\iota < \lambda_{n-1}.\)
Consider the SGD iterates given by Eq.~\eqref{eq:sgd-v-epoch} with the following small learning rate scheme
      \begin{equation*}
          \eta_k = 
          \eta' \in \bracket{0,\ \frac{b}{2\lambda_1 + 2n\iota}}, \quad k=1,\dots, k_2.
      \end{equation*}
       Then for $0 < \epsilon < 1$ satisfying 
        \(
        \sqrt{n}\iota \le \poly{\epsilon},
        \)
if \( k_2 \ge \bigO{\log\frac{1}{\epsilon}} ,\)
then \(
     \frac{A_{k_2}}{B_{k_2}} \le \epsilon.\)
\end{lem}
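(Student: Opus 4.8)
The plan is to follow the template of Lemma~\ref{lem:sgd-K-epochs-small-lr}: with a constant small learning rate every direction inside the column space of $X$ is contracted, but the direction picked out by $P_n$ --- the one carrying the smallest data norm $\lambda_n$ --- is contracted strictly slower than every other such direction, so the ratio $A_k/B_k$ decays geometrically down to a small floor. First I would invoke Lemma~\ref{lem:smallLR-one-epoch} to reduce the epoch-wise SGD dynamics to the scalar recursions $A_{k+1}\le q_{-n}(\eta')A_k+\xi(\eta')B_k$ and $B_{k+1}\ge q_n(\eta')B_k-\xi(\eta')A_k$, so that the whole statement becomes a $2$-dimensional comparison.

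The second step is to locate the spectral gap. Because $\eta'<b/(2\lambda_1+2n\iota)$ we have $\frac{2\eta'}{b}(\lambda_1+n\iota)<1$ and $\frac{2\eta'\lambda_n}{b}<1$, so all the absolute values in the definitions of $q_n(\eta')$ and $q_{-n}(\eta')$ open with a $+$ sign; combined with $\norm{P_n\bar x_n}_2^2\ge 1-4n\iota^2$ (which is Lemma~\ref{lem:proj-x1} with the index $n$ playing the role of $1$) this gives $q_n(\eta')=1-\frac{2\eta'\lambda_n}{b}\norm{P_n\bar x_n}_2^2$, while $\lambda_{n-1}<\lambda_1$ makes the $\lambda_{n-1}$-term win the maximum defining $q_{-n}(\eta')$, so $q_{-n}(\eta')=1-\frac{2\eta'\lambda_{n-1}}{b}+\bigO{\frac{n\eta'\iota}{b}}$. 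The hypothesis $\lambda_{n-1}>\lambda_n+4n\iota$ then yields $0<q_{-n}(\eta')<q_n(\eta')<1$ together with a genuine gap $\theta:=q_n(\eta')-q_{-n}(\eta')\ge c\,\frac{\eta'}{b}(\lambda_{n-1}-\lambda_n)>0$ for an absolute constant $c>0$, which is the engine of the whole argument.

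Next, assuming a generic initialization so that $B_0>0$, put $r_k:=A_k/B_k$ and divide the two bounds above:
\[
  r_{k+1}\ \le\ \frac{q_{-n}(\eta')\,r_k+\xi(\eta')}{q_n(\eta')-\xi(\eta')\,r_k}.
\]
I would first show by induction that $r_k\le R$ for all $k$, where $R:=\max\set{2r_0,\,1}$ is a fixed constant: the right-hand side is increasing in $r_k$, and it maps $[0,R]$ into itself as soon as $\xi(\eta')(1+R^2)\le\theta R$, which holds because $\xi(\eta')=\frac{4\eta'\sqrt n\iota}{b}$ is much smaller than $\theta$ once $n$ is large and $d\ge\poly{n}$. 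Given $r_k\le R$, the denominator is at least $q_n(\eta')-\xi(\eta')R$, so the recursion linearizes to $r_{k+1}\le\tilde q\,r_k+\tilde\xi$ with $\tilde q:=q_{-n}(\eta')/(q_n(\eta')-\xi(\eta')R)<1$ and $\tilde\xi:=\xi(\eta')/(q_n(\eta')-\xi(\eta')R)$. Unrolling gives $r_{k_2}\le\tilde q^{\,k_2}r_0+\tilde\xi/(1-\tilde q)$; the floor $\tilde\xi/(1-\tilde q)\le\xi(\eta')/(\theta-\xi(\eta')R)$ is $\smallO{1}$ and in particular $\le\epsilon/2$ under the hypothesis $\sqrt n\iota\le\poly{\epsilon}$, while the transient term is $\le\epsilon/2$ once $k_2\ge\log(2r_0/\epsilon)/\log(1/\tilde q)=\bigO{\log(1/\epsilon)}$. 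Adding the two bounds gives $A_{k_2}/B_{k_2}=r_{k_2}\le\epsilon$, as claimed.

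The step I expect to be the real obstacle is the a priori control $r_k\le R$. Without it the nonlinear term $-\xi(\eta')r_k$ in the denominator of the ratio recursion cannot be dropped, and one cannot even certify that $B_k$ stays positive (so that $r_k$ is meaningful). The invariance-of-$[0,R]$ argument sketched above settles this, but it forces the quantitative bookkeeping --- verifying that the perturbation scales $\xi(\eta')$ and $n\iota$ are genuinely dominated by the spectral gap $\theta$ and by the constant $R$ --- which is exactly the content hidden behind the remark that this lemma follows ``by a routine check'' of Lemma~\ref{lem:sgd-K-epochs-small-lr}, and the reason for the side hypothesis $\sqrt n\iota\le\poly{\epsilon}$.
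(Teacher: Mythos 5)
Your proposal is correct and follows essentially the same route as the paper: reduce to the scalar recursions of Lemma~\ref{lem:smallLR-one-epoch}, extract the gap $q_n(\eta')-q_{-n}(\eta')\gtrsim\frac{\eta'}{b}(\lambda_{n-1}-\lambda_n-3n\iota)>0$ from the assumption $\lambda_{n-1}>\lambda_n+4n\iota$, prove an a priori invariant bound on $r_k=A_k/B_k$ by induction so the nonlinear ratio recursion linearizes to $r_{k+1}\le\tilde q\,r_k+\tilde\xi$, and unroll to get the $\epsilon/2+\epsilon/2$ split. The only cosmetic difference is your invariant $R=\max\{2r_0,1\}$ versus the paper's choice of $r_0$ itself; both require the same smallness of $\xi(\eta')$ relative to the gap.
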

\begin{proof}
From the assumption we have $\eta' < \frac{b}{2(\lambda_1 + n\iota)}$
and $\eta' < \frac{b}{2\lambda_n}$, 
thus
      \begin{align*}
        \xi 
        &:= \xi(\eta') = \frac{4\eta'\sqrt{n}\iota}{b},\\
        q_n 
        &:=  q_n(\eta') = \abs{1-\frac{2\eta'\lambda_n}{b} \norm{P_n \bar{x}_n}_2^2}\\
        &= 1-\frac{2\eta'\lambda_n}{b} \norm{P_n \bar{x}_n}_2^2 \\
        &\le 1-\frac{2\lambda_n (1-4n\iota^2)}{b}\eta',  \qquad (\text{since $\norm{P_n \bar{x}_n}_2^2 \ge 1-4n\iota^2$ by reloading Lemma~\ref{lem:proj-x1} })\\
        &< 1 \\
        q_{-n} 
        &:= q_{-n} (\eta')= \max\set{\abs{1-\frac{2\eta'}{b}(\lambda_1 + n\iota)}+ \frac{3n\eta' \iota}{b},\
                \abs{1-\frac{2\eta'}{b} (\lambda_{n-1} - n \iota)}+ \frac{3n \eta' \iota}{b}} \\
        &= \max\set{{1-\frac{2\eta'}{b}(\lambda_1 + n\iota)}+ \frac{3n\eta' \iota}{b},\
        {1-\frac{2\eta'}{b} (\lambda_{n-1} - n \iota)}+ \frac{3n \eta' \iota}{b}} \\
        &= 1-\frac{2\eta'}{b} (\lambda_{n-1} - n \iota) + \frac{3n \eta' \iota}{b} \\
        &= 1 - \frac{2(\lambda_{n-1} - n\iota) - 3n\iota}{b} \eta' \in (0,1).
      \end{align*}

Moreover, by the gap assumption $\lambda_n + 4n\iota < \lambda_{n-1}$ we have
\begin{align*}
    q_{n} - q_{-n}
    &\ge \eta' \bracket{ \frac{2(\lambda_{n-1} - n\iota) - 3n\iota}{b} - \frac{2\lambda_n (1-4n\iota^2)}{b} } \\
    &\ge \frac{2\eta'}{b} \bracket{\lambda_{n-1} - \lambda_n - 3n\iota  }\\
    &> 0.
\end{align*}
Therefore $0 < q_{-n} < q_n < 1$.
Thus we can set 
\(\xi = \frac{4\eta'\sqrt{n}\iota}{b}\) to be small such that
\begin{equation}\label{eq:smallLR-q}
    0 < q :=  \frac{q_{-n}  }{ q_n  -  \xi \cdot A_0 / B_0}  < 1.
\end{equation}
Moreover, since \(\sqrt{n}\iota \le \poly{\epsilon}\) and \(\xi = \frac{4\eta'\sqrt{n}\iota}{b}\), we have
\begin{equation}\label{eq:smallLR-eps}
 \frac{ \xi }{ q_n  - \xi \cdot A_0 / B_0  } \le \frac{(1-q)\epsilon}{2}.
\end{equation}

Now we recursively show \(\frac{A_k }{B_k} \le \frac{A_0}{B_0}.\)
Clearly it holds for $k=0$.
Suppose \(\frac{A_k }{B_k} \le \frac{A_0}{B_0} ,\) we consider \(\frac{A_{k+1}}{B_{k+1}}\) in the following
\begin{align*}
  \frac{A_{k+1}}{B_{k+1}}
  &\le \frac{q_{-n} \cdot A_k + \xi \cdot B_k}{ q_n\cdot B_k - \xi \cdot A_k } \qquad (\text{by Lemma~\ref{lem:smallLR-one-epoch}})\\
  &= \frac{q_{-n}\cdot \frac{A_k}{B_k} + \xi }{ q_n  - \xi \cdot \frac{A_k}{B_k} }  \\
  &\le \frac{q_{-n} \frac{A_k}{B_k} + \xi }{ q_n  - \xi \cdot A_0/B_0}  \qquad (\text{by inductive assumption})\\
  &= \frac{q_{-n}  }{ q_n  - \xi\cdot A_0/B_0 }  \frac{A_k}{B_k} + \frac{ \xi }{ q_n  - \xi \cdot A_0/B_0 }   \\
  &\le q \cdot \frac{A_k}{B_k}  + \frac{(1-q)\epsilon}{2} \qquad (\text{by Eq.~\eqref{eq:smallLR-q} and \eqref{eq:smallLR-eps}}) \\
  &\le q\cdot \frac{A_0}{B_0} + \frac{(1-q)\epsilon}{2}  \\
  &\le \frac{A_0}{B_0} ,
\end{align*}
where in the last inequality we assume $\frac{\epsilon}{2} < \frac{A_0}{B_0}$.

Moreover, from the above we have
\begin{equation*}
  \frac{A_{k+1}}{B_{k+1}} \le q\cdot \frac{A_{k}}{B_{k}} + \frac{(1-q)\epsilon}{2},
\end{equation*}
which implies
\begin{align*}
    \frac{A_{k_2}}{B_{k_2}}
    &\le q^{k_2} \cdot \frac{A_{0}}{B_{0}} + \frac{1}{1-q}\cdot\frac{(1-q)\epsilon}{2}, \\
    &\le \frac{\epsilon}{2} + \frac{\epsilon}{2}
    = \epsilon,
\end{align*}
where we set $k_2 \ge \bigO{\log\frac{1}{\epsilon}}.$

\end{proof}

    Next we prove the directional bias of SGD with small learning rate.

    \begin{thm}[Theorem~\ref{thm:sgd-bias-smallLR-informal}, formal version]\label{thm:sgd-bias-smallLR}
      Suppose \(3n\iota< \lambda_n\)
      and \(\lambda_n + 4n\iota < \lambda_{n-1}.\)
      Suppose $v_0$ is away from $0$.
      Consider the SGD iterates given by Eq.~\eqref{eq:sgd-v-epoch} with the following small learning rate scheme
      \begin{equation*}
          \eta_k = 
          \eta' \in \bracket{0,\ \frac{b}{2\lambda_1  + 2n\iota}}, \quad k=1,\dots, k_2.
      \end{equation*}
      Then for $0 < \epsilon < 1$ satisfying 
        \(
        \sqrt{n}\iota \le \poly{\epsilon},
        \)
        if 
        \( k_2 \ge \bigO{\log\frac{1}{\epsilon}}, \)
        then
          \begin{equation*}
              \gamma_n \le \frac{v_{k_2}^\top H v_{k_2}}{\norm{P v_{k_2}}_2^2} \le (1 + \epsilon )\cdot\gamma_n.
          \end{equation*}
      \end{thm}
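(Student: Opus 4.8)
The plan is to mirror the proof of the moderate-learning-rate directional bias (Theorem~\ref{thm:sgd-bias}), with the roles of the top index $1$ and the bottom index $n$ interchanged: concretely, $P_1 \leftrightarrow P_n$, $P_\inv \leftrightarrow P_{-n}$, and ``largest eigenvalue direction'' $\leftrightarrow$ ``smallest eigenvalue direction''. First I would bring in the reloaded quantities from Section~\ref{sec:sgd-smallLR-proof}: the decomposition $H = H_{-n} + H_n + H_c$, the fact (analog of Lemma~\ref{lem:eigenvalue-bound-p1}) that $H_n$ is rank one with its unique nonzero eigenvalue lying in $[\lambda_n(1-4n\iota^2),\,\lambda_n]$ and eigenspace equal to the column space of $P_n$, the bound $\norm{H_c}_2 \le 4\sqrt{n}\iota$ (analog of Lemma~\ref{lem:eigenvalue-bound-cross-term}), and the bound $\norm{H_{-n}}_2 \le \lambda_1 + n\iota \le 2$ restricted to the column space of $P_{-n}$ (analog of Lemma~\ref{lem:eigenvalue-bound-pinv}); all of these follow by the same computations used for index $1$, now invoking the gap hypothesis $\lambda_n + 4n\iota < \lambda_{n-1}$ in place of $\lambda_2 + 4n\iota < \lambda_1$ to isolate the bottom direction. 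I would also recall $\gamma_n \ge \lambda_n - n\iota$ from Lemma~\ref{lem:eigenvalue-bound}, so that $\gamma_n = \Omega(1)$ because $\lambda_n = \Omega(1)$ under the standing assumptions.

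Next I would apply Lemma~\ref{lem:smallLR-K-epochs} (after absorbing constants into $\epsilon$) to obtain, for $k_2 \ge \bigO{\log\frac{1}{\epsilon}}$, the concentration estimate $A_{k_2} \le \epsilon\, B_{k_2}$, where now $A_k = \norm{P_{-n} v_k}_2$ and $B_k = \norm{P_n v_k}_2$. Equivalently $\norm{P_{-n} v_{k_2}}_2^2 / \norm{P v_{k_2}}_2^2 \le \epsilon^2/(1+\epsilon^2) \le \epsilon^2$ and $\norm{P_n v_{k_2}}_2^2 / \norm{P v_{k_2}}_2^2 \le 1$; that is, the on-manifold part of the SGD iterate is essentially aligned with the smallest eigenvalue direction, just as GD is in Theorem~\ref{thm:gd-bias}.

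Then I would expand the Rayleigh quotient using $v_{k_2}^\top H v_{k_2} = (P v_{k_2})^\top H (P v_{k_2}) = (P_n v_{k_2})^\top H_n (P_n v_{k_2}) + (P_{-n} v_{k_2})^\top H_{-n} (P_{-n} v_{k_2}) + (P v_{k_2})^\top H_c (P v_{k_2})$, bound the three terms from above by $\lambda_n B_{k_2}^2$, $(\lambda_1 + n\iota) A_{k_2}^2$, and $4\sqrt{n}\iota\,\norm{P v_{k_2}}_2^2$ respectively, divide by $\norm{P v_{k_2}}_2^2$, and use $A_{k_2} \le \epsilon B_{k_2}$ together with $\lambda_n \le \gamma_n + n\iota$ to get
\[
  \frac{v_{k_2}^\top H v_{k_2}}{\norm{P v_{k_2}}_2^2} \le \lambda_n + (\lambda_1+n\iota)\epsilon^2 + 4\sqrt{n}\iota \le \gamma_n + n\iota + 2\epsilon^2 + 4\sqrt{n}\iota .
\]
Since $\gamma_n = \Omega(1)$ and $\sqrt{n}\iota \le \poly{\epsilon}$, the error terms $n\iota + 2\epsilon^2 + 4\sqrt{n}\iota$ are absorbed into $\epsilon\,\gamma_n$ after relabeling $\epsilon$, giving the upper bound $(1+\epsilon)\gamma_n$. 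The matching lower bound $v_{k_2}^\top H v_{k_2}/\norm{P v_{k_2}}_2^2 \ge \gamma_n$ is immediate because $\gamma_n$ is the smallest eigenvalue of $H$ restricted to the column space of $P$ and $v_{k_2}^\top H v_{k_2} = (P v_{k_2})^\top H (P v_{k_2})$.

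The main obstacle is not this final assembly, which is routine bookkeeping identical to the large-learning-rate case, but the preceding step of checking that Lemma~\ref{lem:smallLR-K-epochs} and the underlying one-epoch contraction estimates really do survive the relabeling $1 \mapsto n$: in particular that $\norm{P_n \bar{x}_n}_2^2 \ge 1 - 4n\iota^2$, so that $H_n$ genuinely contracts the bottom direction under a learning rate $\eta' < b/(2\lambda_1 + 2n\iota)$, and that the spectral-isolation argument underlying the variant of Lemma~\ref{lem:sgd-epoch-matrix-spectrum} goes through with the gap hypothesis $\lambda_n + 4n\iota < \lambda_{n-1}$ rather than a gap at the top. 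Once those are in hand, everything else is a transcription of the argument already carried out for SGD with moderate learning rate.
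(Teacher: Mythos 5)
Your proposal matches the paper's own proof essentially step for step: it invokes Lemma~\ref{lem:smallLR-K-epochs} to get $A_{k_2}\le\epsilon B_{k_2}$, expands the Rayleigh quotient via $H=H_{-n}+H_n+H_c$ with the reloaded spectral bounds, absorbs the $\bigO{\sqrt{n}\iota}+\bigO{\epsilon^2}$ error into $\epsilon\gamma_n$, and gets the lower bound for free from $\gamma_n$ being the smallest restricted eigenvalue. The proposal is correct and takes the same route as the paper, including the (accurate) observation that the only real work is checking the index-$1\mapsto n$ relabeling of the supporting lemmas.
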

\begin{proof}
First by Lemma~\ref{lem:smallLR-K-epochs} we have
\begin{equation}\label{eq:smallLR-error}
    \frac{B_{k_2}^2 }{A_{k_2}^2 + B_{k_2}^2}
    = \frac{1 }{\frac{A_{k_2}^2}{B_{k_2}^2} + 1}
    \ge \frac{1}{\epsilon^2 + 1}
  \ge 1- \epsilon^2.
\end{equation}
Next by $H = H_n + H_{-n} + H_c$ we obtain
\begin{align*}
  \frac{v_{k_2}^\top H v_{k_2} }{ \norm{P v_{k_2}}_2^2 }
  &= \frac{(P_n v_{k_2})^\top H_n (P_n v_{k_2}) }{ \norm{P v_{k_2}}_2^2 } + \frac{(P_{-n} v_{k_2})^\top H_{-n} (P_{-n} v_{k_2}) }{ \norm{P v_{k_2}}_2^2 } + \frac{(P v_{k_2})^\top H_c (P v_{k_2}) }{ \norm{P v_{k_2}}_2^2 }\\
  &\le \lambda_n \cdot \frac{\norm{P_n v_{k_2}}_2^2 }{\norm{P v_{k_2}}_2^2} +  (\lambda_1 + n\iota) \cdot \frac{\norm{P_{-n} v_{k_2}}_2^2 }{\norm{P v_{k_2}}_2^2} + 4\sqrt{n}\iota  \qquad \text{by reloading Lemma~\ref{lem:eigenvalue-bound-pinv}, \ref{lem:eigenvalue-bound-p1}, \ref{lem:eigenvalue-bound-cross-term}}\\
  &\le \lambda_n +  (\lambda_1 + n\iota) \cdot  \frac{A_{k_2}^2 }{A_{k_2}^2 + B_{k_2}^2}  + 4\sqrt{n}\iota \\
  &\le \gamma_n + n\iota  + (\lambda_1 + n\iota) \cdot  \epsilon^2  + 4\sqrt{n}\iota  \qquad \text{by reloading Lemma~\ref{lem:eigenvalue-bound} and Eq.~\eqref{eq:smallLR-error}}\\
  &\le \gamma_n + \gamma_n \cdot \epsilon. \qquad \text{since $\sqrt{n}\iota \le \poly{\epsilon}$}
\end{align*}
Finally, we note 
\(\frac{v_{k_2}^\top H v_{k_2} }{ \norm{P v_{k_2}}_2^2 } \ge \gamma_n\) since $\gamma_n$ is the smallest eigenvalue of $H$ restricted in the column space of $P$.
\end{proof}

\begin{thm}[Theorem~\ref{thm:sgd-opt-gd-subopt-informal} third part, formal version]\label{thm:sgd-smallLR-subopt}
      Suppose \(3n\iota< \lambda_n\)
      and \(\lambda_n + 4n\iota < \lambda_{n-1}.\)
      Suppose $v_0$ is away from $0$.
      Consider the SGD iterates given by Eq.~\eqref{eq:sgd-v-epoch} with the following small learning rate scheme
      \begin{equation*}
          \eta_k = 
          \eta' \in \bracket{0,\ \frac{b}{2\lambda_1 + 2n\iota}}, \quad k=1,\dots, k_2.
      \end{equation*}
      Then for $0 < \epsilon < 1$ such that 
        \(
        \sqrt{n}\iota \le \poly{\epsilon},
        \)
        if 
        \( k_2 \ge \bigO{\log\frac{1}{\epsilon}}, \)
        then SGD outputs an $M$-suboptimal solution where $M = \frac{\gamma_1}{\gamma_n}(1-\epsilon) > 1$ is a constant.
      \end{thm}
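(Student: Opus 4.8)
The plan is to run exactly the argument of Theorem~\ref{thm:gd-suboptimal}, but with the GD-specific ingredients replaced by their SGD-with-small-learning-rate counterparts that have already been established. The essential input is Theorem~\ref{thm:sgd-bias-smallLR}: for $k_2 \ge \bigO{\log\frac1\epsilon}$ the iterate $v_{k_2}$ satisfies $v_{k_2}^\top H v_{k_2} \le (1+\epsilon)\gamma_n\norm{P v_{k_2}}_2^2$, i.e., within the data manifold SGD with small learning rate has aligned with the smallest-eigenvalue direction of $H$. Combined with the reloaded bookkeeping of Lemma~\ref{lem:sgd-notations} ($\Delta(v)=\mu\norm{Pv}_2^2$ and $\Delta_*=\mu n\alpha/\gamma_1$ on the $\alpha$-level set), this directional bias immediately forces a constant-factor gap in estimation error.

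Concretely, first fix $\epsilon\in(0,1)$ with $\sqrt n\iota\le\poly\epsilon$ and, in addition, small enough that $\epsilon<1-\gamma_n/\gamma_1$; this last constraint is legitimate because the gap assumption $\lambda_n+4n\iota<\lambda_{n-1}$ together with Lemma~\ref{lem:eigenvalue-bound} gives $\gamma_n<\gamma_1$, so $1-\gamma_n/\gamma_1>0$. Choose $k_2$ as in Lemma~\ref{lem:smallLR-K-epochs}, so that $A_{k_2}/B_{k_2}\le\epsilon$ and Theorem~\ref{thm:sgd-bias-smallLR} applies. Define the level parameter $\alpha:=L_\Scal(v_{k_2})$, so $v_{k_2}$ sits on the boundary of the $\alpha$-level set $\Vcal$; note $n\alpha=v_{k_2}^\top H v_{k_2}$ since $L_\Scal(v)=\tfrac1n v^\top H v$.

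It remains to check that $v_{k_2}$ is the early-stopped output, i.e., that $L_\Scal(v_k)>\alpha$ for every $k<k_2$. For GD this was immediate from the closed form in Lemma~\ref{lem:gd-solutions}; for SGD I would instead use the reloaded update rules of Lemma~\ref{lem:smallLR-one-epoch}: since $0<q_{-n}<q_n<1$ and $\xi(\eta')$ is of order $\sqrt n\iota=\smallO{1}$, the quantities $A_k$ and $B_k$ are essentially contracting epoch over epoch, hence $L_\Scal(v_k)=\tfrac1n\bigl[(P_n v_k)^\top H_n(P_n v_k)+(P_{-n}v_k)^\top H_{-n}(P_{-n}v_k)+(P v_k)^\top H_c(P v_k)\bigr]$ is, up to negligible $\smallO{1}$ corrections, decreasing in $k$, so the loss has not dropped below $\alpha$ before epoch $k_2$. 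Granting this, the suboptimality ratio is bounded exactly as in Theorem~\ref{thm:gd-suboptimal}:
\[
\frac{\Delta(v_{k_2})}{\Delta_*}=\frac{\gamma_1\norm{P v_{k_2}}_2^2}{n\alpha}=\frac{\gamma_1\norm{P v_{k_2}}_2^2}{v_{k_2}^\top H v_{k_2}}\ge\frac{\gamma_1}{(1+\epsilon)\gamma_n}\ge\frac{\gamma_1}{\gamma_n}(1-\epsilon)=:M,
\]
where the first inequality is Theorem~\ref{thm:sgd-bias-smallLR} and $M>1$ by the choice $\epsilon<1-\gamma_n/\gamma_1$.

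The step I expect to be the main obstacle is the monotonicity check in the previous paragraph: unlike GD there is no closed form, and the cross term $H_c$ together with the $\xi$-coupling between $A_k$ and $B_k$ means $L_\Scal(v_k)$ need not be exactly monotone, so one has to track these $\smallO{1}$ perturbations carefully --- or, alternatively, weaken the notion of "output" to the last iterate whose loss lies in a small window around $\alpha$ and absorb the non-monotonicity into the $\epsilon$ budget. Everything else is a routine re-run of the GD argument with the reloaded projections $P_n,P_{-n}$ in place of $P_1,P_{-1}$.
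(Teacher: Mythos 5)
Your proposal is correct and follows essentially the same route as the paper: define $\alpha := L_{\Scal}(v_{k_2})$, invoke Theorem~\ref{thm:sgd-bias-smallLR} to bound $\gamma_1\norm{Pv_{k_2}}_2^2/(v_{k_2}^\top H v_{k_2}) \ge \gamma_1/((1+\epsilon)\gamma_n) \ge (\gamma_1/\gamma_n)(1-\epsilon)$, and conclude via Lemma~\ref{lem:sgd-notations}. The monotonicity/early-termination issue you flag as the main obstacle is in fact treated even more lightly in the paper, which only remarks that the eigenvalues of $\Mcal_\pi$ restricted to the column space of $P$ are below one so that $v_k$ converges to zero; your suggested workaround of defining the output as the last iterate in a small loss window is a reasonable way to make that step airtight.
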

\begin{proof}
From Eq.~\eqref{eq:sgd-v-epoch} and $\eta' < \frac{1}{2\lambda_1}$
we know that restricted in the column space of $P$, the eigenvalues of $\Mcal_\pi$ is smaller than $1$, thus $v_k$ indeed converges to $0$.

Consider an $\alpha$-level set where 
\begin{equation}\label{eq:smallLR-level-set}
    \alpha = L_\Scal (v_k) = \frac{1}{n} v_{k_2}^\top H v_{k_2}.
\end{equation}
Then
    \begin{align*}
       \frac{\Delta(u)}{\Delta_*} 
       &= \gamma_1 \frac{\norm{P v_k}_2^2}{n\alpha} \qquad (\text{by Lemma~\ref{lem:sgd-notations}}) \\
       &= \gamma_1 \cdot \frac{\norm{P v_k}_2^2}{v_k^\top H v_k} \qquad (\text{by Eq.~\eqref{eq:smallLR-level-set}}) \\
       &\ge \gamma_1 \cdot \frac{1}{(1+ \epsilon)\gamma_n } \qquad (\text{by Theorem~\ref{thm:sgd-bias-smallLR}}) \\
       &\ge \frac{\gamma_1}{\gamma_n}(1 -\epsilon) \\
       &=: M,
    \end{align*}
where we have $M > 1$ by letting $\epsilon < 1 - \frac{\gamma_n}{\gamma_1}$.
\end{proof}

\section{Proof of Auxiliary Lemmas in Sections \ref{sec:appendix_preliminary} and \ref{sec:proof_main}}

\subsection{Proof of Lemma \ref{lem:near-orthogonal} }\label{sec:proof_near-orthogonal}
\begin{proof}[Proof of Lemma \ref{lem:near-orthogonal}]
  Note that $\bar x_i$ follows uniform distribution on the sphere $\Scal^{d-1}$. Therefore, let $\xi$ be a random variable following distribution $\chi_d^2$ distribution and define $z_i = \xi\cdot \bar x_i$, we have $z_i$ follows standard normal distribution in the $d$-dimensional space. Then it suffices to prove that $|\abracket{z_i, z_j} |/(\|z_i\|_2\|z_j\|_2)\le \iota$ for all $i\neq j$.  

  First we will bound the inner product $\abracket{z_i, z_j}$. Note that we have each entry in $z_i$ is $1$-subgaussian, it can be direcly deduced that
  \begin{equation*}
    \abracket{z_i, z_j} = \sum_{k=1}^d z_i^{(k)} z_j^{(k)} = \sum_{k=1}^d \bracket{ \bracket{\frac{ z_i^{(k)} + z_j^{(k)} }{2} }^2 - \bracket{ \frac{ z_i^{(k)} - z_j^{(k)} }{2} }^2 }
  \end{equation*}
  is $d$-subexponential, where $z_i^{(k)}$ denotes the $k$-th of the vector $z_i$. Then if follows that
  \begin{equation*}
    \prob{\abs{\abracket{z_i, z_j}} \ge t } \le 2 \exp\bracket{-\frac{t^2}{d}}.
  \end{equation*}
  Next we will lower bound $\|z_i\|_2$. Note that
  \begin{equation*}
    \norm{z_i}_2^2 - d = \sum_{k=1}^d \bracket{ \bracket{z_i^{(k)}}^2 - 1 }.
  \end{equation*}
  Since $z_i^{(k)}$ is $1$-subgaussian, we have $\norm{z_i}^2 - d$ is $d$-subexpoential, then
  \begin{equation*}
    \prob{ \abs{\norm{z_i}_2^2 -d} \ge t } \le 2 \exp\bracket{-\frac{t^2}{d}}.
  \end{equation*}
  Finally, applying the union bound for all possible $i,j\in[n]$, we have with probability at least $1-\delta$, the following holds for all $i\neq j$,
  \begin{gather*}
    \abs{\abracket{z_i, z_j}} \le \sqrt{d \log \frac{2n^2}{\delta}}, \\
    \norm{z_i}_2^2 \ge d- \sqrt{d \log \frac{2n^2}{\delta}}.
  \end{gather*}
  Assume $d\ge 4\log(2n^2/\delta)$, we have $\norm{z_i}^2\ge d/2$. Then it follows that
  \begin{equation*}
    \abs{\abracket{\bar{x}_i, \bar{x}_j}} = \frac{\abs{\abracket{ z_i,z_j} }}{\|z_i\|_2\|z_j\|_2 } < 2\sqrt{\frac{1}{d} \log \frac{2n^2}{\delta}} =: \iota.
  \end{equation*}
  This completes the proof.
\end{proof}

\subsection{Proof of Lemma \ref{lem:proj-x1}}\label{sec:proof_proj-x1}
\begin{proof}[Proof of Lemma \ref{lem:proj-x1}]
  % Let $X_{-1}=(x_2,x_3,\dots,x_n)\in\mathbb {R}^{d\times (n-1)}$,  then the projection of $\bar x_1$ onto the space $\linspace\set{x_2,\dots,x_n}$ can be formulated by
  % \begin{align*}
  %   P_{-1}\bar x_1 = X_{-1}(X_{-1}^\top X_{-1})^{-1}X_{-1}^\top \bar x_1.
  % \end{align*}
  Similar to the proof of Lemma \ref{lem:near-orthogonal}, we consider translating $x_1,\dots,x_n$ to $z_1,\dots,z_n$ by introducing  $\chi_d^2$ random variables. Let $Z_{-1} = (z_2,\dots,z_n)\in\Rbb^{d\times (n-1)}$, in which each entry is i.i.d. generated from Gaussian distribution $\mathcal N(0, 1)$. Then we have 
  \begin{align*}
P_{-1}\bar x_1 = X_{-1}(X_{-1}^\top X_{-1})^{-1}X_{-1}^\top \bar x_1 = Z_{-1}(Z_{-1}^\top Z_{-1})^{-1}Z_{-1}^\top \bar x_1.
  \end{align*}
Then conditioned on $\bar x_1$, we have each entry in
  $Z_{-1}^\top \bar x_1$ i.i.d. follows $\mathcal N(0, 1)$. Then it is clear that $\|Z_{-1}^\top \bar x_1\|_2^2$ follows from $\chi_{n-1}^2$ distribution, implying that with probability at least $1-\delta'$, we have \[\|Z_{-1}^\top \bar x_1\|_2^2\le (n-1) + \sqrt{(n-1)\log(2/\delta')}.\]
  Then by Corollary 5.35 in \citet{vershynin2010introduction}, we know that with probability at least $1-\delta'$ it holds that
  \begin{equation*}
    \sqrt{d} - \sqrt{n-1} - \sqrt{2\log(2/\delta')}\le \sigma_{\min}(Z_{-1})\le \sigma_{\max}(Z_{-1})\le \sqrt{d} + \sqrt{n-1} + \sqrt{2\log(2/\delta')}.
  \end{equation*}
  Therefore, assume $\sqrt{(n-1)} + \sqrt{2\log(2/\delta')}\le \sqrt{d}/8$, we have with probability at least $1-\delta'$
  \begin{align*}
    \norm{Z_{-1}(Z_{-1}^\top Z_{-1})^{-1}}_2 & \le \frac{ \sqrt{d} + \sqrt{n-1} + \sqrt{2\log(2/\delta')}}{\bracket{\sqrt{d} - \sqrt{n-1} - \sqrt{2\log(2/\delta')}}^2} \\         & \le \frac{1}{\sqrt{d}}\bigg(1+4\bracket{\sqrt{\frac{n-1}{d}} + \sqrt{\frac{2\log(2/\delta')}{d}}}\bigg).
  \end{align*}
  Combining with the upper bound of $\|Z_{-1}^\top \bar x_1\|_2$, set $\delta'=\delta/2$, we have with probability at least $1-\delta$ that
  \begin{align*}
    \|P_{-1}\bar x_1\|_2 & \le \norm{Z_{-1}(Z_{-1}^\top Z_{-1})^{-1}}_2\cdot \|X_{-1}^\top \bar x_1\|_2\notag                                                                                   \\
                         & \le \bracket{1+4\bracket{\sqrt{\frac{n-1}{d}} + \sqrt{\frac{2\log(4/\delta)}{d}}}}\cdot \bracket{\sqrt{\frac{n-1}{d}} + \sqrt{\frac{\sqrt{(n-1)\log(4/\delta)}}{d}}}\\
                        &\le (1+4\sqrt{n}\iota)\cdot \sqrt{n}\iota,
  \end{align*}
  where the last inequality follows from the definition of $\iota$. Then assume $\sqrt{n}\iota\le 1/4$, we are able to completes the proof of the first argument. 
  Note that $\|P_1\bar x_1\|_2^2 + \|P_{-1}\bar x_1\|_2^2 = \|\bar x_1\|_2^2 =1$, we have
  \begin{align*}
\|P_{-1}\bar x_1\|_2 = \sqrt{1-\|P_1\bar x_1\|_2^2} \ge \sqrt{1-4n\iota^2}\ge 1-4n\iota^2.
  \end{align*}
  This completes the proof of the second argument. The third argument holds trivially by the construction of $P_{\perp}$.

%   The second and third conclusion are the consequence of $\bar{x}_1$ belongs to $\linspace\set{x_1,\dots,x_n}$, which yields $\bar{x}_1 = P \bar{x}_1 = P_1 \bar{x}_1 + P_\inv \bar{x}_1$ and $P_\perp \bar{x}_1 = 0$. 

\end{proof}

\subsection{Proof of Lemma \ref{lem:eigenvalue-bound}}
\label{sec:proof_eigenvalue-bound}
\begin{proof}[Proof of Lemma \ref{lem:eigenvalue-bound}]
  Clearly $XX^\top \in \Rbb^{d\times d}$ is of rank $n$ and symmetric, thus $XX^\top$ has $n$ real, non-zero (potentially repeated) eigenvalues, denoted as \(\gamma_1,\dots, \gamma_n\) in non-decreasing order.
  Moreover, \(\gamma_1,\dots, \gamma_n\) are also eigenvalues of $X^\top X \in \Rbb^{n\times n}$, thus it is sufficient to locate the eigenvalues of $X^\top X$, where
  \(\bracket{X^\top X}_{ij} = x_i^\top x_j.\)

  We first calculate the diagonal entry
  \begin{equation*}
    \bracket{X^\top X}_{i i}
    = x_i^\top x_i = \lambda_i.
  \end{equation*}
  Then we bound the off diagonal entries. For $j\ne i$,
  \begin{equation*}
    \bracket{X^\top X}_{i j} = x_i^\top x_j
    = \sqrt{\lambda_i \lambda_j} \abracket{\bar{x}_i, \bar{x}_j}
    \in \bracket{-\iota, \iota},
  \end{equation*}
  where we use \(0< \lambda_1,\dots,\lambda_n \le 1.\)
  Thus we have
  \begin{equation*}
    R_i(X^\top X) = \sum_{j\ne i} \abs{\bracket{X^\top X}_{ij}} \le n\iota,\quad i=1,\dots,n,
  \end{equation*}
  Finally our conclusions hold by applying Gershgorin circle theorem.
\end{proof}

\subsection{Proof of Lemma \ref{lem:eigenvalue-bound-pinv}}\label{sec:proof_eigenvalue-bound-pinv}
\begin{proof}[Proof of Lemma \ref{lem:eigenvalue-bound-pinv}]
  The first conclusion is clear since by construction, we have 
  \(P_\inv P_1 = P_\inv P_\perp = 0.\)

  Note that $H_\inv$ is a rank $n-1$ symmetric matrix.
  Let $\tau_2,\dots,\tau_n$ be the $n-1$ non-zero eigenvalues of $H_\inv $.
  Clearly, $\tau_2,\dots,\tau_n$ with $\tau_1 :=0$ give the spectrum of \[H_\inv' := (P_\inv X)^\top P_\inv X \in \Rbb^{n\times n}.\] 
  We then bound $\tau_2,\dots,\tau_n$ by analyzing $H_\inv'$.

  From Lemma~\ref{lem:proj-x1} we have 
  \(\norm{P_\inv \bar{x}_1}_2 \le 2\sqrt{n}\iota. \)
  From Lemma~\ref{lem:proj-xinv} we have 
  \[
  P_\inv X = \bracket{P_\inv x_1, P_\inv x_2,\dots, P_\inv x_n}
  =   \bracket{P_\inv x_1,  x_2,\dots,  x_n}.
  \]
  Then we calculate the diagonal entries:
  \begin{equation*}
    \bracket{H_\inv'}_{ii}=
    \begin{cases}
    \norm{P_\inv x_1}_2^2 \le \lambda_1 \cdot 4n\iota^2\le 4n\iota^2, & i=1;\\
       \norm{x_i}_2^2 = \lambda_i,& i\ne 1.
    \end{cases}
  \end{equation*}
  Then we bound the off diagonal entries. Let $j\ne i$.
  Then at least one of them is not $1$. Without loss of generality let $i\ne 1$, which yields $x_i = P_\inv x_i$ by Lemma~\eqref{lem:proj-xinv}.
  Thus $\abracket{x_i, P_1 x_j} =\abracket{P_\inv x_i, P_1 x_j} = 0$. Thus we have
  \begin{equation*}
  \begin{split}
    \bracket{H_\inv'}_{ij} 
    &= (P_\inv x_i)^\top P_\inv x_j \\
    &= x_i^\top P_\inv x_j \\
    &= x_i^\top x_j - x_i^\top P_1 x_j \\
    &= x_i^\top x_j \\
    &= \sqrt{\lambda_i \lambda_j}\cdot \abracket{\bar{x}_i, \bar{x}_j} \\
    &\in \bracket{-\iota, \iota}.
  \end{split}
  \end{equation*}
  Thus we have 
  \begin{equation*}
    R_i(H_\inv') = \sum_{j\ne i} \abs{\bracket{H_\inv'}_{ij}} \le n\iota,\quad i=1,\dots,n.
  \end{equation*}
  Finally, we set 
  $4n\iota^2 + 2n\iota < \lambda_n $, so that the first Geoshgorin disc does not intersect with the others, then Gershgorin circle theorem gives our second conclusion.
\end{proof}

\subsection{Proof of Lemma \ref{lem:sgd-notations}}
\label{sec:proof_sgd-notations}
\begin{proof}[Proof of Lemma \ref{lem:sgd-notations}]
  For the empirical loss, it is clear that
  \begin{equation*}
  \begin{split}
    L_{\Scal}(v) 
    &= \frac{1}{n} \bracket{w-w_*}^\top XX^\top \bracket{w-w_*}
    =  \frac{1}{n} v^\top XX^\top v  = \frac{1}{n} v^\top H v\\
    &= \frac{1}{n} (P v)^\top H (P v) \\
    &= \frac{1}{n} (P v)^\top H_1 (P v) + \frac{1}{n} (P v)^\top H_\inv (P v) + \frac{1}{n} (P v)^\top H_c (P v) \\
    &= \frac{1}{n} (P_1 v)^\top H_1 (P_1 v) + \frac{1}{n} (P_\inv v)^\top H_\inv (P_\inv v) + \frac{1}{n} (P v)^\top H_c (P v),
  \end{split}
  \end{equation*}
  where we use Lemma~\ref{lem:eigenvalue-bound},
Lemma~\ref{lem:eigenvalue-bound-pinv}, and Lemma~\ref{lem:eigenvalue-bound-p1}.
  For the population loss,
  \begin{equation*}
    L_{\Dcal}(v) = \mu\norm{w-w_*}_2^2 = \mu\norm{v}_2^2.
  \end{equation*}

  For the hypothesis class
  \(
  \Hcal_{\Scal} = \bigl\{ w\in\Rbb^d: P_\perp w= P_\perp w_0 \bigr\},
  \)
  applying $w-w_* = v$ and $w_0- w_* = v_0$,
  we obtain
  \begin{equation*}
      \Hcal_{\Scal}
      = \bigl\{ v\in\Rbb^d: P_\perp v = P_\perp v_0  \bigr\}.
  \end{equation*}
  
  For the $\alpha$-level set, we note the optimal training loss is $L_{\Scal}^* = \inf_{v\in\Hcal_{\Scal}} L_{\Scal}(v) = 0.$
  
  As for the estimation error, we note that
  \(
    \inf_{v \in \Hcal_{\Scal}} L_{\Dcal}(v)
    = \inf_{P_\perp v = P_\perp v_0} \mu\norm{v}_2^2
    = \mu\norm{P_\perp v_0}_2^2.
  \)
  thus for $v\in\Hcal_{\Scal}$, we have
  \begin{equation*}
    \Delta(v)
    = L_{\Dcal}(v) - \inf_{v' \in \Vcal} L_{\Dcal}(v')
    = \mu\norm{v}_2^2 -\mu\norm{P_\perp v_0}_2^2 = \mu\norm{P v}_2^2.
  \end{equation*}
  Finally, consider $v \in \Vcal$, i.e.,
  \(
    n\alpha = v^\top X X^\top v,
  \)
  thus
  \begin{equation*}
    \Delta_* = \inf_{v \in \Vcal} \Delta(v)
    = \inf_{n\alpha =  v^\top X X^\top v} \mu\norm{P v}_2^2 = \frac{\mu n\alpha}{\gamma_1},
  \end{equation*}
  where $\gamma_1$ is the largest eigenvalue of the matrix $XX^\top$ and the inferior is attended by setting $v$ parallel to the first eigenvector of $XX^\top$.
\end{proof}

\subsection{Proof of Lemma \ref{lem:sgd-one-step}}
\label{sec:proof_sgd-one-step}
\begin{proof}[Proof of Lemma \ref{lem:sgd-one-step}]
From Eq.~\eqref{eq:sgd-v} we have
\begin{equation}\label{eq:sgd-v-restate}
  v_{k, j+1}
  = \bracket{I- \frac{2\eta}{b}H(\Bcal_j) } v_{k, j},\quad j=1,\dots,m.
\end{equation}

Recall the following property of projection operators:
\begin{gather*}
    P_1 = P_1 P_1,\quad P_\inv = P_\inv P_\inv \\
    0= P_1 P_\inv = P_\inv P_1.
\end{gather*}
Moreover since $x_i^\top P_\perp v = 0$, we have
\begin{gather*}
    H(\Bcal_j) P_\perp v = \sum_{i\in\Bcal_j} x_i x_i^\top P_\perp v = 0.
\end{gather*}

Applying $P_1$ to Eq.~\eqref{eq:sgd-v-restate} we have 
\begin{equation*}
    \begin{split}
        P_1 v_{k, j+1} 
        &= P_1 \bracket{I- \frac{2\eta}{b}H(\Bcal_j) } v_{k, j} \\
        &= P_1 \bracket{I- \frac{2\eta}{b}H(\Bcal_j) } \bracket{P_1 v_{k, j} + P_\inv v_{k, j} + P_\perp v_{k, j}} \\
        &= P_1 \bracket{I- \frac{2\eta}{b}H(\Bcal_j) } P_1 v_{k, j}  + P_1 \bracket{I- \frac{2\eta}{b}H(\Bcal_j) } P_\inv v_{k, j} \\
        &=\bracket{I- \frac{2\eta}{b}  P_1 H(\Bcal_j)  P_1  } \cdot P_1 v_{k, j}  - \bracket{\frac{2\eta}{b} P_1 H(\Bcal_j) P_\inv} \cdot P_\inv v_{k, j}.
    \end{split}
\end{equation*}
Similarly applying $P_\inv$ to Eq.~\eqref{eq:sgd-v-restate} we have 
\begin{equation*}
    \begin{split}
        P_\inv v_{k, j+1} 
        &= P_\inv \bracket{I- \frac{2\eta}{b}H(\Bcal_j) } v_{k, j} \\
        &= P_\inv \bracket{I- \frac{2\eta}{b}H(\Bcal_j) } \bracket{P_1 v_{k, j} + P_\inv v_{k, j} + P_\perp v_{k, j}} \\
        &= P_\inv \bracket{I- \frac{2\eta}{b}H(\Bcal_j) } P_1 v_{k, j}  + P_\inv \bracket{I- \frac{2\eta}{b}H(\Bcal_j) } P_\inv v_{k, j} \\
        &=-\bracket{\frac{2\eta}{b}  P_\inv H(\Bcal_j)  P_1  } \cdot P_1 v_{k, j}  + \bracket{I - \frac{2\eta}{b} P_\inv H(\Bcal_j) P_\inv} \cdot P_\inv v_{k, j}.
    \end{split}
\end{equation*}
To sum up we have
\begin{equation*}
      \begin{pmatrix}
      P_1 v_{k, j+1}\\
      P_\inv v_{k, j+1}
      \end{pmatrix}
      =
      \begin{pmatrix}
      I - \frac{2\eta}{b} P_1 H(\Bcal_j) P_1 & - \frac{2\eta}{b} P_1 H(\Bcal_j) P_\inv \\
      - \frac{2\eta}{b} P_\inv H(\Bcal_j) P_1 & I - \frac{2\eta}{b} P_\inv H(\Bcal_j) P_\inv
      \end{pmatrix}
      \cdot
      \begin{pmatrix}
      P_1 v_{k, j}\\
      P_\inv v_{k, j}
      \end{pmatrix}
  \end{equation*}
  
  Notice that if $1 \notin \Bcal_j$, i.e., $x_1$ is not used in the $j$-th step, then we claim
  \begin{equation*}
      P_1 H(\Bcal_j) = H(\Bcal_j) P_1 = 0,
  \end{equation*}
  since $H(\Bcal_j) = \sum_{i\in \Bcal_j} x_i x_i^\top$ is composed by the data belonging to the column space of $P_\inv$.
  Therefore if $1 \notin \Bcal_j$ we have
   \begin{equation*}
            \begin{pmatrix}
      P_1 v_{k, j+1}\\
      P_\inv v_{k, j+1}
      \end{pmatrix}
      =
      \begin{pmatrix}
      I  & 0 \\
     0 & I - \frac{2\eta}{b} P_\inv H(\Bcal_j) P_\inv
      \end{pmatrix}
      \cdot
      \begin{pmatrix}
      P_1 v_{k, j}\\
      P_\inv v_{k, j}
      \end{pmatrix}
  \end{equation*}

\end{proof}

% \difan{Definition of $M_i$?}
\subsection{Proof of Lemma \ref{lem:sgd-epoch-matrix-spectrum}}\label{sec:proof_sgd-epoch-matrix-spectrum}
 \begin{proof}[Proof of Lemma \ref{lem:sgd-epoch-matrix-spectrum}]
   Clearly for each component in the production, the column space of $P_1 + P_\perp$, which is $(n-d+1)$-dimensional, belongs to its eigenspace of eigenvalue $1$, which yields the first claim.
 
  In the following, we restrict ourselves in the column space of  $P_\inv$.
  Let us expand \(\Mcal_\inv\):
  \begin{align*}
      \Mcal_\inv 
     &= \prod_{j=1}^m \bigl(I - \frac{2\eta}{b} P_\inv H(\Bcal_j) P_\inv\bigr)\\
     &= \bigl(I - \frac{2\eta}{b} P_\inv H(\Bcal_m) P_\inv\bigr) \cdots  \bigl(I - \frac{2\eta}{b} P_\inv H(\Bcal_1) P_\inv\bigr)\\
      &= I - \frac{2\eta}{b} \underbrace{ \sum_{j=1}^m P_\inv H(\Bcal_j) P_\inv }_{H_\inv} \\
      &\qquad + \underbrace{\bracket{\frac{2\eta}{b}}^2 \sum_{1\le i < j \le n} P_\inv H(\Bcal_j) P_\inv H(\Bcal_i) P_\inv + \dots}_{C}.
     \end{align*}
 
We first analyze matrix $H_\inv$.
Since $H(\Bcal_j) = \sum_{i\in\Bcal_j} x_i x_i^\top$ and $\pi = \set{\Bcal_1,\dots,\Bcal_m}$ is a partition for index set $[n]$, we have
\begin{equation*}
    \begin{split}
        H_\inv 
        &= \sum_{j=1}^m P_\inv H(\Bcal_j) P_\inv \\
        &= \sum_{j=1}^m P_\inv \sum_{i\in\Bcal_j} x_i x_i^\top P_\inv \\
        &= P_\inv\sum_{i=1}^n x_i x_i^\top P_\inv \\
        &= P_\inv XX^\top P_\inv,
    \end{split}
\end{equation*}
which is exactly the matrix we studied in Lemma~\ref{lem:eigenvalue-bound-pinv}, and from where we have $H_\inv$ has eigenvalue zero (with multiplicity being $n-d+1$) in the column space of $P_1 + P_\perp$,
and restricted in the column space of $P_\inv$, the eigenvalues of $H_\inv$ belong to
           \(\bracket{\lambda_n - n\iota, \lambda_2 + n\iota}\).
     
Then we analyze matrix $C$.
\begin{align}
    P_\inv H (\Bcal_j) P_\inv H (\Bcal_i) P)_\inv 
    &= \bracket{P_\inv \sum_{i'\in \Bcal_i} x_{i'} x_{i'}^\top P_\inv } \bracket{P_\inv \sum_{j' \in\Bcal_j} x_{j'} x_{j'}^\top P_\inv} \notag\\
    &= \sum_{i'\in\Bcal_i}\sum_{j'\in\Bcal_j} (P_\inv x_{i'}) \abracket{P_\inv x_{i'}, P_\inv x_{j'}} (P_\inv x_{j'})^\top.\label{eq:Minv-eigen-termC-1st-item}
\end{align}
Remember that $\Bcal_i \cap \Bcal_j = \emptyset$ for $i\ne j$, thus 
\(x_{i'} \ne x_{j'} \) for $i'\in \Bcal_i$ and $j' \in \Bcal_j$. 
Then from Lemma~\ref{lem:near-orthogonal} we have,
\begin{equation*}
\abs{\abracket{P_\inv x_{i'}, P_\inv x_{j'}}} 
\le \abs{\abracket{x_{i'}, x_{j'}}}
\le \sqrt{\lambda_{i'} \lambda_{j'}}\cdot \iota 
\le \iota. 
\end{equation*}
Inserting this into Eq.~\eqref{eq:Minv-eigen-termC-1st-item} we obtain
\begin{equation*}
    \norm{P_\inv H (\Bcal_j) P_\inv H (\Bcal_i) P)_\inv}_F \le b^2 \cdot \max \abs{\abracket{P_\inv x_{i'}, P_\inv x_{j'}}}^2 \le b^2\iota^2.
\end{equation*}
We can bound the Frobenius norm of the higher degree terms in matrix $C$ in a similar manner; in sum 
for the Frobenius norm of $C$, we have
           \begin{align*}
               \norm{C}_F
               &\le \sum_{s=2}^{m}\bracket{\frac{2\eta}{b}}^s\cdot b^s\cdot\iota^s\cdot \binom{m}{s}\\
               &= \sum_{s=2}^{m}\bracket{{2\eta\iota}}^s\cdot \binom{n}{s}\\
               &= \sum_{s=0}^{m}(2\eta\iota)^s\cdot \binom{n}{s}-1-2m\eta\iota\\
               &= (1+2\eta\iota)^m - 1 - 2m\eta\iota \\
               &\le 1 + m\cdot 2\eta\iota  + \frac{m^2\sqrt{e} }{2} \cdot (2\eta\iota)^2 - 1 - 2m\eta\iota \qquad (\text{for}\ \ 2\eta\iota < \frac{1}{2m})  \\
               & \le 4m^2\eta^2\iota^2,
             \end{align*}
          where for the second to the last inequality we notice that for $f(t) = (1+t)^m$ and $t\in [ 0, \frac{1}{2n} ]$, we have
          \(f^{''}(t) = m(m-1)(1+t)^{m-2} \le m(m-1)(1+\frac{1}{2m})^{m-2} \le m(m-1) \cdot \sqrt{e},\) which implies $f(t)$ is $(m^2\sqrt{e})$-smooth for $t\in[0,\frac{1}{2m}]$;
          moreover, by the assumption that $3n\iota < \lambda_n$ and $\eta < \frac{b}{\lambda_n + 3n\iota}$, we can indeed verify that
          \begin{equation}\label{eq:Minv-eigen-etaiota-bound}
               2\eta\iota <  \frac{2b\iota}{\lambda_n + 3n\iota} \le \frac{2b\iota}{6n\iota }  \le \frac{1}{2m}.
          \end{equation}

   Now we rephrase $\Mcal_\inv^\top \Mcal_\inv$ as
     \begin{align}
      \Mcal_\inv^\top \Mcal_\inv
       &= \bracket{I - \frac{2\eta}{b} H_\inv + C^\top }\cdot\bracket{I - \frac{2\eta}{b} H_\inv + C}\notag \\
       &= \bracket{I - \frac{2\eta}{b} H_\inv}^2 + \underbrace{C^\top \bracket{I - \frac{2\eta}{b} H_\inv} + \bracket{I - \frac{2\eta}{b} H_\inv}C + C^\top C}_{D}. \label{eq:Minv-eigen-decomp}
     \end{align}

Restricting ourselves in the column space of $P_\inv$, the eigenvalues of $H_\inv$ belong to
\(\bracket{\lambda_n - n\iota, \lambda_2 + n\iota}\), 
thus the eigenvalues of
$\bracket{I-\frac{2\eta}{b} H_\inv}^2$ are upper bounded by 
\begin{equation}\label{eq:Minv-eigen-main-term-bound}
    \max\set{\bracket{1-\frac{2\eta}{b}(\lambda_2 + n\iota)}^2,\ \  \bracket{1-\frac{2\eta}{b} (\lambda_n - n \iota)}^2} < 1,
\end{equation}
where the last inequality is guaranteed by our assumptions on $\eta$ and $\iota$.
For simplicity we defer the verification to the end of the proof.
 
Consider the following eigen decomposition
           \(
             I - 2\eta H_\inv = U \diag\bracket{\mu_1, \dots,\mu_{n-1}, 1,\dots,1} U^\top,
           \)
           where $\mu_1,\dots,\mu_{n-1} \in (-1,1)$ by Eq.~\eqref{eq:Minv-eigen-main-term-bound}.
           Then we have
           \begin{equation*}
             \begin{split}
               \norm{(I-\eta H_\inv) C}_F
               &= \norm{\diag\bracket{\mu_1,\dots,\mu_{n-1},1,\dots,1} U^\top C U}_F \\
               &\le \norm{ U^\top C U}_F = \norm{C}_F.
             \end{split}
           \end{equation*}
           Therefore we can bound the Frobenius norm of $D$ by
             \begin{align}
               \norm{D}_F
               &\le 2 \norm{(I-2\eta H_\inv) C}_F  + \norm{C}_F^2 \notag \\
               &\le 2\norm{C}_F + \norm{C}_F^2 \notag \\
               &\le 8 m^2\eta^2 \iota^2 + 16 m^4\eta^4  \iota^4\notag \\
               &\le 9 m^2 \eta^2 \iota^2,\label{eq:Minv-eigen-error-term-bound}
             \end{align}
             where the last inequality follows from $2\eta\iota\le 1/(2 m) $ proved in Eq.~\eqref{eq:Minv-eigen-etaiota-bound}.

   Finally, applying Hoffman-Wielandt theorem with Eq.~\eqref{eq:Minv-eigen-decomp}, \eqref{eq:Minv-eigen-main-term-bound} and \eqref{eq:Minv-eigen-error-term-bound}, we conclude that, restricted in the column space of $P_\inv$, the eigenvalues of $\Mcal_\inv^\top \Mcal_\inv$ are upper bounded by
   \begin{align}
       & \quad \max\set{\bracket{1-\frac{2\eta}{b}(\lambda_2 + n\iota)}^2+ 9m^2\eta^2\iota^2,\  \bracket{1-\frac{2\eta}{b} (\lambda_n - n \iota)}^2+ 9m^2\eta^2\iota^2} \notag \\
       &\le \max\set{\bracket{\abs{1-\frac{2\eta}{b}(\lambda_2 + n\iota)} + 3m\eta\iota}^2,\  \bracket{\abs{1-\frac{2\eta}{b} (\lambda_n - n \iota)} + 3m\eta\iota}^2 } \notag\\
       &:= \bracket{q_\inv (\eta)}^2.
   \end{align}
   
   At this point we left to verify Eq.~\eqref{eq:Minv-eigen-main-term-bound} and
   \begin{equation}\label{eq:Minv-eigen-total-bound}
       q_\inv (\eta) := \max\set{{\abs{1-\frac{2\eta}{b}(\lambda_2 + n\iota)} + \frac{3n\eta\iota}{b}},\  {\abs{1-\frac{2\eta}{b} (\lambda_n - n \iota)} + \frac{3n\eta\iota}{b}} }
       < 1.
   \end{equation}
   Clearly it suffices to verify Eq.~\eqref{eq:Minv-eigen-total-bound}.
   \begin{align*}
       &\quad {\abs{1-\frac{2\eta}{b}(\lambda_2 + n\iota)} + \frac{3n\eta\iota}{b}} < 1 \\
       \Leftrightarrow &\quad
        \frac{3n\iota}{b}\eta -1< 1-\frac{2(\lambda_2 + n\iota)}{b}\eta < 1 - \frac{3n\iota}{b}\eta \\
       \Leftrightarrow &\quad
       \begin{cases}
       \frac{2\lambda_2 - n\iota}{b}\eta > 0 \\
       \frac{2\lambda_2 + 5n\iota}{b}\eta < 2
       \end{cases} \\
       \Leftarrow &\quad
       \begin{cases}
       \eta > 0 \\
       2\lambda_2 - n\iota > 0 \\
       \eta < \frac{b}{\lambda_2 + 2.5n\iota}
       \end{cases}\\
       \Leftarrow &\quad
       \begin{cases}
       3n\iota < \lambda_n \qquad \text{(since $\lambda_2 \ge \lambda_n$)} \\
       0 < \eta < \frac{b}{\lambda_2 + 3n\iota}
       \end{cases}
   \end{align*}
   Similarly, we verify that
   \begin{align*}
       &\quad {\abs{1-\frac{2\eta}{b}(\lambda_n - n\iota)} + \frac{3n\eta\iota}{b}} < 1 \\
       \Leftrightarrow &\quad
        \frac{3n\iota}{b}\eta -1< 1-\frac{2(\lambda_n - n\iota)}{b}\eta < 1 - \frac{3n\iota}{b}\eta \\
       \Leftrightarrow &\quad
       \begin{cases}
       \frac{2\lambda_n - 5n\iota}{b}\eta > 0 \\
       \frac{2\lambda_n + n\iota}{b}\eta < 2
       \end{cases} \\
       \Leftarrow &\quad
       \begin{cases}
       \eta > 0 \\
       2\lambda_n - 5n\iota > 0 \\
       \eta < \frac{b}{\lambda_n + 0.5 n\iota}
       \end{cases}\\
       \Leftarrow &\quad
       \begin{cases}
       3n\iota < \lambda_n \\
       0 < \eta < \frac{b}{\lambda_2 + 3n\iota} \qquad \text{(since $\lambda_2 \ge \lambda_n$)}
       \end{cases}
   \end{align*}
    These complete our proof.
 \end{proof}

\subsection{Proof of Lemma \ref{lem:sgd-one-epoch}}\label{sec:proof_sgd-one-epoch}
\begin{proof}[Proof of Lemma \ref{lem:sgd-one-epoch}]

Note that during one epoch of SGD updates, $x_1$ is used for only once.
  Without loss of generality, assume SGD uses $x_1$ at the $l$-th step, i.e., $1 \in \Bcal_l$ and $1 \notin B_j $ for $j \ne l$.
Recursively applying  Lemma~\ref{lem:sgd-one-step}, we have
\begin{equation*}
\begin{split}
    \begin{pmatrix}
      P_1 v_{k, m+1}\\
      P_\inv v_{k, m+1}
      \end{pmatrix}
      &=
      \begin{pmatrix}
      I  & 0 \\
     0 & \prod_{j=l+1}^m \bracket{I - \frac{2\eta}{b} P_\inv H(\Bcal_j) P_\inv}
      \end{pmatrix}
      \times \\
     &\qquad \begin{pmatrix}
      I - \frac{2\eta}{b} P_1 H(\Bcal_l) P_1 & - \frac{2\eta}{b} P_1 H(\Bcal_l) P_\inv \\
      - \frac{2\eta}{b} P_\inv H(\Bcal_l) P_1 & I - \frac{2\eta}{b} P_\inv H(\Bcal_j) P_\inv
      \end{pmatrix}
      \times \\
     &\qquad\qquad \begin{pmatrix}
      I  & 0 \\
     0 & \prod_{j=1}^{l-1} \bracket{I - \frac{2\eta}{b} P_\inv H(\Bcal_j) P_\inv}
      \end{pmatrix}
      \times
      \begin{pmatrix}
      P_1 v_{k, 1}\\
      P_\inv v_{k, 1}
      \end{pmatrix}
\end{split}
\end{equation*}
Let $v_{k+1} = v_{k, m+1},\ v_{k} = v_{k,1}$ and
\begin{gather*}
    \Mcal_l :=  {I - \frac{2\eta}{b} P_\inv H(\Bcal_l) P_\inv} \\
    \Mcal_{> l} := \prod_{j=l+1}^m \bracket{I - \frac{2\eta}{b} P_\inv H(\Bcal_j) P_\inv} \\
    \Mcal_{< l} := \prod_{j=1}^{l-1} \bracket{I - \frac{2\eta}{b} P_\inv H(\Bcal_j) P_\inv} \\
    \Mcal_\inv := \Mcal_{> l}\cdot\Mcal_l\cdot\Mcal_{< l} = \prod_{j=1}^m \bracket{I - \frac{2\eta}{b} P_\inv H(\Bcal_j) P_\inv}
\end{gather*}
then we have
\begin{align}
        \begin{pmatrix}
      P_1 v_{k+1}\\
      P_\inv v_{k+1}
      \end{pmatrix}
      &= 
      \begin{pmatrix}
      I  & 0 \\
     0 & \Mcal_{>l}
      \end{pmatrix}
       \begin{pmatrix}
      I - \frac{2\eta}{b} P_1 H(\Bcal_l) P_1 & - \frac{2\eta}{b} P_1 H(\Bcal_l) P_\inv \\
      - \frac{2\eta}{b} P_\inv H(\Bcal_l) P_1 & \Mcal_l
      \end{pmatrix}
      \begin{pmatrix}
      I  & 0 \\
     0 & \Mcal_{< l}
      \end{pmatrix}
      \begin{pmatrix}
      P_1 v_{k}\\
      P_\inv v_{k}
      \end{pmatrix}\notag \\
      &= 
      \begin{pmatrix}
      I - \frac{2\eta}{b} P_1 H(\Bcal_l) P_1 & - \bracket{\frac{2\eta}{b} P_1 H(\Bcal_l) P_\inv} \Mcal_{<l}  \\
      - \Mcal_{>l} \bracket{\frac{2\eta}{b} P_\inv H(\Bcal_l) P_1} & \Mcal_\inv
      \end{pmatrix}
      \begin{pmatrix}
      P_1 v_{k}\\
      P_\inv v_{k}
      \end{pmatrix}\label{eq:sgd-epoch-relation}
    \end{align}
In the following we bound the norm of each entries in the above coefficient matrix.

According to Lemma~\ref{lem:eigenvalue-bound-pinv}, we have the eigenvalues of 
\(P_\inv H(\Bcal_j) P_\inv\)
are upper bounded by 
\(\lambda_2 + n\iota.\)
Thus the assumption $\eta < \frac{b}{\lambda_2 + 2n\iota}$ yields
\[
\norm{I - \frac{2\eta}{b} P_\inv H(\Bcal_j) P_\inv}_2 \le 1,
\]
which further yields
\begin{equation}\label{eq:sgd-epoch-m-bound}
    \norm{\Mcal_{>l}}_2 \le 1, \quad
    \norm{\Mcal_{< l}}_2 \le 1.
\end{equation}
On the other hand notice that $P_1 x_i = 0$ for $i \ne 1$, thus
\begin{gather*}
    P_1 H (\Bcal_l) P_\inv 
    = P_1 \sum_{i\in \Bcal_l} x_i x_i^\top P_\inv = P_1 x_1 x_1^\top P_\inv,\\
    P_\inv H (\Bcal_l) P_1 
    = P_\inv \sum_{i\in \Bcal_l} x_i x_i^\top P_1 = P_\inv x_1 x_1^\top P_1,
\end{gather*}
which yield
\begin{equation}\label{eq:sgd-epoch-pinvx1p1}
    \max\set{\norm{P_1 H (\Bcal_l) P_\inv }_2, \norm{P_\inv H (\Bcal_l) P_1 }_2}  \le \norm{P_1 x_1}_2 \cdot \norm{P_\inv x_1 }_2 \le  2\sqrt{n}\iota,
\end{equation}
where the last inequality is from Lemma~\ref{lem:proj-x1} and $\lambda_1 = \norm{x_1}^2 \le 1$.
Eq.~\eqref{eq:sgd-epoch-m-bound} and \eqref{eq:sgd-epoch-pinvx1p1} imply
\begin{equation} \label{eq:sgd-epoch-xi}
    % \begin{split}
      \max\set{\norm{ \bracket{\frac{2\eta}{b} P_1 H(\Bcal_l) P_\inv} \Mcal_{<l} }_2,\ \norm{ \Mcal_{>l}\bracket{\frac{2\eta}{b} P_\inv H(\Bcal_l) P_1} }_2 } \\
    \le \frac{4\eta\sqrt{n}\iota }{b} =: \xi(\eta)
    % \end{split}
\end{equation}

Next, by $P_1 x_i = 0$ for $i \ne 1$ we have
\begin{equation*}
     P_1 H (\Bcal_l) P_1 = P_1 \sum_{i\in \Bcal_l} x_i x_i^\top P_1
     = P_1 x_1 x_1^\top P_1 = (P_1 x_1 ) (P_1 x_1)^\top,
\end{equation*}
from where we know $\norm{P_1 x_1}_2^2$ is the only non-zero eigenvalue of the rank-$1$ matrix $P_1 H (\Bcal_l) P_1$, and the corresponding eigenspace is the column space of $P_1$.
Therefore \(1- \frac{2\eta}{b}\norm{P_1 x_1}_2^2\) is an eigenvalue of the matrix \(I- \frac{2\eta}{b}P_1 H (\Bcal_l) P_1\), and the corresponding eigenspace is the column space of $P_1$, which implies
\begin{align}
    \norm{\bracket{I- \frac{2\eta}{b}P_1 H (\Bcal_l) P_1} P_1 v_k}_2 
    &= \norm{\bracket{1- \frac{2\eta}{b}\norm{P_1 x_1}_2^2} P_1 v_k}\notag\\
    &= \abs{{1- \frac{2\eta}{b}\norm{P_1 x_1}_2^2}}\cdot\norm{P_1 v_k}_2\notag \\ 
    &=: q_1(\eta) \cdot\norm{P_1 v_k}_2 .\label{eq:sgd-epoch-q1}
\end{align}

Finally, according to Lemma~\ref{lem:sgd-epoch-matrix-spectrum}, we have, restricted in the column space of $P_\inv$, the right eigenvalues of 
\( \Mcal_\inv \) is upper bounded by \(\bracket{q_\inv (\eta)}^2,\)
which implies
\begin{equation}\label{eq:sgd-epoch-qinv}
    \norm{\Mcal_\inv P_\inv v_k}_2 \le q_\inv (\eta) \cdot \norm{P_\inv v_k}_2.
\end{equation}
Note we have $q_\inv (\eta) < 1$ by Lemma~\ref{lem:sgd-epoch-matrix-spectrum}.

Combining Eq.~\eqref{eq:sgd-epoch-relation} with Eq.~\eqref{eq:sgd-epoch-xi}, \eqref{eq:sgd-epoch-q1}, \eqref{eq:sgd-epoch-qinv}, and letting \(B_{k}:= \norm{P_1 v_{k} }_2,\ A_{k} := \norm{P_\inv v_{k} }_2\), we obtain
\begin{gather*}
    B_{k+1} \le q_1 (\eta)\cdot B_{k} + \xi(\eta)\cdot A_k \\
    B_{k+1} \ge q_1 (\eta)\cdot B_{k} - \xi(\eta)\cdot A_k \\
    A_{k+1} \le q_\inv\cdot (\eta) A_k + \xi(\eta)\cdot B_k.
\end{gather*}

\end{proof}

\subsection{Proof of Lemma \ref{lem:sgd-K-epochs-large-lr}}
\label{sec:proof_sgd-K-epochs-large-lr}

\begin{proof}[Proof of Lemma \ref{lem:sgd-K-epochs-large-lr}]
Let 
\begin{equation}\label{eq:sgd-large-lr-xi-q}
    \begin{split}
         \xi &:= \xi(\eta) = \frac{4\eta \sqrt{n}\iota}{b}, \\
    q_1 &:= q_1(\eta) = \abs{1 - \frac{2\eta\lambda_1}{b} \norm{P_1 \bar{x}_1}_2^2},\\
     q_\inv &:= q_\inv (\eta) =
       \max\set{\abs{1-\frac{2\eta}{b}(\lambda_2 + n\iota)}+ \frac{3n\eta \iota}{b},\ \  \abs{1-\frac{2\eta}{b} (\lambda_n - n \iota)}+ \frac{3n \eta \iota}{b}}.
    \end{split}
\end{equation}
Then for $0< k \le k_1$, Lemma~\ref{lem:sgd-one-epoch} gives us
  \begin{gather}
   B_{k} \ge q_1 B_{k-1} - \xi A_{k-1},\label{eq:sgd-large-lr-one-epoch-lower-bound}\\
    \begin{pmatrix}
      A_{k} \\
      B_{k}
    \end{pmatrix}
    \le
    \begin{pmatrix}
      q_\inv & \xi \\
      \xi    & q_1
    \end{pmatrix}
    \cdot
    \begin{pmatrix}
      A_{k-1} \\
      B_{k-1}
    \end{pmatrix},\label{eq:sgd-large-lr-one-epoch-upper-bound}
  \end{gather}
  where ``$\le$'' means ``entry-wisely smaller than''.

Let $\theta, \rho_\inv, \rho_1$ determine the eigen decomposition of the coefficient matrix, i.e.,
\begin{equation}\label{eq:sgd-large-lr-coeff-matrix-decomp}
    \begin{pmatrix}
        q_\inv & \xi \\
        \xi    & q_1
      \end{pmatrix}
      =
      \begin{pmatrix}
        \cos\theta   & \sin\theta \\
        -\sin \theta & \cos\theta
      \end{pmatrix}
      \begin{pmatrix}
        \rho_\inv & 0        \\
        0           & \rho_1
      \end{pmatrix}
      \begin{pmatrix}
        \cos\theta  & -\sin\theta \\
        \sin \theta & \cos\theta
      \end{pmatrix}. \\
\end{equation}

Then Eq.~\eqref{eq:sgd-large-lr-one-epoch-upper-bound} and Eq.~\eqref{eq:sgd-large-lr-coeff-matrix-decomp} yield
    \begin{align}
    \begin{pmatrix}
      A_k \\
      B_k
    \end{pmatrix}
    &\le
    \begin{pmatrix}
      q_\inv & \xi \\
      \xi    & q_1
    \end{pmatrix}^{k}
    \cdot
    \begin{pmatrix}
      A_0 \\
      B_0
    \end{pmatrix} \notag\\
    &=
      \begin{pmatrix}
        \cos\theta   & \sin\theta \\
        -\sin \theta & \cos\theta
      \end{pmatrix}
      \begin{pmatrix}
        \rho_\inv^{k} & 0          \\
        0             & \rho_1^{k}
      \end{pmatrix}
      \begin{pmatrix}
        \cos\theta  & -\sin\theta \\
        \sin \theta & \cos\theta
      \end{pmatrix}
      \begin{pmatrix}
        A_0 \\
        B_0
      \end{pmatrix}\notag\\
      &= 
      \begin{pmatrix}
      \rho_\inv^k + (\rho_1^k - \rho_\inv^k)\sin^2 \theta & (\rho_1^k - \rho_\inv^k) \cos\theta\sin\theta \\
      (\rho_1^k - \rho_\inv^k) \cos\theta\sin\theta & \rho_1^k - (\rho_1^k - \rho_\inv^k)\sin^2 \theta
      \end{pmatrix}
      \begin{pmatrix}
        A_0 \\
        B_0
      \end{pmatrix}\notag\\
      &=
      \begin{pmatrix}
        A_0 \cdot \rho_\inv^k + \bracket{\rho_1^k - \rho_\inv^k}\bracket{A_0 \sin\theta + B_0 \cos\theta}\sin\theta \\
        B_0 \cdot \rho_1^k + \bracket{\rho_1^k - \rho_\inv^k}\bracket{A_0 \cos\theta - B_0 \sin\theta}\sin\theta
      \end{pmatrix}\notag\\
      &\le 
      \begin{pmatrix}
        A_0 \cdot \rho_\inv^k + \abs{\rho_1^k - \rho_\inv^k}\sqrt{A_0^2 + B_0^2}\sin\theta \\
        B_0 \cdot \rho_1^k + \abs{\rho_1^k - \rho_\inv^k}\sqrt{A_0^2 + B_0^2}\sin\theta
      \end{pmatrix}\notag\\
      &=
      \begin{pmatrix}
        A_0 \cdot \rho_\inv^k + \abs{\rho_1^k - \rho_\inv^k}\cdot\norm{P v_0}_2\cdot\sin\theta \\
        B_0 \cdot \rho_1^k + \abs{\rho_1^k - \rho_\inv^k}\cdot\norm{P v_0}_2\cdot\sin\theta
      \end{pmatrix}.\label{eq:sgd-large-lr-k-epoch-upper-bound}
    \end{align}

  We claim the following inequalities hold by our assumptions:
  \begin{subequations}\label{eq:sgd-large-lr-lemma}
      \begin{gather}
      0<\rho_\inv <1 <  \rho_1 \le q_1 + \xi \label{eq:sgd-large-lr-lemma-1}\\
    \rho_\inv^{k_1} \norm{P v_0}_2 \le  \frac{\epsilon}{2}\cdot \beta \label{eq:sgd-large-lr-lemma-2}\\
    \rho_1^{k_1} \norm{P v_0}_2 \sin\theta \le \frac{\epsilon}{2}\cdot \beta, \label{eq:sgd-large-lr-lemma-3}\\
      \xi\cdot \bracket{A_0 + \frac{\epsilon\beta_0}{2}} < (q_1 - 1) \beta_0.\label{eq:sgd-large-lr-lemma-4}
      \end{gather}
  \end{subequations}
  The verification of Eq.~\eqref{eq:sgd-large-lr-lemma} is left later. 
  In the following we prove the conclusions using Eq.~\eqref{eq:sgd-large-lr-lemma}.
  
   We first bound $A_{k_1}$ using Eq.~\eqref{eq:sgd-large-lr-k-epoch-upper-bound} and Eq.~\eqref{eq:sgd-large-lr-lemma}:
  \begin{equation*}
  \begin{split}
      A_{k_1} 
      &\le A_0 \cdot \rho_\inv^{k_1} + \abs{\rho_1^{k_1} - \rho_\inv^{k_1}}\cdot\norm{P v_0}_2\cdot\sin\theta \\
      &\le \norm{P v_0}_2 \cdot \rho_\inv^{k_1} + \rho_1^{k_1} \cdot \norm{P v_0}_2 \cdot \sin\theta\\
      &\le \frac{\epsilon}{2}\cdot \beta + \frac{\epsilon}{2}\cdot \beta \\
      &= \epsilon\cdot \beta,
  \end{split}
  \end{equation*}
  which justifies the first conclusion.
  In addition we can obtain an uniform upper bound for $A_k$ for $k=0,1,\dots,k_1$:
    \begin{align}
      A_k 
      &\le A_0 \cdot \rho_\inv^k + \abs{\rho_1^k - \rho_\inv^k}\cdot\norm{P v_0}_2\cdot\sin\theta \notag \\
      &\le A_0 + \rho_1^{k} \cdot \norm{P v_0}_2 \cdot \sin\theta \notag \\
      &\le A_0 + \frac{\epsilon}{2}\cdot \beta.\label{eq:sgd-large-lr-A-ub}
    \end{align}
  
  Next we bound $B_{k_1}$ using Eq.~\eqref{eq:sgd-large-lr-k-epoch-upper-bound} and Eq.~\eqref{eq:sgd-large-lr-lemma}: 
  \begin{equation*}
  \begin{split}
      B_{k_1} 
      &\le B_0 \cdot \rho_1^{k_1} + \abs{\rho_1^{k_1} - \rho_\inv^{k_1}}\cdot\norm{P v_0}_2\cdot\sin\theta \\
      &\le \norm{P v_0}_2 \cdot \rho_1^{k_1} + \rho_1^{k_1} \cdot \norm{P v_0}_2 \cdot \sin\theta\\
      &\le \norm{P v_0}_2 \cdot \rho_1^{k_1} + \frac{\epsilon}{2}\cdot \beta,
  \end{split}
  \end{equation*}
  which justifies the second conclusion.
  
  We proceed to derive the uniform lower bound for $B_k$ for $k=0,1,\dots,k_1$.
 We do it by induction. 
 For $k=0$, by assumption we have \(B_0 \ge \beta_0.\)
 Suppose \(B_{k-1} \ge \beta_0,\)
 then by Eq.~\eqref{eq:sgd-large-lr-one-epoch-lower-bound}, \eqref{eq:sgd-large-lr-lemma}  and \eqref{eq:sgd-large-lr-A-ub} we have
 \begin{equation*}
 \begin{split}
     B_{k} 
     &\ge q_1 \cdot B_{k-1} - \xi \cdot A_{k-1} \\
     &\ge q_1\cdot \beta_0 - \xi \cdot \bracket{A_0 + \frac{\epsilon}{2}\beta} \\
     &\ge q_1\cdot \beta_0 - \xi \cdot \bracket{A_0 + \frac{\epsilon}{2}\beta_0} \\
     &\ge \beta_0,
 \end{split}
 \end{equation*}
 which justifies the third conclusion.

 \paragraph{Verification of Eq.~\eqref{eq:sgd-large-lr-lemma}}~
 
 From Eq.~\eqref{eq:sgd-large-lr-coeff-matrix-decomp} and Gershgorin circle theorem we have
\begin{equation}\label{eq:sgd-large-lr-eigenvalues-bound}
    \begin{split}
          q_1 - \xi \le  \rho_1 \le q_1 + \xi, \\
  q_\inv - \xi \le \rho_\inv \le q_\inv + \xi. 
    \end{split}
\end{equation}

Moreover, reformatting Eq.~\eqref{eq:sgd-large-lr-coeff-matrix-decomp} as
  \begin{equation*}
    \begin{split}
      \begin{pmatrix}
        q_\inv & \xi \\
        \xi    & q_1
      \end{pmatrix}
      &=
      \begin{pmatrix}
        \cos\theta   & \sin\theta \\
        -\sin \theta & \cos\theta
      \end{pmatrix}
      \begin{pmatrix}
        \rho_\inv & 0        \\
        0           & \rho_1
      \end{pmatrix}
      \begin{pmatrix}
        \cos\theta  & -\sin\theta \\
        \sin \theta & \cos\theta
      \end{pmatrix} \\
      &=
      \begin{pmatrix}
        \rho_\inv \cos^2\theta + \rho_1 \sin^2 \theta & (\rho_1 - \rho_\inv) \cos\theta\sin\theta    \\
        (\rho_1 - \rho_\inv) \cos\theta\sin\theta     & \rho_\inv \sin^2\theta + \rho_1 \cos^2\theta
      \end{pmatrix} \\
      &= 
      \begin{pmatrix}
      \rho_\inv + (\rho_1 - \rho_\inv)\sin^2 \theta & (\rho_1 - \rho_\inv) \cos\theta\sin\theta \\
      (\rho_1 - \rho_\inv) \cos\theta\sin\theta & \rho_1 - (\rho_1 - \rho_\inv)\sin^2 \theta
      \end{pmatrix},
    \end{split}
  \end{equation*}
  we then have 
  \begin{equation}\label{eq:sgd-large-lr-eigen-direction-bound}
  \frac{\xi}{q_1 - q_\inv} = \frac{ (\rho_1 - \rho_\inv) \cos\theta\sin\theta }{(\rho_1 - \rho_\inv)(1-2\sin^2\theta)} =  \half \tan 2\theta.
  \end{equation}
  
  For Eq.~\eqref{eq:sgd-large-lr-lemma-1}, using Eq.~\eqref{eq:sgd-large-lr-eigenvalues-bound} it suffices to show 
 \begin{subequations}
 \begin{gather}
     0 < q_1 - \xi, \label{eq:sgd-large-lr-q1-lb}\\
     q_{-1} + \xi < 1, \label{eq:sgd-large-lr-q1-ub}\\
     1 < q_1 - \xi. \label{eq:sgd-large-lr-qinv-lb}
 \end{gather}
 \end{subequations}
 Notice the definitions of $q_1,\ q_\inv$ and $\xi$ are given in Eq.~\eqref{eq:sgd-large-lr-xi-q}.
 Firstly, Eq.~\eqref{eq:sgd-large-lr-qinv-lb} holds trivially when $\sqrt{n} > 4/3$. 
 Secondly, for Eq.~\eqref{eq:sgd-large-lr-q1-ub}, noticing that $\xi = \frac{4\eta\sqrt{n}\iota}{b} \le \frac{\eta n \iota}{b}$ when $n \ge 16$, it suffices to show
 \begin{align*}
       &\quad \max\set{\abs{1-\frac{2\eta}{b}(\lambda_2 + n\iota)} + \frac{4n\eta\iota}{b}, \abs{1-\frac{2\eta}{b}(\lambda_n - n\iota)} + \frac{4n\eta\iota}{b}} < 1 \\
       \Leftrightarrow &\quad
        \begin{cases}
        \frac{4n\iota}{b}\eta -1< 1-\frac{2(\lambda_2 + n\iota)}{b}\eta < 1 - \frac{4n\iota}{b}\eta \\
        \frac{4n\iota}{b}\eta -1< 1-\frac{2(\lambda_n - n\iota)}{b}\eta < 1 - \frac{4n\iota}{b}\eta
        \end{cases} \\
       \Leftrightarrow &\quad
       \begin{cases}
       \frac{2\lambda_2 - 2n\iota}{b}\eta > 0 \\
       \frac{2\lambda_2 + 6n\iota}{b}\eta < 2 \\
       \frac{2\lambda_n - 6n\iota}{b}\eta > 0 \\
       \frac{2\lambda_n + 2n\iota}{b}\eta < 2
       \end{cases} \\
       \Leftarrow &\quad
       \begin{cases}
       \eta > 0 \\
       \lambda_2 - n\iota > 0 \\
       \lambda_n - 3n\iota > 0 \\
       \eta < \frac{b}{\lambda_2 + 3n\iota} \\
       \eta < \frac{b}{\lambda_n + n\iota}
       \end{cases}\\
       \Leftarrow &\quad
       \begin{cases}
       3n\iota < \lambda_n \\
       0 < \eta < \frac{b}{\lambda_2 + 3n\iota}
       \end{cases}
   \end{align*}
   which are given in assumptions.
 Thirdly, for Eq.~\eqref{eq:sgd-large-lr-qinv-lb} it suffices to show
 \begin{align*}
     &\quad \frac{2\eta\lambda_1}{b}\norm{P_1 \bar{x}_1}_2^2 - 1 - \frac{4\eta \sqrt{n}\iota}{b} > 1 \\
     \Leftarrow & \quad
     \frac{2\lambda_1 (1-4n\iota^2)}{b} \eta - \frac{4 \sqrt{n}\iota}{b}\eta > 2 \qquad (\text{by Lemma~\ref{lem:proj-x1}}) \\
     \Leftarrow & \quad
     \eta > \frac{b}{\lambda_1  (1-4n\iota^2)  - 2 \sqrt{n}\iota}\\
     \Leftarrow & \quad
     \eta > \frac{b}{\lambda_1  - 3 \sqrt{n}\iota}, \qquad (\text{since $n\iota < 1$})
 \end{align*}
   which are given in assumptions.
  
  For Eq.~\eqref{eq:sgd-large-lr-lemma-2}, it suffices to 
  show
%   \begin{equation*}
%       (q_\inv + \xi)^{k_1} \cdot \norm{P v_0}_2 \le 0.5 \epsilon\beta,
%   \end{equation*}
%   for this we only need to 
  set 
  \begin{equation*}
      k_1 = 1 + \frac{\log \frac{0.5\epsilon\beta}{\norm{P v_0}_2 }}{\log \rho_\inv} = \bigO{\log \frac{1}{\epsilon\beta}},
  \end{equation*}
  as given in assumptions.
 
 For Eq.~\eqref{eq:sgd-large-lr-lemma-3}, using Eq.~\eqref{eq:sgd-large-lr-lemma-2} it suffices to show 
 \begin{align*}
     & \quad \sin\theta \le \bracket{\frac{\rho_\inv}{\rho_1}}^{k_1}
     = \frac{\rho_\inv}{\rho_1}\cdot \bracket{ \frac{0.5\epsilon\beta}{\norm{P v_0}_2 } }^{1 - \frac{\log\rho_1}{\log\rho_\inv}} \\
     \Leftarrow & \quad 
     \sin\theta \le \frac{q_\inv-\xi}{q_1+\xi}\cdot \bracket{ \frac{0.5\epsilon\beta}{\norm{P v_0}_2 } }^{1 - \frac{\log(q_1+\xi)}{\log(q_\inv-\xi)}} \\
      \Leftarrow & \quad 
       \xi \le 0.9\bracket{q_1 - q_\inv}\cdot \frac{q_\inv-\xi}{q_1+\xi}\cdot \bracket{ \frac{0.5\epsilon\beta}{\norm{P v_0}_2 } }^{1 - \frac{\log(q_1+\xi)}{\log(q_\inv-\xi)}} \qquad (\text{by Eq.~\eqref{eq:sgd-large-lr-eigen-direction-bound}})\\
      \Leftarrow & \quad 
      \sqrt{n}\iota \le \poly{\epsilon \beta }.  \qquad (\text{by Eq.~\eqref{eq:sgd-large-lr-xi-q}})
 \end{align*}
 
 For Eq.~\eqref{eq:sgd-large-lr-lemma-3}, it suffices to show 
 \begin{align*}
     & \quad \xi \le \frac{(q_1 - 1) \beta_0}{A_0 + 0.5\epsilon\beta_0} \\
     \Leftarrow & \quad 
      \sqrt{n}\iota \le \bigO{1}.\qquad (\text{by Eq.~\eqref{eq:sgd-large-lr-xi-q}})
 \end{align*}
  
\end{proof}

\subsection{Proof of Lemma \ref{lem:sgd-K-epochs-small-lr}}\label{sec:proof_sgd-K-epochs-small-lr}
\begin{proof}[Proof of Lemma \ref{lem:sgd-K-epochs-small-lr}]
Let 
\begin{equation}\label{eq:sgd-small-lr-xi-q}
    \begin{split}
         \xi' &:= \xi(\eta') = \frac{4\eta' \sqrt{n}\iota}{b}, \\
    q_1' &:= q_1(\eta') = \abs{1 - \frac{2\eta'\lambda_1}{b} \norm{P_1 \bar{x}_1}_2^2},\\
     q_\inv' &:= q_\inv (\eta') =
       \max\set{\abs{1-\frac{2\eta'}{b}(\lambda_2 + n\iota)}+ \frac{3n\eta' \iota}{b},\ \  \abs{1-\frac{2\eta'}{b} (\lambda_n - n \iota)}+ \frac{3n \eta' \iota}{b}}.
    \end{split}
\end{equation}
Then for $k_1 <  k \le k_2$, Lemma~\ref{lem:sgd-one-epoch} gives us
  \begin{equation}\label{eq:sgd-small-lr-one-epoch-upper-bound}
    \begin{pmatrix}
      A_{k} \\
      B_{k}
    \end{pmatrix}
    \le
    \begin{pmatrix}
      q_\inv' & \xi' \\
      \xi'    & q_1'
    \end{pmatrix}
    \cdot
    \begin{pmatrix}
      A_{k-1} \\
      B_{k-1}
    \end{pmatrix},
  \end{equation}
  where ``$\le$'' means ``entry-wisely smaller than''. Denote 
\begin{equation}\label{eq:sgd-small-lr-B}
B:= \norm{P v_0}_2 \cdot \rho_1^{k_1} + \frac{\epsilon}{2}\cdot \beta = \poly{\frac{1}{\epsilon\beta}}.
\end{equation}
We claim the following inequalities hold by our assumptions:
  \begin{subequations}\label{eq:sgd-small-lr-lemma}
      \begin{gather}
    0 < q_1' < q_\inv' < 1, \label{eq:sgd-small-lr-lemma-1} \\
      \xi' \cdot \epsilon
      \le q_\inv' -q_1',\label{eq:sgd-small-lr-lemma-2}\\
      \xi'\cdot B \le (1-q_\inv')\cdot \epsilon\cdot \beta.\label{eq:sgd-small-lr-lemma-3}
      \end{gather}
  \end{subequations}
The verification of Eq.~\eqref{eq:sgd-small-lr-lemma} is left later.
In the following we prove the main conclusions in the lemma using Eq.~\eqref{eq:sgd-small-lr-lemma}.
We proceed by induction.
Clearly the conclusions are true for $k=k_1$.
Suppose for $k_1,\dots,k-1$, the conclusions are also true.
Then the induction assumptions give us
\begin{gather}
    A_{k-1} \le \epsilon\cdot \beta, \label{eq:sgd-small-lr-induction-A}\\
    B_{k-1} \le B_{k_1} \le B, \label{eq:sgd-small-lr-induction-B}
\end{gather}
  where the last inequality is due to $B \ge B_{k_1} \ge \beta_0 > \beta$.
  Then by Eq.~\eqref{eq:sgd-small-lr-one-epoch-upper-bound} we have
  \begin{equation*}
      \begin{split}
          B_{k} 
          &\le q_1' \cdot B_{k-1} + \xi'\cdot A_{k-1}\\
          &\le  q_1' \cdot B_{k-1} + \xi'\cdot \epsilon\cdot \beta  \qquad (\text{by Eq.~\eqref{eq:sgd-small-lr-induction-A}}) \\
          &\le q_1' \cdot B_{k-1} + (q_\inv' - q_1') \cdot \beta \qquad (\text{by Eq.~\eqref{eq:sgd-small-lr-lemma-2}})\\
          &\le 
          \begin{cases}
          q_\inv' \cdot B_{k-1}, & B_{k-1} > \beta,\\
          \beta, & B_{k-1} \le \beta.
          \end{cases} \qquad (\text{by Eq.~\eqref{eq:sgd-small-lr-lemma-1}})
      \end{split}
  \end{equation*}
  Also by Eq.~\eqref{eq:sgd-small-lr-one-epoch-upper-bound} we have
  \begin{equation*}
      \begin{split}
          A_{k} 
          &\le  q_\inv' \cdot A_{k-1} + \xi' \cdot B_{k} \\
          &\le q_\inv' \cdot \epsilon\cdot \beta +  \xi' \cdot B  \qquad (\text{by Eq.~\eqref{eq:sgd-small-lr-induction-A} and \eqref{eq:sgd-small-lr-induction-B}})  \\
          &\le q_\inv' \cdot \epsilon\cdot \beta + (1-q_\inv') \cdot  \epsilon \cdot \beta \qquad (\text{by Eq.~\eqref{eq:sgd-small-lr-lemma-3}})\\
          &= \epsilon \cdot \beta.
      \end{split}
  \end{equation*}
  
  \paragraph{Verification of Eq.~\eqref{eq:sgd-small-lr-lemma}}~
  Notice the definitions of $q_1',\ q_\inv'$ and $\xi'$ are given in Eq.~\eqref{eq:sgd-large-lr-xi-q}.
  Recall $q_\inv' < 1$ is already justified by the choice of learning rate $\eta' < \frac{1}{\lambda_2 + 3n\iota}$ (e.g., see Lemma~\ref{lem:sgd-epoch-matrix-spectrum}),
  thus for Eq.~\eqref{eq:sgd-small-lr-lemma-1}, it suffices to show
  \begin{align*}
     & \quad 0 < 1 - \frac{2\eta' \lambda_1}{b}\norm{P_1 \bar{x}_1}_2^2 < 1 - \frac{2\eta'}{b}(\lambda_n - n\iota) + \frac{3n\eta'\iota}{b} \\
     \Leftarrow & \quad
     \begin{cases}
     1 - \frac{2\eta' \lambda_1}{b} > 0\\
     2\lambda_1 (1-4n\iota^2) > 2(\lambda_n - n\iota) + 3n\iota
     \end{cases}\qquad (\text{by Lemma~\ref{lem:proj-x1}})\\
     \Leftarrow & \quad
     \begin{cases}
     \eta' < \frac{b}{2\lambda_1}\\
     \lambda_1  > \lambda_n + 2n\iota
     \end{cases}
  \end{align*}
  which are given in assumptions.
  
  For Eq.~\eqref{eq:sgd-small-lr-lemma-2}, it suffices to show
  \begin{align*}
      & \quad \xi' \le \frac{1}{\epsilon}\cdot \bracket{q_\inv' - q_1'} \\
      \Leftarrow & \quad 
      \sqrt{n} \iota \le \bigO{\frac{1}{\epsilon}},
  \end{align*}
  which is implied by $n\iota \le \poly{\epsilon\beta}$.
  
  For Eq.~\eqref{eq:sgd-small-lr-lemma-3}, it suffices to show
  \begin{align*}
      & \quad \xi' \le \frac{1}{B}\cdot \epsilon\cdot \bracket{1-q_\inv'}\beta \\
      \Leftarrow & \quad 
      \sqrt{n} \iota \le \poly{\epsilon\beta}. \qquad (\text{by Eq.~\eqref{eq:sgd-small-lr-B}})
  \end{align*}
  We complete our proof.
\end{proof}

\subsection{Proof of Lemma \ref{lem:gd-notations}}\label{sec:proof_reloading_notation_GD}
\begin{proof}[Proof of Lemma \ref{lem:gd-notations}]
  For the empirical loss,
  \begin{equation*}
    L_{\Scal}(u) = \frac{1}{n} \bracket{w-w_*}^\top XX^\top \bracket{w-w_*}
    =  \frac{1}{n} u^\top G^\top XX^\top G u
    = \frac{1}{n} u^\top \Gamma u
    =  \frac{1}{n} \sum_{i=1}^n \gamma_i \bracket{u^{(i)}}^2.
  \end{equation*}
  For the population loss,
  \begin{equation*}
    L_{\Dcal}(u) = \mu\norm{w-w_*}_2^2 = \mu\norm{G u}_2^2 = \mu\norm{u}_2^2.
  \end{equation*}

  For the hypothesis class
  \(
  \Hcal_\Scal = \bigl\{ w\in\Rbb^d: P_\perp w= P_\perp w_0 \bigr\},
  \)
  Note $P_\perp G = \diag\bracket{0,\dots,0,1,\dots,1}$.
  Apply $w-w_* = G u$ and notice $w_0- w_* = G u_0$, then we obtain
  \begin{equation*}
    \begin{split}
      \Hcal_\Scal
      &= \bigl\{ u\in \Rbb^d : P_\perp G u = P_\perp G u_0 \bigr\}\\
      &= \bigl\{ u\in\Rbb^d: u^{(i)} = u_0^{(i)},\ \for\ i=n+1,\dots,d  \bigr\}.
    \end{split}
  \end{equation*}

  For the level set, we only need to note that $L_{\Scal}^* = \inf_{u\in\Hcal_\Scal} L_{\Scal}(u) = 0.$

  As for the estimation error, we note that \
  \[
    \inf_{u \in \Ucal} L_{\Dcal}(u) = \mu\sum_{i=n+1}^d \bracket{u_0^{(i)}}^2,
  \]
  thus for $u\in\Ucal$, we have
  \begin{equation*}
    \begin{split}
      \Delta(u)
      &= L(u) - \inf_{u' \in \Ucal} L(u')
      = \mu\norm{u}_2^2 - \mu\sum_{i=n+1}^d \bracket{u_0^{(i)}}^2 \\
      &= \mu\sum_{i=1}^n \bracket{u^{(i)}}^2 + \mu\sum_{i=n+1}^d \bracket{u^{(i)}}^2  - \mu\sum_{i=n+1}^d \bracket{u_0^{(i)}}^2 \\
      &= \mu\sum_{i=1}^n \bracket{u^{(i)}}^2.
    \end{split}
  \end{equation*}
  Now consider $u \in \Ucal$, i.e.,
  \(
 \frac{1}{n} \sum_{i=1}^n \gamma_i \bracket{u^{(i)}}^2 = \alpha,
  \)
  then
  \begin{equation*}
    \Delta_* = \inf_{u \in\Ucal} \Delta(u)
    = \inf_{n\alpha = \sum_{i=1}^n \gamma_i \bracket{u^{(i)}}^2} \mu\sum_{i=1}^n \bracket{u^{(i)}}^2 = \frac{\mu n\alpha}{\gamma_1},
  \end{equation*}
  where the inferior is attended when, e.g., \(u^{(1)} = \pm \sqrt{\frac{n\alpha}{\gamma_1}}\) and \(u^{(2)} = \dots = u^{(n)} = 0.\)
\end{proof}
\section{Details of the Experiments }\label{sec:appendix-exps}

In this section, we describe the details for our experiments.

\subsection{2-D example}
This part corresponds to Section~\ref{sec:2d} and Figure~\ref{fig:2d}.

The two training data points are 
\[
x_1 = (\sqrt{\kappa},\ 0)^\top,\quad
  x_2 = (0,\ 1)^\top,\quad
  \kappa =4,
\]
and the corresponding individual losses are
\[ \ell_1(w) = w^\top x_1 x_1^\top w,\quad 
\ell_2(w) = w^\top x_2 x_2^\top w.\]
Then the training loss is 
\[
L_\Scal (w) = 0.5 (\ell_1(w) + \ell_2(w)).
\]

We initialize the algorithms from $w_0 = (0.6, 0.6)^\top$.
Two kinds of learning rate regime are considered.
In the small learning rate regime, the learning rate is 
\[
\eta_k = 0.1/\kappa, \quad k = 1, \dots, 800.
\]
In the moderate learning rate regime, the learning rate is 
\[
\eta_k = 
\begin{cases}
1.1/\kappa, & k=1,\dots, 100; \\
0.1/\kappa, & k=101,\dots, 800.
\end{cases}
\]
For SGD, the mini-batch size is $1$.

\subsection{Linear regression on synthetic data}
This part corresponds to Section \ref{sec:theory} and Figure \ref{fig:linear-RQ}.

The model is an overparameterized linear model, with $d=10^4$ and $n=100$.
The true parameter $w_*$ is randomly drawn from an $d$-dimensional Gaussian distribution, $\Ncal(0, 0.1\cdot I_{d\times d})$.

We then randomly draw $n=100$ samples from the $d$-dimensional space as described in Section~\ref{sec:theory}, where $\zeta\sim \Ucal([0.5,1])$.

We initialize the algorithms from zero.
We consider two kinds of learning rate regimes. 
The small learning rate scheme is specified by 
\[\eta_k = 0.2, \quad k=1,\dots, 10^4.\] The moderate learning rate scheme is specified by 
\[
\eta_k=
\begin{cases}
1.05, & k=1,\dots, 2\times 10^3;\\
0.1, & k=1+2\times 10^3,\dots, 3\times 10^3.
\end{cases}
\]
For SGD, the mini-batch size is $1$.

\subsection{Neural network on a subset of FashionMNIST}\label{sec:appendix-exps-nn}
This part corresponds to Figure \ref{fig:mnist-RQ} and Figure \ref{fig:mnist-generalization}.

\paragraph{Model}
We use a LeNet-alike convolutional network:
\begin{align*}
    \text{input} &\Rightarrow \text{conv1} \Rightarrow  \text{ReLU} \Rightarrow \text{max\_pool} \Rightarrow \text{conv2} \Rightarrow \text{ReLU}
    \Rightarrow \\
    & \text{max\_pool} \Rightarrow\text{fc1} \Rightarrow \text{ReLU} \Rightarrow \text{fc2} \Rightarrow  \text{ReLU} \Rightarrow\text{linear}\Rightarrow \text{output}.
\end{align*}
The first convolutional layer uses $5\times 5$ kernels with $10$ channels and no padding and the second convolutional layer uses $5\times 5$ kernels with $16$ channels and no padding. The number of hidden units between the two fully connected layers are $60$.
% The total number of parameters of this network are $11,330$.

\paragraph{Dataset}
\url{https://github.com/zalandoresearch/fashion-mnist}

We randomly choose $2,000$ original test data as our training set, and use the $60,000$ original training data as our test set. Thus we have $2,000$ training data and $60,000$ test data.
We scale the image data to $[0,1]$.

\paragraph{Algorithms}
We randomly initialize the algorithms from a Gaussian distribution with zero mean and standard deviation $0.02$.
We consider two kinds of learning rate regimes.
The small learning rate scheme is specified by \[\eta_k = 10^{-3},\quad k=1,\dots, 10^4.\]
The moderate learning rate scheme is specified by 
\[
\eta_k=
\begin{cases}
10^{-2}, & k=1,\dots, 2.5\times 10^3;\\
10^{-3}, & k=1+2.5\times 10^3,\dots,  10^4.
\end{cases}
\]
For SGD, the mini-batch size is $25$.
For both GD and SGD, the weight decay parameter is set as $0.002$.

\paragraph{Relative Rayleigh quotient} We discuss the \emph{relative Rayleigh quotients} calculated in Figure \ref{fig:mnist-RQ}. 
Unlike the linear regression model where the Hessian is a constant, for neural networks the loss function is non-convex. In other words, not only there are multiple local minima, but also the Hessian varies at different points. 
Therefore, a direct comparison in terms of the Rayleigh quotients for the iterates of different algorithms makes little sense.
Instead, we consider the relative Rayleigh quotient, i.e., the Rayleigh quotient normalized by the maximum absolute eigenvalue of the Hessian at that point. 
Mathematically, the relative Rayleigh quotient is defined as 
\[
\text{RRQ}(w) := \frac{\frac{\nabla L(w)^\top}{\norm{\nabla L(w)}_2} \cdot \nabla^2 L(w) \cdot  \frac{\nabla L(w)}{\norm{\nabla L(w)}_2} }{\|\nabla^2 L(w)\|_2},
\]
where $\nabla L(w)/\norm{\nabla L(w)}_2$ is the convergence direction of gradient methods, and $\|\nabla^2 L(w)\|_2$ is the operator norm of the Hessian, i.e., its maximum absolute eigenvalue.
Note that it is computationally hard in practice to project a parameter onto the data manifold, thus we use the vanilla convergence direction instead of the projected one in the above definition of the relative Rayleigh quotient.
Since our goal is to compare the relative Rayleigh quotient between different algorithms, this simplification would not affect our conclusions.
We obtain the maximum absolute eigenvalue of the Hessian by running power method for $5$ iterates.

\begin{figure}
    \centering
    \subfigure[GD with increasing LRs]{\includegraphics[width=0.45\linewidth]{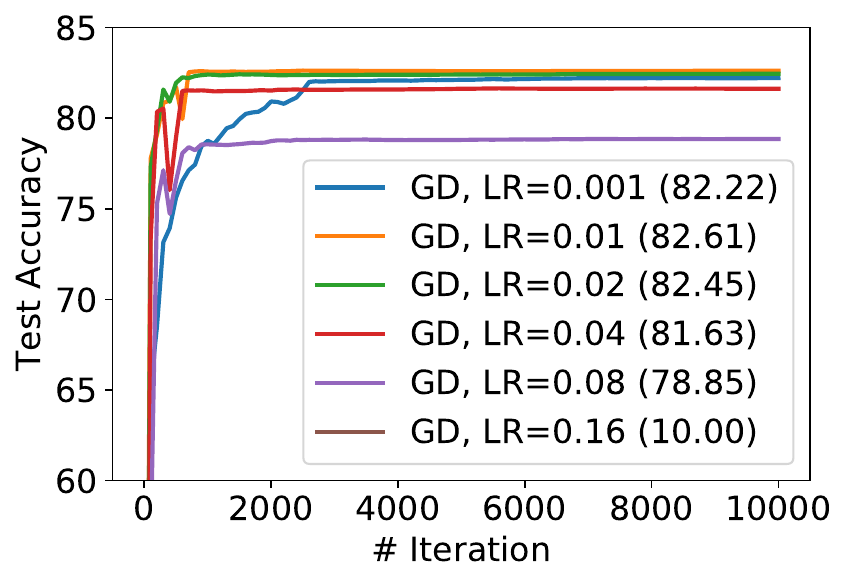}\label{fig:mnist-LR-GD}}
    \subfigure[SGD with increasing LRs]{\includegraphics[width=0.45\linewidth]{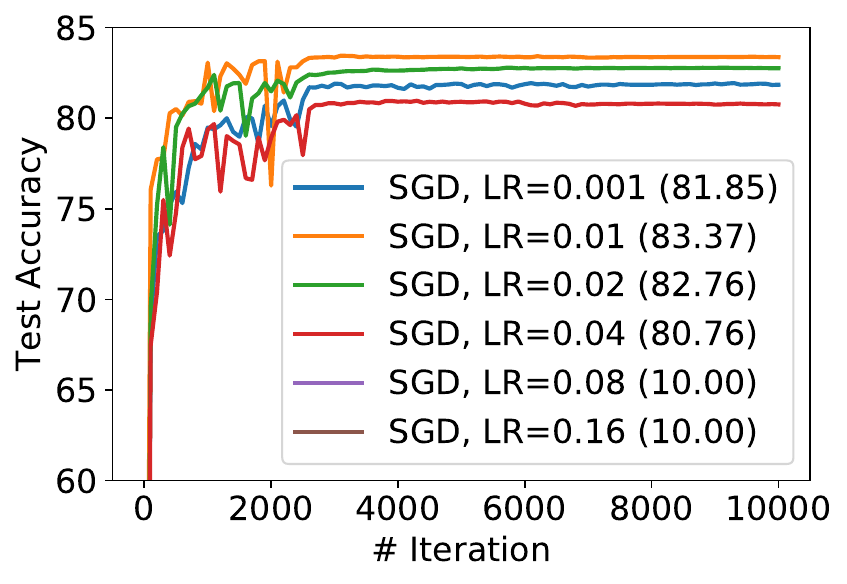}\label{fig:mnist-LR-SGD}}
    \caption{\small \rebuttal{ 
    Test accuracy vs. number of iteration for algorithms with increasing learning rates.
    The model is a 5-layer convolutional neural network, and the dataset is a subset of FashionMNIST dataset. Details are described in Appendix \ref{sec:appendix-exps-nn}. 
    % \textbf{(a)} GD with different learning rates; 
    % \textbf{(b)} SGD with different learning rates.
    }
    }
    \label{fig:mnist-LR}
\end{figure}

\begin{table}
  \caption{Test accuracy $(\%)$ of GD/SGD in different LR on a neural network example}
  \label{tab:mnist-generalization-10runs}
  \centering
  \begin{tabular}{c|c c c c c c c c c c} 
   \hline
   {\small Experiment} & $\#1$ & $\#2$ & $\#3$  & $\#4$ & $\#5$ & $\#6$  & $\#7$ & $\#8$ & $\#9$ & $\#10$ \\ 
   \hline\hline
   \makecell{\footnotesize SGD\\ \footnotesize small LR} & $81.57$ & $81.85$ & $81.71$  & $81.74$ & $82.29$ & $81.38$  & $82.10$ & $82.05$ & $81.95$ & $81.77$ \\ 
   \makecell{\footnotesize GD\\ \footnotesize small LR} & $81.45$ & $81.25$ & $82.22$  & $81.73$ & $81.89$ & $81.32$  & $81.71$ & $81.75$ & $81.43$ & $81.35$ \\
   \makecell{\footnotesize SGD\\ \footnotesize moderate LR} & $82.32$ & $83.37$ & $81.40$  & $82.30$ & $82.58$ & $82.61$  & $81.68$ & $83.24$ & $82.53$ & $82.65$ \\
   \makecell{\footnotesize GD\\ \footnotesize moderate LR} & $81.54$ & $82.62$ & $78.39$  & $81.81$ & $81.86$ & $80.91$  & $81.84$ & $82.15$ & $81.65$ & $81.83$ \\
   \hline
  \end{tabular}
  \end{table}

\paragraph{Additional experiments: GD and SGD with increasing learning rates}
We conduct numerical experiments of training neural networks on a subset of FashionMNIST dataset for SGD and GD with different learning rates $\eta \in\{0.001, 0.01, 0.02, 0.04, 0.08, 0.16\}$. The test accuracy results are displayed in Figure \ref{fig:mnist-LR}.

Several conclusions can be drawn from the plots.
(1) For a learning rate over $\eta = 0.08$, SGD cannot converge (thus only gives about $10\%$ test accuracy).
SGD generalizes best with a moderate learning rate $\eta=0.01$.
(2) GD fails to converge when $\eta=0.16$. Also, as the learning rate increases, the test accuracy of GD first increases then decreases; but even at its peak ($\eta = 0.01$), GD performs worse than SGD with a moderate learning rate.

\paragraph{Paired t-test for Figure \ref{fig:mnist-generalization}}
We also conduct statistical test to show SGD with moderate learning rate is significantly better than the other baselines. 
Recall that the experiments are repeated for $10$ runs, at each run, we first fix a random seed, then run the four algorithms (GD/SGD with small/moderate learning rate) under the same seed. 
In Table \ref{tab:mnist-generalization-10runs}, we report the complete results from the $10$ runs. 
By running a paired t-test at the $5\%$ significance level, we find that SGD with moderate learning rate is significantly better than GD with moderate learning rate ($p\text{-value}= 0.0043$).
Similarly, SGD with moderate learning rate is significantly better than SGD with small learning rate ($p\text{-value}= 0.012$), and is also significantly better than GD with small learning rate ($p\text{-value}= 0.0095$).

% In Figure \ref{fig:mnist-generalization-shaded} we intuitively demonstrate this observations using the shaded regions to present the range of the results in the $10$ runs. From Figure \ref{fig:mnist-generalization-shaded}, we see that in the moderate learning rate regime, SGD outperforms GD in a stable manner, since the best outcome of GD can hardly over the averaged outcome of SGD in the moderate learning rate regime.

% \begin{figure}
%     \centering
%     \includegraphics[width=0.5\linewidth]{ICLR21/figures/mnist_variance.pdf}
%     \caption{\small
%     \rebuttal{The test accuracy of a neural network on a subset of FashionMNIST. The experiments are the same one as shown in Figure~\ref{fig:mnist-generalization}, but here we included the range of the curves in $10$ runs as shaded region.}
%     }
%     \label{fig:mnist-generalization-shaded}
% \end{figure}

\end{document}